\theoremstyle{plain}
\newcommand{\infp}{p_{\text{inf}}}   
\newcommand{\healp}{p_{\text{rec}}}  
\newcommand{\initp}{p_{\text{init}}}  
\newcommand{\E}[1]{\mathbb{E}\left[ #1 \right]} 
\newcommand{\Pp}[1]{\mathbb{P}\left( #1 \right)} 
\newcommand{\Phat}[1]{\hat{\mathbb{P}}\left( #1 \right)} 
\newcommand{\G}{\mathcal{G}} 
\newcommand{\T}{\mathcal{T}} 
\newcommand{\Td}{\mathscr{T}} 
\newcommand{\N}{\mathcal{N}} 
\renewcommand{\t}{^{(t)}} 
\newcommand{\tp}{^{(t+1)}} 
\newcommand{\indic}[1]{\mathbbm{1}\left\{#1\right\}}
\newcommand{\tw}{\text{tw}}
\newcommand{\DP}{\text{DP}}
\renewcommand{\O}{\mathcal{O}}  
\DeclareMathOperator*{\argmax}{arg\,max}
\DeclareMathOperator*{\argmin}{arg\,min}
\DeclareMathOperator*{\arginf}{arg\,inf}
\DeclarePairedDelimiter{\ceil}{\lceil}{\rceil}
\newcommand{\abs}[1]{\left| #1 \right|}
\newcommand{\norm}[1]{\lVert #1 \rVert}
\newcommand{\1}{\mathbbm{1}}
\declaretheorem[name=Theorem,numberwithin=section]{theorem}
\declaretheorem[name=Lemma,numberwithin=section]{lemma}
\declaretheorem[name=Remark,numberwithin=section]{remark}
\declaretheorem[name=Definition,numberwithin=section]{definition}
\declaretheorem[name=Claim,numberwithin=section]{claim}
\declaretheorem[name=Corollary, numberwithin=section]{corollary}
\declaretheorem[name=Assumption, numberwithin=section]{assumption}
\icmltitlerunning{Structure Learning and Vaccination for Epidemic Control}
\begin{document}

\twocolumn[
\icmltitle{Learn to Vaccinate: Combining Structure Learning and Effective Vaccination\\for Epidemic and Outbreak Control}



\icmlsetsymbol{equal}{*}

\begin{icmlauthorlist}
\icmlauthor{Sepehr Elahi}{equal,epfl}
\icmlauthor{Paula Mürmann}{equal,epfl}
\icmlauthor{Patrick Thiran}{epfl}
\end{icmlauthorlist}

\icmlaffiliation{epfl}{EPFL, Switzerland}

\icmlcorrespondingauthor{Sepehr Elahi}{sepehr.elahi@epfl.ch}
\icmlcorrespondingauthor{Paula Mürmann}{paula.murmann@epfl.ch}

\icmlkeywords{Machine Learning, ICML}

\vskip 0.3in
]



\printAffiliationsAndNotice{\icmlEqualContribution}  

\begin{abstract}
The Susceptible-Infected-Susceptible (SIS) model is a widely used model for the spread of information and infectious diseases, particularly non-immunizing ones, on a graph. Given a highly contagious disease, a natural question is how to best vaccinate individuals to minimize the disease's extinction time. While previous works showed that the problem of optimal vaccination is closely linked to the NP-hard \textit{Spectral Radius Minimization} (SRM) problem, they assumed that the graph is known, which is often not the case in practice. In this work, we consider the problem of minimizing the extinction time of an outbreak modeled by an SIS model where the graph on which the disease spreads is unknown and only the infection states of the vertices are observed.
To this end, we split the problem into two: learning the graph and determining effective vaccination strategies. We propose a novel inclusion-exclusion-based learning algorithm and, unlike previous approaches, establish its sample complexity for graph recovery.
We then detail an optimal algorithm for the SRM problem and prove that its running time is polynomial in the number of vertices for graphs with bounded treewidth. This is complemented by an efficient and effective polynomial-time greedy heuristic for any graph. Finally, we present experiments on synthetic and real-world data that numerically validate our learning and vaccination algorithms.
\end{abstract}

\section{Introduction}
Spreading models are essential frameworks for understanding how information, diseases, or behaviors propagate through networks \cite{nowzari_analysis_2016}. These models capture the complex interactions between agents, helping to predict dissemination patterns and devise strategies to control unwanted spread \cite{lokhov_optimal_2017}. Among these models, the \textit{Susceptible-Infected-Susceptible} (SIS) framework is particularly well-suited for representing scenarios where agents can undergo multiple cycles of infection and recovery. While simple in structure, the interactions between neighboring agents in an SIS model can give rise to complex behavior, which makes predicting and controlling its dynamics challenging.

The SIS model is represented using a contact graph where each vertex corresponds to an agent that can be in one of two states: \textit{susceptible} or \textit{infected}. The state of each vertex evolves probabilistically, influenced by the infection statuses of its neighboring vertices, allowing for multiple cycles of infection and recovery. In recent years, SIS models have seen applications in fields such as the modeling of credit and financial markets \cite{chen_credit_2023, barja_assessing_2019}, rumor spreading in social networks \cite{dong_sis_2018}, malware attacks on computer networks \cite{computer_worm_sis}, and epidemiology \cite{nowzari_analysis_2016, grandits_optimal_2019}.

To mitigate the adverse effects of contagions modeled by the SIS framework, it is often necessary to perform strategic interventions. These interventions, in the form of regulatory interventions \cite{chen_credit_2023}, installation of updates and patches \cite{muthukumar_computer_SISRS_2024}, vaccinations or quarantines of vertices (i.e., person or community), are designed to reduce transmission rates, limit the reach of the contagion, and accelerate its extinction. However, devising effective intervention strategies is particularly challenging when the underlying graph is unknown, a common scenario in practice \cite{rosenblatt_immunization_2020}.

We provide a motivating example in the field of epidemiology. While we adopt the epidemiological terminology in this work, our approaches are general and can be readily applied to any other domain that utilizes the SIS framework.

\paragraph{Motivating Example.}
Consider the spread of cholera (\textit{Vibrio cholerae}), a highly contagious waterborne disease that individuals can contract multiple times, making it well-suited to the SIS model \cite{ryan2000cholera}. In many regions, the exact network of human interactions and environmental factors that facilitate cholera transmission is often not fully mapped, compelling public health officials to rely on observed infection data to infer these interaction patterns. By learning this underlying contact network, public health authorities can implement ring vaccination strategies, which involve targeting vaccinations to individuals to effectively reduce transmission rates and extinction time \cite{ali2016potential}.

In this paper, we focus on determining effective vaccination strategies for an SIS epidemic when the underlying graph is unknown and only infection data is observed. Inspired by real-world approaches like ring vaccination, our approach combines network learning with a targeted vaccination strategy to minimize disease extinction time and control contagion spread.

\paragraph{Structure Learning.}

Most research concerning epidemic structure recovery has been focused on \textit{Susceptible-Infected-Removed} (SIR) epidemic models, where each vertex can be infected only once before becoming inert. In SIR models, multiple infection instances, or \textit{cascades}, are necessary to accurately infer the underlying network structure \cite{netrapalli_learning_2012}, with a common approach being maximum likelihood estimation (MLE) \cite{gomez-rodriguez_inferring_2012, Myers_Leskovec_sir_learning, gray_bayesian_2020}.



In contrast, learning the network structure of an SIS model does not require multiple cascades since vertices can experience repeated infections. This allows for the observation of more correlations between neighbors over time. However, the absence of a directional infection flow in SIS models introduces additional noise, complicating the learning.

To the best of our knowledge, the only work on structure learning in SIS models is an MLE-based approach by \citet{barbillon_epidemiologic_2020}.
Their approach can recover both likely edges and infection parameters, but comes without theoretical guarantees and assumes a vertex is infected by only one neighbor at a time, which is a strong assumption in practice.

We do not make this restrictive assumption in our work, and instead propose a novel learning algorithm inspired by Ising model learning from \citet{Bresler_Learn_15} that leverages the fact that stronger correlations occur between infection states of neighboring vertices than those of non-neighbors.

\paragraph{Vaccination Strategies.} To suppress an undesirable SIS process, effective vaccination strategies are essential. Unlike SIR models, SIS processes allow for repeated infections, making them more challenging to eradicate. The extinction time of an SIS process is closely related to the spectral radius of the underlying graph \cite{ahn_mixing_2014, sis_ub_2016}, leading to approaches that employ spectral radius minimization as a vaccination heuristic \cite{van_mieghem_spec_rad_2011, spectral_healthcare_22}. However, these methods assume the complete removal of either the vaccinated vertices or a collection of their incident edges. More importantly, no work has addressed the problem of vaccinations when the graph structure is unknown.

Furthermore, existing work often considers continuous-time SIS models, where vaccinations are applied continuously and can boost vertices' recovery rates \cite{Torres_2017, scaman_suppressing_2016, Drakopoulos_CURE_2014}. In contrast, we adopt a discrete-time SIS model with simultaneous updates and focus on one-time vaccinations that only reduce infection probabilities. This model allows for both sparser observations of the states and fewer interventions, which is often more applicable in practice.

\paragraph{Original Contributions.}
We address the challenge of devising effective vaccination strategies in an ongoing SIS process where the underlying graph is unknown. We approach this problem by decomposing it into two tasks: first learning the network from observed infection states, and then designing effective vaccination strategies based on the inferred graph. Our primary contributions are as follows:
\begin{itemize}
    \item We formally introduce the Vaccinating an Unknown Graph (VUG) problem, which aims to minimize the extinction time of an SIS process on an unknown graph through strategic vaccinations (\Cref{sec:prob_form}).
    \item We propose a novel inclusion-exclusion algorithm for learning the graph of an SIS epidemic, without making the restrictive assumption of the previous approach, and with learning guarantees (\Cref{sec:learning}).
    \item Leveraging the Spectral Radius Minimization problem as a proxy for effective vaccination, we develop an exact dynamic programming-based vaccination strategy that runs in polynomial time for graphs with bounded treewidth. Additionally, we present a simple, yet effective, polynomial-time greedy heuristic (\Cref{sec:vacc}).
    \item We conduct comprehensive experiments on real and synthetic data to evaluate the performance of our learning and vaccination algorithms. Our results demonstrate significant improvements over existing baseline methods in terms of vaccination efficacy (\cref{sec:experiments}).
    \end{itemize}

\section{Problem Formulation}
\label{sec:prob_form}
We first introduce the SIS model, followed by a definition of vaccinations, and finally the problem of vaccinating vertices to minimize the extinction time of the SIS process.

\paragraph{SIS Infection Model.} \label{sec:SIS_model}
We consider a \textit{Susceptible-Infected-Susceptible} (SIS) model of disease propagation, set on a graph $\G = (V,E)$. We also use $V(\G)$ to denote the set of vertices of $\G$ when specificity is needed. The graph $\G$ is composed of $n$ vertices $V = [n] \coloneq \{1, \ldots, n\}$ representing individuals or entities, and a set of undirected edges $E$ that capture the pathways through which the disease can spread. 
Given the edge $(i,j)\in E$, we refer to vertices $i$ and $j$ as \textit{neighbors},
and we use $\N(j) := \{i \mid (i,j) \in E\}$ to denote the neighborhood of vertex $j$. We use $\Delta$ to denote the maximum degree of the graph, i.e., $\Delta = \max_{i \in V} |\N(i)|$. Finally, we denote by $\rho(\mathcal{G})$ the \textit{spectral radius} (i.e., largest eigenvalue) of the adjacency matrix of the graph $\G$.

We consider a discrete-time SIS model that proceeds over rounds (time steps) indexed by $t \in [T] = \{1, \ldots, T\}$, where each vertex $i$ can be in one of two states: \textit{susceptible} or \textit{infected}. The state of vertex $i$ in round $t$ is denoted by $Y_i\t \in \{0,1\}$ (1 for infected and 0 for susceptible). Furthermore, we use $Y_S\t \in \{0,1\}^{\abs{S}}$ to denote the state vector of a vertex set $S \subseteq V$. We denote by $Y\t = \left\{Y_i\t\right\}_{i=1}^n$ the state vector of all vertices in round $t$, and $Z\t \in \{0\} \cup [n]$ to denote the number of infected vertices in round $t$, i.e., $Z\t = \sum_{i=1}^n Y_i\t$.

The SIS model is characterized by three key parameters: the seed probability $p_\text{init}$, the infection probability $\infp$, and the recovery probability $\healp$. In the initial round $t=0$, each vertex is infected independently with probability $p_\text{init}$. As the infection progresses, an infected vertex transmits the disease to its susceptible neighbors with probability $\infp$, while each infected vertex recovers and becomes susceptible again with probability $\healp$.\footnote{Note that neighbors infect each other independently and infected vertices heal independently of one another.} Below is a table summarizing the conditional probabilities of the SIS model:

\begin{table}[h!]
    \centering
    \resizebox{\columnwidth}{!}{%
    \begin{tabular}{ccc}
    \toprule
    \textbf{Event} & \textbf{Description} & \textbf{Probability} \\
    \midrule
    $Y_i^{(t+1)} = 1 \mid Y_i\t = 1$ & $i$ remains infected & $1 - \healp$ \\
    $Y_i^{(t+1)} = 0 \mid Y_i\t = 1$ & $i$ becomes susceptible & $\healp$ \\
    $Y_i^{(t+1)} = 1 \mid Y_i\t = 0$ & $i$ becomes infected & $1 - \prod_{j \in \N(i)} (1 - \infp \cdot Y_j\t)$ \\
    $Y_i^{(t+1)} = 0 \mid Y_i\t = 0$ & $i$ remains susceptible & $\prod_{j \in \N(i)} (1 - \infp \cdot Y_j\t)$ \\
    \bottomrule
    \end{tabular}
    }
\end{table}

We say that an SIS infection is \emph{extinct} at step $t$ if $Z\t = 0$. We denote the \emph{extinction time}, if it exists, by $\tau := \min\{t \in [T] \mid Z\t = 0\}$, and $\tau = \infty$, if otherwise.

\paragraph{Vaccination.}
In each round $t$, the vaccination strategy can vaccinate a subset of vertices $R_t \subseteq V$ to lower their infection probability. Formally, the strategy is given an overall budget $K \in \mathbb{Z}^+$ and can vaccinate at most $K$ vertices in all rounds, i.e., $\sum_{t=1}^T |R_t| \leq K$. If a vertex $i$ is vaccinated in round $t'$, its infection probability is reduced by a factor of $\alpha \in [0,1]$ in all future rounds, i.e., if $i \in R_{t'}$, then $\Pp{Y_i^{(t+1)} = 1 \mid Y_i\t = 0} = 1 - \prod_{j \in \N(i)} (1 - \alpha \infp \cdot Y_j\t)$ for all $t > t'$.

\paragraph{VUG Problem.}
The goal of the Vaccinating an Unknown Graph (VUG) problem is to select vertices to vaccinate in order to minimize the expected extinction time, while only observing the infection states $Y_t$ in each round $t$, i.e., the set of edges $E$ is unknown.

Formally, let $\pi \colon [T] \times \{0,1\}^{nT} \rightarrow V$ denote a vaccination strategy, where $\pi(t, Y_t) = R_t$ specifies the vertices to be vaccinated in round $t$. The objective is then to minimize the expected extinction time, subject to a vaccination budget of $K$: $\min_{\pi}  \mathbb{E}[\tau_\pi]$ such that $\sum_{t=1}^T |R_t| \leq K$, where $\tau_\pi$ represents the extinction time under strategy $\pi$. Note that the VUG problem is online: the agent observes the infection states over time and must decide when and whom to vaccinate, subject to the global budget $K$.

\section{Learning the Graph}
\label{sec:learning}
First, we present our approach for learning the graph from the observed infection states and defer the discussion of our vaccination strategies to the next section.

At a high-level, our learning approach makes use of the idea that the infection state of a vertex $j$ at time $t$ is more likely to be influenced by the infection states of its neighbors at time $t-1$ than by the infection states of other vertices. We now formalize this idea.

\subsection{Direct and Conditional Influence}
Inspired by the work of \citet{Bresler_Learn_15} on structure learning in Ising models, we introduce two measures of how vertex $i$ influences vertex $j$ in round $t$: the \textit{direct influence} (DI), $\mu^{(t)}_{j|i}$, and the \textit{conditional influence} (CI), $\nu_{j|i, y_S}^{(t)}$. 
Intuitively, DI captures the probability that $i$ directly infects $j$, while CI measures how much $i$'s infection state affects $j$'s infection probability while conditioning on a set $S \subseteq V \setminus \{i,j\}$. As we will show, if $S$ separates $i$ from $j$ in the graph, then $i$'s impact on $j$ vanishes.

Formally, we define
\begin{equation*}
  \mu^{(t)}_{j|i} \;\coloneqq\; \mathbb{P}\Bigl(Y_j^{(t+1)}=1 \,\Big|\,
  Y_j^{(t)}=0,\, Y_i^{(t)}=1\Bigr),
\end{equation*}
and, for any $y_S \in \{0,1\}^{|S|}$,
\begin{align*}
    &\nu_{j|i, y_S}^{(t)} \coloneqq\\
  &\,\mathbb{P}\Bigl(Y_j^{(t+1)}=1 \,\Big|\,
  Y_j^{(t)}=0,\, Y_i^{(t)}=1,\, Y^{(t)}_S = y_S\Bigr) \\
  &\,-\, \mathbb{P}\Bigl(Y_j^{(t+1)}=1 \,\Big|\,
  Y_j^{(t)}=0,\, Y_i^{(t)}=0,\, Y^{(t)}_S = y_S\Bigr).
\end{align*}

Using these quantities, we devise an inclusion-exclusion-based algorithm to learn the neighbors of a vertex $j$. More precisely, we first learn a superset of the neighbors of $j$, and then reject all non-neighbors. We explicitly explain both parts below, providing all missing proofs in \cref{sec:app_learning_res}.

\paragraph{Inclusion.}
If $j$ is susceptible at time $t$ and its neighbor $i$ is infected, then $i$ infects $j$ with probability $\infp$ in the next round. Then, as $j$ may have other infected neighbors, we get the following lower bound for $\mu_{j|i}^{(t)}$.
\begin{restatable}{lemma}{DirInfBd} \label{lm:dir_inf_lw_bd}
    If $i,j \in V$ are neighbors, then
    \begin{equation*}
       \mu_{j|i}^{(t)} \geq \infp \ \ \ \forall t \in [T].
    \end{equation*}
\end{restatable}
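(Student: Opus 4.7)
The plan is to unfold the definition of $\mu_{j|i}^{(t)}$ using the SIS transition law, isolate the factor that comes from the infected neighbor $i$, and then bound the remaining product trivially by $1$.

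First, I would invoke the Markov property of the SIS dynamics: conditionally on the full state $Y^{(t)}$, the transition for $j$ depends only on $Y_j^{(t)}$ and on $Y_{\N(j)}^{(t)}$. Hence
\begin{equation*}
    \mu_{j|i}^{(t)} = \mathbb{E}\!\left[1 - \prod_{k \in \N(j)} \bigl(1 - \infp\, Y_k^{(t)}\bigr) \,\Big|\, Y_j^{(t)} = 0,\, Y_i^{(t)} = 1\right],
\end{equation*}
where the expectation is over the joint law of $\{Y_k^{(t)} : k \in \N(j)\setminus\{i\}\}$ conditional on the event $\{Y_j^{(t)} = 0,\, Y_i^{(t)} = 1\}$.

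Next, since $i \in \N(j)$ by hypothesis and $Y_i^{(t)} = 1$ on the conditioning event, the factor for $k=i$ equals $1 - \infp$, so I can peel it out:
\begin{equation*}
    1 - \prod_{k \in \N(j)} \bigl(1 - \infp\, Y_k^{(t)}\bigr) = 1 - (1-\infp)\prod_{k \in \N(j)\setminus\{i\}} \bigl(1 - \infp\, Y_k^{(t)}\bigr).
\end{equation*}
Because each factor $(1-\infp\, Y_k^{(t)})$ lies in $[0,1]$ (as $\infp \in [0,1]$ and $Y_k^{(t)} \in \{0,1\}$), the remaining product is at most $1$, yielding the pointwise bound
\begin{equation*}
    1 - (1-\infp)\prod_{k \in \N(j)\setminus\{i\}} \bigl(1 - \infp\, Y_k^{(t)}\bigr) \;\geq\; 1 - (1-\infp) \;=\; \infp.
\end{equation*}
Taking the conditional expectation preserves this inequality and produces $\mu_{j|i}^{(t)} \geq \infp$, which is the claim.

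There is no real obstacle here; the only point worth being careful about is that conditioning on $Y_j^{(t)}=0,\, Y_i^{(t)}=1$ does not fix the states of the other neighbors, so one must justify interchanging the conditional probability with the expectation of the transition formula. This follows directly from the tower property together with the fact that, given $Y^{(t)}$, the transition for $j$ is deterministic in the stated product form up to independent healing, and the healing plays no role when $Y_j^{(t)}=0$.
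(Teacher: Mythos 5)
Your proposal is correct and follows essentially the same route as the paper: the paper conditions on the configuration $y$ of $\N(j)\setminus\{i\}$ via the law of total probability, writes the transition probability as $1-(1-\infp)^{\norm{y}+1}$, and lower-bounds each term by $\infp$ before summing — which is exactly your pointwise bound followed by taking the conditional expectation. The step you flag as needing care (that conditioning on $Y_j^{(t)}=0,\,Y_i^{(t)}=1$ leaves the other neighbors random) is handled identically in the paper by averaging over all $y$ and using normalization of the conditional distribution.
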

We use this bound to construct a superset of the neighbors of $j$ by including all vertices $i$ that satisfy $\mu^{(t)}_{j|i} \geq \infp$. 

\paragraph{Exclusion.}
To reject non-neighbors from a candidate neighbor set, observe that if $k$ is not a neighbor of $j$, then conditioning on any set $S \subseteq V \setminus \{j,k\}$ that separates $k$ from $j$ forces the conditional influence of $k$ on $j$ to drop to zero. Intuitively, once we fix $j$'s neighbors, $j$ is isolated from $k$. Hence, applying the local Markov property, $j$'s transition probabilities in the next round only depends on its neighboring states, but not on $k$. 

We state and prove the above observation as a lemma below.
\begin{restatable}{lemma}{CondInfMarkov} \label{lm:cond_inf_markov}
    Let $j, k \in V$ be non-neighbors and let $S \subseteq V \setminus \{j,k\}$ be a superset of neighbors of $j$, i.e., $\mathcal{N}(j) \subseteq S$. Then
    \begin{equation*}
        \nu_{j|k, y_S}^{(t)} =0 \ \ \ \forall t\in [T], \ \ y_S \in \{0,1\}^{|S|}.
    \end{equation*}
\end{restatable}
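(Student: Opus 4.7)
The plan is to show that after conditioning on $Y_j^{(t)}=0$ and $Y_S^{(t)}=y_S$, the one-step transition probability for $j$ is a deterministic function of $y_S$ alone, so additionally conditioning on $Y_k^{(t)}$ cannot change it. Both terms in the definition of $\nu_{j|k,y_S}^{(t)}$ then collapse to the same number and their difference is $0$.

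Concretely, I would first apply the tower property to the full state vector $Y^{(t)}$: for any conditioning event $A$ of positive probability,
\begin{equation*}
\mathbb{P}\bigl(Y_j^{(t+1)}=1 \,\big|\, A\bigr)
= \mathbb{E}\bigl[\mathbb{P}\bigl(Y_j^{(t+1)}=1 \,\big|\, Y^{(t)}\bigr) \,\big|\, A\bigr].
\end{equation*}
Next I would invoke the SIS transition rule given in the table of the problem formulation: conditional on $Y_j^{(t)}=0$, the random variable $\mathbb{P}(Y_j^{(t+1)}=1 \mid Y^{(t)})$ equals $1-\prod_{\ell\in\mathcal{N}(j)}(1-\infp Y_\ell^{(t)})$, which depends on $Y^{(t)}$ only through the coordinates indexed by $\mathcal{N}(j)$. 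Since the hypothesis $\mathcal{N}(j)\subseteq S$ ensures these coordinates are pinned down by $y_S$, this quantity is a deterministic function
\begin{equation*}
f(y_S) \;\coloneqq\; 1-\prod_{\ell\in\mathcal{N}(j)}\bigl(1-\infp \,(y_S)_\ell\bigr).
\end{equation*}

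Taking $A$ to be, respectively, $\{Y_j^{(t)}=0,\, Y_k^{(t)}=1,\, Y_S^{(t)}=y_S\}$ and $\{Y_j^{(t)}=0,\, Y_k^{(t)}=0,\, Y_S^{(t)}=y_S\}$, the inner conditional expectation in the tower identity is the constant $f(y_S)$ in both cases, so each term in $\nu_{j|k,y_S}^{(t)}$ equals $f(y_S)$ and the difference is zero. The hypothesis that $j$ and $k$ are non-neighbors is used implicitly here: it guarantees $k\notin\mathcal{N}(j)$, so $f$ really does not depend on $(y_S)$'s $k$-coordinate (which in any case is not part of $S$).

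The only subtlety I foresee is handling degenerate cases where one of the two conditioning events has probability zero, which would make the corresponding conditional probability undefined. I would briefly note that whenever both events have positive probability the argument above applies verbatim, and by convention (or by the lemma being vacuous otherwise) we may set the corresponding $\nu$-term to $0$. Beyond that bookkeeping, no calculation is needed — the proof is really just the local Markov property of the SIS update combined with the observation that $\mathcal{N}(j)\subseteq S$.
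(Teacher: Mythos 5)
Your proposal is correct and follows essentially the same route as the paper's proof: both arguments reduce to the local Markov property of the SIS update, namely that conditional on $Y_j^{(t)}=0$ the transition probability of $j$ is the deterministic function $1-\prod_{\ell\in\mathcal{N}(j)}(1-p_{\text{inf}}\,(y_S)_\ell)$ of the neighbor states, which are fixed by $y_S$ since $\mathcal{N}(j)\subseteq S$, so both terms in $\nu_{j|k,y_S}^{(t)}$ coincide. Your explicit use of the tower property and the remark about zero-probability conditioning events are slightly more careful than the paper's presentation but do not change the argument.
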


Finally, to ensure that using $\nu_{j|i,y_S}^{(t)}$ to exclude non-neighbors does not mistakenly remove a neighbor, we show that a neighbor's CI on vertex $j$ is bounded away from zero.
\begin{restatable}{lemma}{MuNuNeigh} \label{lm:lower_bound_neighbor_learning}
    Let $i,j \in V$ be neighbors and let $S \subseteq V \setminus \{j\}$ be a superset of neighbors of $j$, i.e., $\mathcal{N}(j) \subseteq S$. Then, for any $y_{S \setminus \{i\}} \in \{0,1\}^{\abs{S} -1}$, $t \in [T]$
    \begin{equation*}
        \nu_{j|i, y_{S \setminus \{i\}}}^{(t)} \geq \infp(1- \infp)^{\Delta -1} .
    \end{equation*}
\end{restatable}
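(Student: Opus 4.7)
The plan is to compute $\nu_{j|i, y_{S\setminus\{i\}}}^{(t)}$ explicitly from the SIS transition rule, extract $i$'s contribution as a factor $\infp$, and then lower bound the remaining product term uniformly over $y_{S\setminus\{i\}}$.

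First, I would use the local Markov property of the SIS dynamics: since $\mathcal{N}(j) \subseteq S$, the transition probability of $Y_j^{(t+1)}$ depends on $Y^{(t)}_{S\setminus\{i\}}$ only through its restriction to $\mathcal{N}(j) \setminus \{i\}$. This is the same principle exploited in \Cref{lm:cond_inf_markov}, so I would invoke it (or a direct conditioning argument on the explicit transition table) to collapse the conditioning set. Writing $y_{\mathcal{N}(j)\setminus\{i\}}$ for the restriction of $y_{S\setminus\{i\}}$ to $\mathcal{N}(j)\setminus\{i\}$ and setting
\begin{equation*}
Q \;\coloneqq\; \prod_{k \in \mathcal{N}(j)\setminus\{i\}} \bigl(1 - \infp \cdot y_k\bigr),
\end{equation*}
the SIS formula from the transition table yields
\begin{align*}
\Pp{Y_j^{(t+1)}=1 \mid Y_j^{(t)}=0,\,Y_i^{(t)}=1,\,Y^{(t)}_{S\setminus\{i\}}=y_{S\setminus\{i\}}} &= 1 - (1-\infp)\,Q, \\
\Pp{Y_j^{(t+1)}=1 \mid Y_j^{(t)}=0,\,Y_i^{(t)}=0,\,Y^{(t)}_{S\setminus\{i\}}=y_{S\setminus\{i\}}} &= 1 - Q,
\end{align*}
where the factor $(1-\infp)$ comes from $i$ being infected (and $i \in \mathcal{N}(j)$ since $i,j$ are neighbors).

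Subtracting the two expressions gives the clean identity $\nu_{j|i, y_{S\setminus\{i\}}}^{(t)} = \infp \cdot Q$. From here the lower bound is immediate: each factor $(1-\infp\cdot y_k) \in \{1-\infp,\,1\}$ and is therefore at least $1-\infp$, and $\lvert \mathcal{N}(j)\setminus\{i\}\rvert \le \Delta - 1$. Hence
\begin{equation*}
\nu_{j|i, y_{S\setminus\{i\}}}^{(t)} \;=\; \infp \prod_{k \in \mathcal{N}(j)\setminus\{i\}} (1-\infp \cdot y_k) \;\geq\; \infp(1-\infp)^{\Delta - 1}.
\end{equation*}

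The argument is essentially a direct computation, so there is no serious obstacle. The one subtlety worth being careful about is the consistency of the conditioning event: since $i \in \mathcal{N}(j) \subseteq S$, one must condition on $Y_{S\setminus\{i\}}^{(t)}$ rather than $Y_S^{(t)}$ (as the lemma's statement does) to avoid a contradictory event when pairing it with $Y_i^{(t)} = 1$ versus $Y_i^{(t)} = 0$. Once this is set up correctly, applying the Markov reduction followed by the explicit SIS probabilities makes the factor of $i$ cancel cleanly, leaving $\infp$ times the worst-case product bound.
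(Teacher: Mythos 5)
Your proof is correct and follows essentially the same route as the paper's: both invoke the local Markov property to reduce the conditioning set to $\mathcal{N}(j)\setminus\{i\}$, compute the two conditional probabilities explicitly from the SIS transition rule, and observe that the difference factors as $\infp$ times $(1-\infp)^{\norm{y_{\mathcal{N}(j)\setminus\{i\}}}}$ (your $Q$ is exactly this power, since each $y_k\in\{0,1\}$), which is then bounded below using $\abs{\mathcal{N}(j)\setminus\{i\}}\le\Delta-1$. No gaps.
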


We summarize the above lemmas in the following corollary.
\begin{corollary} \label{cor:Sepehr_corollary}
    Let $i,j \in V$ and $S \subseteq V \setminus \{j\}$ such that $\mathcal{N}(j) \subseteq S$, then for any $t \in [T]$ and any $y_{S \setminus \{i\}} \in \{0,1\}^{\abs{S} -1}$:
    \begin{equation*}
        i \in \mathcal{N}(j) \Leftrightarrow \nu_{j|i,y_{S \setminus\{i\}}}^{(t)} \geq \infp(1-\infp)^{\Delta-1}.
    \end{equation*}
\end{corollary}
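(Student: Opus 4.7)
The plan is to obtain the corollary as an immediate consequence of \Cref{lm:cond_inf_markov} and \Cref{lm:lower_bound_neighbor_learning}, proving the two directions separately.

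For the forward direction ($\Rightarrow$), assume $i \in \mathcal{N}(j)$. Since by hypothesis $\mathcal{N}(j) \subseteq S$, we have $i \in S$, so $S \subseteq V \setminus \{j\}$ already satisfies the hypothesis of \Cref{lm:lower_bound_neighbor_learning} (with that same $S$ as the superset of $\mathcal{N}(j)$). Applying the lemma directly yields $\nu^{(t)}_{j|i, y_{S \setminus \{i\}}} \geq \infp(1-\infp)^{\Delta-1}$ for every $t$ and every $y_{S \setminus \{i\}} \in \{0,1\}^{|S|-1}$, which is what is needed.

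For the backward direction ($\Leftarrow$), I would argue the contrapositive: if $i \notin \mathcal{N}(j)$, then in fact $\nu^{(t)}_{j|i, y_{S \setminus \{i\}}} = 0$, which is strictly less than the threshold $\infp(1-\infp)^{\Delta-1}$. Set $S' := S \setminus \{i\}$. Two small cases arise depending on whether $i \in S$ or not, but both are handled uniformly: in either case $S' = S \setminus \{i\} \subseteq V \setminus \{j, i\}$, and since $i \notin \mathcal{N}(j)$ while $\mathcal{N}(j) \subseteq S$, removing $i$ does not remove any neighbor of $j$, so $\mathcal{N}(j) \subseteq S'$. Hence $S'$ satisfies the hypotheses of \Cref{lm:cond_inf_markov} (with $k = i$), and the lemma gives $\nu^{(t)}_{j|i, y_{S'}} = \nu^{(t)}_{j|i, y_{S \setminus \{i\}}} = 0$.

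There is no real obstacle here; the only bookkeeping subtlety is observing that whether or not the candidate vertex $i$ itself sits inside the conditioning superset $S$, the set $S \setminus \{i\}$ is always a valid separating conditioning set for a non-neighbor $i$, which is exactly the hypothesis required by \Cref{lm:cond_inf_markov}. The two lemmas are thus sharp complements to each other, and the corollary is just a repackaging that isolates the operational test ``compare $\nu^{(t)}_{j|i, y_{S \setminus \{i\}}}$ to $\infp(1-\infp)^{\Delta-1}$'' used later by the learning algorithm.
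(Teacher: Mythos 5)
Your proposal is correct and matches the paper's own (very terse) proof exactly: the forward direction is \Cref{lm:lower_bound_neighbor_learning} and the backward direction is the contrapositive via \Cref{lm:cond_inf_markov}. The extra bookkeeping you supply—that for a non-neighbor $i$ the set $S \setminus \{i\}$ is still a superset of $\mathcal{N}(j)$ contained in $V \setminus \{i,j\}$, so \Cref{lm:cond_inf_markov} applies—is a correct and welcome elaboration of what the paper leaves implicit.
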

\begin{proof}
    $(\Rightarrow)$ By Lemma \ref{lm:lower_bound_neighbor_learning} and $(\Leftarrow)$ by Lemma \ref{lm:cond_inf_markov}.
\end{proof}

Next, we establish how we estimate the direct and conditional influence from observations.

\subsection{Estimation of Direct and Conditional Influence}
For the purposes of theoretical analysis, including learning guarantees and estimation of DI and CI, we follow the approach of \citet{meta_stab_bovenkamp_van_mieghem} and work with a modified ergodic Markov chain, $\{\Psi^{(t)}\}_{t \in [T]}$. This chain has the same transition dynamics as the original SIS process $\{Y^{(t)}\}$, except that upon reaching the all-zero (extinction) state, it immediately restarts from a predefined initial distribution (see \Cref{app:coupling} for details).

Crucially, the original SIS process $\{Y^{(t)}\}$ and the modified chain $\{\Psi^{(t)}\}$ can be coupled up to the extinction time of $\{Y^{(t)}\}$. As a result, any analysis or estimation conducted on $\{\Psi^{(t)}\}$ before its first restart captures the behavior of the original process in the pre-extinction regime, which is precisely the regime relevant for the VUG problem.

\paragraph{Stationary Assumption.}
The direct and conditional influences introduced earlier are time-specific and thus cannot be reliably estimated from a single realization of either SIS process. However, empirical evidence suggests that the original process $\{Y^{(t)}\}$ quickly reaches a meta-stable state, which corresponds to the stationary distribution of the modified process $\{\Psi^{(t)}\}$ \cite{meta_stab_bovenkamp_van_mieghem}.

\begin{assumption} \label{as:stability_assumption}
    The dataset $\mathcal{D} = \{\Psi^{(t)}\}_{t \in T_D}$ used for our estimations is drawn from the unique stationary distribution of the modified ergodic Markov chain $\{\Psi^{(t)}\}$.
\end{assumption}

As shown in \Cref{fig:meta_stability} and further discussed in \Cref{app:exp_metastability}, our simulations demonstrate that the SIS process $\{Y^{(t)}\}$ consistently enters a meta-stable regime within a few hundred rounds across a broad range of parameters. This observation, along with the following remark, justifies both the stationarity assumption and the use of the modified process $\{\Psi^{(t)}\}$ as a proxy for $\{Y^{(t)}\}$.

\begin{remark}
    Using the modified process $\{\Psi^{(t)}\}$ is natural in the context of the VUG problem: the SIS process $\{Y^{(t)}\}$ either converges to a meta-stable state, all the while coupled to process $\{\Psi^{(t)}\}$, or it goes extinct. The VUG problem is most challenging, and vaccinations critical, in the former scenario, where the infection persists and requires strategic vaccination for eradication.
\end{remark}

The use of $\{\Psi^{(t)}\}$ and \cref{as:stability_assumption} will be further justified in \Cref{sec:experiments}, where we do not artificially enforce ergodicity or stationarity, and yet our learning algorithm sufficiently recovers the graph.

\begin{figure}[t]
  \begin{center}
    \includegraphics[]{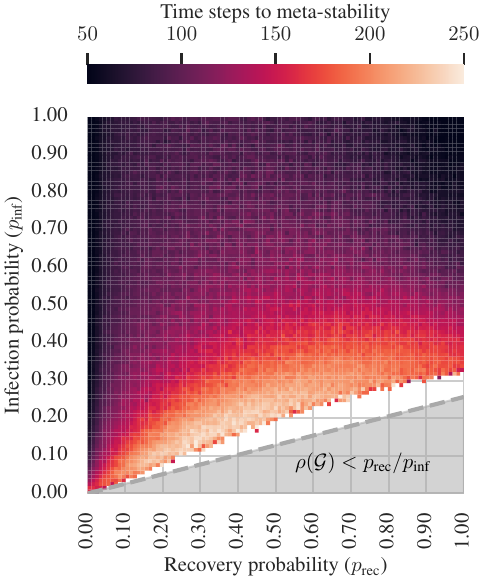}
  \end{center}
  \caption{Heatmap of the average time steps for an SIS process $\{Y^{(t)}\}$ to reach meta-stability on graphs from the augmented \texttt{chn.2009.flu.1.00} dataset ($n=40, \rho(\mathcal{G}) \approx 3.9$) as a function of the infection ($p_{\text{inf}}$) and recovery probability ($p_{\text{rec}}$). Meta-stability is determined using the convergence of the proportion of infected nodes.
  The white region at the bottom with no points indicates that no process reached meta-stability. The shaded gray area indicates the region where theory predicts rapid extinction (\Cref{thm:sis_extinction}). Averaged over $100$ runs.}
  \label{fig:meta_stability}
\end{figure}

While as a consequence of \Cref{as:stability_assumption}, DI and CI become stationary over time (i.e., for $t, t' \in T_D$, $\mu_{j|i}^{(t)} = \mu_{j|i}^{(t')}$ and $\nu_{j|i, y_S}^{(t)} =  \nu_{j|i, y_S}^{(t')}$), they still depend on a one-step transition between the conditioned and observed states. To capture this, we will use $\Psi^{(\cdot)}$ and $\Psi^{(\cdot +1)}$ to denote a state vector in a round and the state vector in the consecutive round, respectively.

Now, given any vectors $\psi_{S'} \in \{0,1\}^{|S'|}$ and $\psi_{S} \in \{0,1\}^{\abs{S}}$, we can define the standard (unbiased) \textit{estimator} of $\Pp{\Psi^{(\cdot+1)}_{S'} = \psi_{S'}| \Psi^{(\cdot)}_{S} = \psi_{S}}$ using the dataset $\mathcal{D}$ as
\begin{align*}
    &\hat{\mathbb{P}}(\Psi^{(\cdot+1)}_{S'} = \psi_{S'}| \Psi^{(\cdot)}_{S} = \psi_{S}) = \\
    &\frac{1}{I(\psi_S)} \sum_{t \in \mathcal{I}(\psi_{S})} \mathbbm{1}\left\{\Psi^{(t+1)}_{S'} = \psi_{S'}\right\},
\end{align*}
where $\mathcal{I}(\psi_S) = \left\{t \in T_D: \Psi^{(t)}_{S} = \psi_{S}\right\}$ denotes the set of time indices in the dataset where the vertices in set $S$ take the values specified in vector $\psi_S$, and $I(\psi_S) = \abs{\mathcal{I}(\psi_S)}$. Subsequently, the estimators of DI and CI, denoted by $\hat{\mu}_{j|i}$ and $\hat{\nu}_{j|i, \psi_S}$, can be computed as described above.

\subsection{Learning Algorithm and Guarantee}
We now present our learning algorithm, called \texttt{SISLearn}, with pseudocode given in \Cref{alg:learn_graph}. \texttt{SISLearn} takes as input the vertex set $V$, the dataset of infection state vectors $\mathcal{D}$, estimator error thresholds $\kappa_\mu$ and $\kappa_\nu$, infection probability $\infp$, and the maximum degree $\Delta$ of the underlying graph. Note that if $\infp$ is not known, it can be estimated from observational data, like done in \citet{Kirkeby_2017} via a mean-field approximation. Additionally, we will show in \Cref{app:exp_misspec} that our algorithm is robust to misspecification of $\infp$ and max degree $\Delta$.

The algorithm then learns the neighbors of each vertex $j$ via inclusion and exclusion. First, in lines \ref{ln:inc_for_start}-\ref{ln:inc_for_end}, a superset of $\N(j)$, called $S$, is learned using \Cref{lm:dir_inf_lw_bd}. Next, in lines \ref{ln:exc_for_start}-\ref{ln:exc_for_end}, \texttt{SISLearn} prunes $S$ by excluding non-neighbors using \Cref{cor:Sepehr_corollary}. Notice that \Cref{cor:Sepehr_corollary} holds for any configuration $\psi_{S \setminus \{i\}}$, hence when applying it in Line \ref{ln:exclusion_cond}, \texttt{SISLearn} selects the most data-rich configuration that maximizes the minimum number of samples available for estimating the two conditional probabilities that constitute $\nu_{j|i,S \setminus \{i\}}$. Formally, for a given vertex pair $(i,j)$ and candidate neighbor set $S \supseteq \N(j)$ for $j$, we define this configuration as $\psi^*_{S}(i,j) \coloneq$
\begin{align*}
    \argmax_{\psi \in \{0,1\}^{\abs{S} - 1}} \min \Big( &I(\Psi_j^{(t)}=0, \Psi_i^{(t)}=1, \Psi_{S \setminus \{i\}}^{(t)}= \psi), \\
    &I(\Psi_j^{(t)}=0, \Psi_i^{(t)}=0, \Psi_{S \setminus \{i\}}^{(t)}= \psi) \Big).
\end{align*}
When clear from context, like in Line \ref{ln:exclusion_cond}, we omit the dependence on $i$ and $j$, and simply write $\psi^*_S$.

Now, we state the learning guarantee for \texttt{SISLearn} that combines a bound on the sample complexity with the correctness of the algorithm.

\begin{restatable}{theorem}{LearnAlgoB}(Learning Guarantee.) \label{lm:learn_algo_b}
    Let $\varepsilon_\mu, \varepsilon_\nu, \zeta$ be positive constants, with $\varepsilon_\nu < \infp(1 - \infp)^{\Delta - 1}/2$ and $\zeta \leq 1$. Suppose \Cref{as:stability_assumption} holds. Further, assume that for all $i \neq j \in V$, all $S \subseteq V \setminus \{i,j\}$,   and all $\psi_i, \psi_j \in \{0,1\}$, and the respective $\psi^*_S(i,j)$, the following inequality holds:
    \begin{align*}
        I&(\Psi_i = \psi_i, \Psi_j = \psi_j, \Psi_S = \psi^*_S(i,j)) 
        \geq \\
        &\frac{2}{(1 - \theta)^2(\min \{\varepsilon_\nu / 2, \varepsilon_\mu\})^2}
        \log \left(\frac{2n^2(1 + 2^{n-1})}{\zeta} \right)
    \end{align*}
    Then, with probability at least $1 - \zeta$, \cref{alg:learn_graph}, when run with thresholds $\kappa_\nu = \varepsilon_\nu$ and $\kappa_\mu = \varepsilon_\mu$, returns the correct edge set of the underlying graph: $E_\mathcal{D} = E$.
    \end{restatable}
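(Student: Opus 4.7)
The plan is to follow the standard two-step template for structure learning proofs: (i) show that every empirical conditional probability touched by the algorithm is close to its population counterpart on a high-probability event, and (ii) on that event, use Lemmas \ref{lm:dir_inf_lw_bd}, \ref{lm:cond_inf_markov}, \ref{lm:lower_bound_neighbor_learning} and Corollary \ref{cor:Sepehr_corollary} to argue that both the inclusion and exclusion loops of \texttt{SISLearn} make the correct decision for every candidate pair $(i,j)$.

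For the concentration step, I would treat each quantity $\hat{\mathbb{P}}(\Psi^{(\cdot+1)}_{S'} = \psi_{S'} \mid \Psi^{(\cdot)}_{S} = \psi_{S})$ as an empirical mean of indicator functions evaluated along the stationary trajectory of $\{\Psi^{(t)}\}$. Because the trajectory is a (stationary, ergodic) Markov chain rather than an i.i.d. sample, I would invoke a mixing-based Hoeffding/Bernstein-type inequality for Markov chains (of Paulin/Lezaud type), which yields a two-sided deviation bound of the form $2\exp(-2(1-\theta)^2 m \varepsilon^2)$ with $\theta$ the mixing coefficient and $m = I(\psi_S)$ the effective number of transitions observed from the conditioning configuration. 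Plugging in the sample-complexity hypothesis with $\varepsilon = \min\{\varepsilon_\nu/2,\varepsilon_\mu\}$ makes each such event fail with probability at most $\zeta/\bigl(n^2(1+2^{n-1})\bigr)$. A triangle-inequality argument then transfers this accuracy to the DI and CI estimators: $|\hat\mu_{j\mid i}-\mu_{j\mid i}|\leq \varepsilon_\mu$, and since $\hat\nu_{j\mid i,\psi_S^*}$ is a difference of two conditional probability estimates, $|\hat\nu_{j\mid i,\psi_S^*}-\nu_{j\mid i,\psi_S^*}|\leq \varepsilon_\nu$. Union-bounding over the $\binom{n}{2}$ pairs for DI and the $\binom{n}{2}\cdot 2^{n-1}$ (pair, configuration) combinations for CI (with a factor $2$ from two-sided Hoeffding) gives the stated aggregate failure probability $\zeta$, matching the $2n^2(1+2^{n-1})/\zeta$ in the log.

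Conditioning on this good event, I would then verify correctness of the two phases of \texttt{SISLearn}. For inclusion, Lemma \ref{lm:dir_inf_lw_bd} guarantees $\mu_{j\mid i}\geq \infp$ whenever $i\in \N(j)$, so with threshold $\kappa_\mu = \varepsilon_\mu$ every true neighbor passes the test and enters $S$; non-neighbors may also enter, but this only matters in that $S\supseteq\N(j)$, which is exactly the invariant needed downstream. For exclusion, I would use Corollary \ref{cor:Sepehr_corollary}: on the good event, if $i\in\N(j)$ then $\hat\nu_{j\mid i,\psi_S^*}\geq \infp(1-\infp)^{\Delta-1}-\varepsilon_\nu$, while if $i\notin\N(j)$ then $\hat\nu_{j\mid i,\psi_S^*}\leq \varepsilon_\nu$; the hypothesis $\varepsilon_\nu<\infp(1-\infp)^{\Delta-1}/2$ precisely separates these two regimes, so the threshold test cleanly removes every non-neighbor while preserving every neighbor. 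It is important to order the exclusion attempts so that the invariant $S\supseteq\N(j)$ is maintained throughout (since Lemma \ref{lm:cond_inf_markov} requires the conditioning set to contain $\N(j)$); since we only ever remove vertices that the good event certifies to be non-neighbors, the invariant is preserved inductively. Combining both phases gives $E_\mathcal{D}=E$.

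The main obstacle, and the technical heart of the proof, is the concentration step. Unlike the i.i.d. Ising-learning analysis of \citet{Bresler_Learn_15}, our indicators along $\{\Psi^{(t)}\}$ are dependent even after conditioning on $\Psi^{(t)}_S=\psi_S$ (because the full state $\Psi^{(t)}$ influences $\Psi^{(t+1)}_{S'}$), and the conditioning event $\{\Psi^{(t)}_S=\psi_S\}$ itself induces a non-trivial sub-sampling pattern on a Markov chain. The $(1-\theta)^2$ factor in the sample complexity encodes this cost, and justifying it rigorously requires care in selecting the right Markov-chain concentration inequality and in aligning its assumptions (stationarity, reversibility or pseudo-spectral gap) with \Cref{as:stability_assumption} and the structure of $\{\Psi^{(t)}\}$. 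A secondary subtlety is making the $\psi_S^*(i,j)$ choice explicit in the union bound, since the configuration maximizing effective sample size is data-dependent; this is handled by first taking a union over all configurations and only afterwards specializing to $\psi_S^*$.
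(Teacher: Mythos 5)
Your proposal follows essentially the same route as the paper's proof: a Markov-chain concentration inequality with contraction coefficient $\theta$ applied to the conditional-transition estimators, a triangle inequality to transfer accuracy to $\hat{\nu}$ (splitting $\varepsilon_\nu$ across its two constituent estimates), a union bound over all pairs and conditioning sets yielding the $2n^2(1+2^{n-1})/\zeta$ term, and then the deterministic inclusion/exclusion correctness argument on the good event using Lemmas \ref{lm:dir_inf_lw_bd}, \ref{lm:cond_inf_markov} and Corollary \ref{cor:Sepehr_corollary} with the gap condition $\varepsilon_\nu < \infp(1-\infp)^{\Delta-1}/2$. The only differences are cosmetic (the paper invokes Kontorovich's contraction-coefficient inequality rather than a Paulin/Lezaud-type bound, and it unions over conditioning sets rather than configurations), so the argument is correct as proposed.
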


\begin{remark}
    Note that $\theta$, called the Markov contraction coefficient, is a constant determined by the graph structure and the transition probabilities of the Markov chain $\{\Psi^{(t)}\}$. It satisfies $0 \leq \theta < 1$ and remains independent of the number of samples $T$ drawn from the process.
\end{remark}

\begin{algorithm}[t!]
    \caption{\texttt{SISLearn}: Learn SIS Graph from Data}
    \label{alg:learn_graph}
    \begin{algorithmic}[1] 
        \State{\textbf{Input:} Vertex set $V$, data $\mathcal{D}$, thresholds $\kappa_\mu, \kappa_\nu$, infection probability $\infp$, max degree $\Delta$}
        \State{\textbf{Output:} Learned graph edges $E_\mathcal{D}$}
        \State{$E_\mathcal{D} \gets \emptyset$}
        \For{$j \in V$} \label{ln:outer_for_start}
            \State $S \gets \emptyset$
            \For{$i \in V$}  \label{ln:inc_for_start} \Comment{Inclusion}
                \If{$\hat{\mu}_{j|i} \geq \infp - \kappa_\mu$}
                    \State $S \gets S \cup \{i\}$
                \EndIf
            \EndFor \label{ln:inc_for_end} 
            \For{$i \in S$} \Comment{Exclusion} \label{ln:exc_for_start}
                \If{$\hat{\nu}_{j|i,\psi^*_{S}} < \infp(1-\infp)^{\Delta-1}- \kappa_\nu$} \label{ln:exclusion_cond}
                    \State $S \gets S \setminus \{i\}$
                \EndIf
            \EndFor \label{ln:exc_for_end} 
            \State{$E_\mathcal{D} \gets E_\mathcal{D} \cup \{\{j,i\}: i \in S \}$} 
        \EndFor \label{ln:outer_for_end}
        \State \textbf{return} $E_\mathcal{D}$
        \end{algorithmic}
\end{algorithm} 

\section{Vaccination Strategies}
\label{sec:vacc}
Assuming we have learned the graph using \texttt{SISLearn}, we present two vaccination strategies inspired by the surrogate problem of \textit{Spectral Radius Minimization} (SRM). We first establish a theoretical result relating the extinction time of the SIS model to the spectral radius of the graph and the SRM problem.

\subsection{Extinction Time}
Below, we formally state (and provide a proof in \cref{app:ext_time}) a sufficient condition for logarithmic extinction time that relates extinction time to the spectral radius of the graph.
This result was first empirically established by \citet{wang_epidemic_2003}, and proven for this model by \citet{sis_ub_2016}.

\begin{restatable}{theorem} {ThmSIS}
\label{thm:sis_extinction}
    The expected extinction time $\mathbb{E}[\tau]$ is upper bounded by $\mathcal{O}( \log n)$ if $\rho(\mathcal{G}) < \healp / \infp$.
\end{restatable}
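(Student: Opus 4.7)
The plan is to linearize the one-step infection dynamics, relate the resulting matrix iteration to the spectral radius $\rho(\mathcal{G})$, and then convert geometric decay of marginal infection probabilities into an $\mathcal{O}(\log n)$ bound on the expected extinction time.

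First, I would derive a linear dominating recursion for $p_i\t := \E{Y_i\t}$. Using the SIS transition table together with the elementary inequality $1 - \prod_{j \in \N(i)}(1 - \infp Y_j\t) \leq \infp \sum_{j \in \N(i)} Y_j\t$, one gets
\begin{equation*}
    \E{Y_i\tp \mid Y\t} \leq (1-\healp)\, Y_i\t + \infp \sum_{j \in \N(i)} Y_j\t.
\end{equation*}
Taking expectations and stacking into a vector yields $p\tp \leq M\, p\t$ componentwise, where $M = (1-\healp) I + \infp A$ and $A$ is the adjacency matrix of $\mathcal{G}$. Since $M$ is a nonnegative polynomial in $A$, its spectral radius is $\rho(M) = (1-\healp) + \infp\, \rho(\mathcal{G})$, which is strictly less than $1$ precisely under the hypothesis $\rho(\mathcal{G}) < \healp/\infp$.

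Next I would iterate the recursion and pass to a sub-geometric decay of the total infection mass. Writing $\mathbf{1}$ for the all-ones vector,
\begin{equation*}
    \E{Z\t} = \mathbf{1}^\top p\t \leq \mathbf{1}^\top M^t p^{(0)} \leq \|p^{(0)}\|_1 \cdot C\, \rho(M)^t,
\end{equation*}
where $C$ depends only on the Perron eigenvector of $M$ (equivalently of $A$) and is absorbed into the $\mathcal{O}(\cdot)$. Because $\|p^{(0)}\|_1 \leq n$, we obtain $\E{Z\t} \leq C n\, \rho(M)^t$.

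Finally, since $Z\t$ is integer-valued, Markov's inequality gives $\Pp{\tau > t} = \Pp{Z\t \geq 1} \leq \E{Z\t} \leq C n\, \rho(M)^t$. Combining with the trivial bound $\Pp{\tau > t} \leq 1$ and summing,
\begin{equation*}
    \E{\tau} = \sum_{t \geq 0} \Pp{\tau > t} \leq t^\star + \sum_{t \geq t^\star} C n\, \rho(M)^t,
\end{equation*}
and choosing $t^\star = \lceil \log(Cn) / \log(1/\rho(M)) \rceil$ makes the tail geometric and $\mathcal{O}(1)$, so $\E{\tau} = \mathcal{O}(\log n)$ with the constant depending on $\infp, \healp, \rho(\mathcal{G})$ through $\log(1/\rho(M))$.

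The main technical nuisance, rather than any real obstacle, is justifying the componentwise matrix inequality $p\tp \leq M p\t$ carefully (the product expansion is an inequality, not an equality, and requires nonnegativity of the involved terms), and converting $\mathbf{1}^\top M^t p^{(0)}$ into $C\, \rho(M)^t$ uniformly in $n$. The cleanest route is either to use the Perron--Frobenius eigenvector of $M$ to pass from the $\ell_\infty$ norm to $\ell_1$, or to note that $\|M^t\|_{1\to 1}$ grows like $\rho(M)^t$ up to a factor polynomial in $n$ which is absorbed into the logarithm. Everything else reduces to the standard geometric-series calculation above.
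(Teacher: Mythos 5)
Your proposal is correct and follows essentially the same route as the paper: linearize the infection update via the union bound to get $p^{(t+1)} \leq M p^{(t)}$ with $M = (1-\healp)I + \infp A$, observe $\rho(M) < 1$ under the hypothesis, and convert the geometric decay of $\E{Z\t}$ into an $\mathcal{O}(\log n)$ bound on $\E{\tau}$ via Markov's inequality and a truncated tail sum. The one step you flag as a nuisance—bounding $\mathbf{1}^\top M^t p^{(0)}$ by $Cn\,\rho(M)^t$—is handled in the paper by Cauchy--Schwarz ($\norm{\cdot}_1 \leq \sqrt{n}\norm{\cdot}_2$) together with the symmetry of $M$, which gives $\norm{M^t}_2 = \rho(M)^t$ exactly and so $C=1$.
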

\Cref{thm:sis_extinction} establishes that rapid extinction hinges on the spectral radius of the graph, which leads to the natural approach of reducing the graph's spectral radius through vaccination or quarantining, modeled as the removal of vertices or edges. \cite{van_mieghem_spec_rad_2011, walk_alg_15, spectral_healthcare_22}. Building on this established approach, we focus on the surrogate SRM problem through vertex removals. It is important to note that our setting remains general: vaccinating a node does \emph{not} remove it entirely unless $\alpha = 0$. As will be demonstrated in \Cref{sec:experiments}, our vaccination strategies are highly effective in reducing the extinction time, further justifying the use of the spectral radius as a proxy for the extinction time.

\paragraph{Spectral Radius Minimization Problem.}
Given a graph $\G = (V, E)$ with $n$ vertices and a budget $K$, the SRM problem aims to find a subset of at most $K$ vertices $R \subseteq V$ whose removal minimizes the spectral radius. Formally, the SRM problem solves
\begin{equation}
    R^* = \argmin_{R \subseteq V, |R| \leq K} \rho(\G[V \setminus R]). \label{eq:opt_vac}
\end{equation}
While the SRM problem is known to be NP-hard in general \cite{van_mieghem_spec_rad_2011}, in what follows we present a polynomial-in-$n$ optimal algorithm for graphs of bounded treewidth and a fast greedy heuristic for general graphs. In \Cref{app:tree_vac}, we provide an optimal algorithm for trees\footnote{Note that the SRM problem is NP-hard for general graphs and thus there exists no polynomial-time exact algorithm, unless $\text{P}=\text{NP}$.}.

\subsection{Vaccination Strategy for Graphs}
\label{sec:vac_strat_dp}
Here, we propose a vaccination strategy that is an optimal algorithm for the SRM problem. Our approach consists of a dynamic programming (DP) procedure performed on a tree decomposition of the graph, allowing us to prove polynomial-in-$n$ time complexity for graphs of bounded treewidth. For readers unfamiliar with tree decomposition techniques, we detail a simpler optimal algorithm for trees in \Cref{app:tree_vac}, which shares the core ideas of our main algorithm. Additionally, we provide a primer on tree decompositions in \Cref{app:tree_decomp} to facilitate understanding.

\paragraph{Algorithm Overview.}
Our algorithm, pseudocode given in \cref{alg:dp_alg}, employs a binary search strategy to identify the smallest possible spectral radius $\lambda_\varepsilon$ achievable by removing at most $K$ vertices from a graph $\G$, up to a precision of $\varepsilon > 0$. The binary search is performed on $[0, \Delta]$, as $\Delta$, the max degree of $\G$, is a trivial upper bound for spectral radius \cite{van_mieghem_spec_rad_2011}.

For each candidate value of $\lambda$ during the binary search, we invoke a feasibility check algorithm, pseudocode given in \Cref{alg:dp_feas_check}, to determine whether there exists a set of at most $K$ vertices whose removal ensures that the spectral radius of the resulting graph does not exceed $\lambda$. Below, we detail the feasibility check algorithm.

Given a graph $\G = (V,E)$ with treewidth $\tw(\G) \leq \omega$, we first decompose the graph into a \textit{nice tree decomposition} $\Td = (\T, \{X_t\}_{t \in V(\T)})$, where $\T$ is a tree whose every node $t$ is assigned a bag $X_t \subseteq V$. To minimize ambiguity, we use vertex (vertices) when referring to elements of the graph $\G$ and node (nodes) when referring to elements of the tree decomposition $\T$.

We then perform a bottom-up dynamic programming algorithm on the tree decomposition to find the optimal vaccination set for each node $t$ of the tree. First, define $V_t$ as the union of bags in the subtree $\T$ rooted at $t$, including $X_t$. Then, for every node $t \in V(\T)$, every $S \subseteq X_t$, and all $c \in \{0, \ldots, K\}$ define $\DP[t, S, c]$ as
\begin{align*}
    &\DP[t, S, c] =\\
     &\Big\{R \subseteq V_t \colon \abs{R} = c, S \cap R = \emptyset, \rho(G[V_t \setminus R]) \leq \lambda\Big\}.
 \end{align*}
In words, $\DP[t,S,c]$ stores the set of feasible removal sets of size $c$ from $V_t$ such that $S$ is preserved and the spectral radius of the induced subgraph is at most $\lambda$. Notice that if $r \in V(\T)$ is the root of $\T$, then $R \in \DP[r, \emptyset, c]$ is a feasible removal set with $c$ elements that satisfies $\rho(\G[V(\G) \setminus R]) \leq \lambda$. If no such set exists (i.e., $\DP[r, \emptyset, c] = \emptyset$), then there exists no feasible removal set of size at most $c$.

In what follows, we describe how \Cref{alg:dp_feas_check} populates $\DP[t,\cdot, \cdot]$, initialized to $\emptyset$, for each node $t$ of the tree in post-order, depending on the node's type (refer to \Cref{app:tree_decomp} for types).

\textbf{Leaf node}: If $t$ is a leaf node, then $X_t = \emptyset$ and $\DP[t, \emptyset, 0] = \{\emptyset\}$.

\textbf{Introduce node}: If $t$ is an introduce node with child node $t'$, then, $X_t = X_{t'} \cup \{v\}$ for some $v \in V(\G)$. For every $S \subseteq X_t$ and for all $1 \leq c \leq K$, if $v \not\in S$, then we set $\DP[t, S, c] = \{R' \cup \{v\} \colon R' \in \DP[t', S, c-1]\}$.
 
If $v \in S$, then we need to verify if $\rho(G[V_t \setminus R']) \leq \lambda$ for each $R' \in \DP[t',S \setminus \{v\}, c]$. So, $\DP[t, S, c] = \{R' \in \DP[t',S \setminus \{v\}, c] \colon \rho(G[V_t \setminus R']) \leq \lambda\}$.

\textbf{Forget node}: If $t$ is a forget node with child node $t'$, then, $X_t = X_{t'} \setminus \{w\}$ for some $w \in V(\G)$ and thus $V_t = V_{t'}$. Then, for every $S \subseteq X_{t}$ and $0 \leq c \leq K$, set $\DP[t,S,c] = \DP[t',S,c] \cup \DP[t', S \cup \{w\}, c]$.

\textbf{Join node}: 
If $t$ is a join node with two children $t_1$ and $t_2$, then $X_t = X_{t_1} = X_{t_2}$. Then, for every $S \subseteq X_t$, $0 \leq c_1,c_2 \leq K$, and $R_1 \in \DP[t_1, S, c_1], R_2 \in \DP[t_2, S, c_2]$, define $R = R_1 \cup R_2$. Then, iteratively set $\DP[t,S,c] = \DP[t,S,c] \cup \{R\}$ if $c = \abs{R} \leq K$ and $\rho(G[V_t \setminus R]) \leq \lambda$.

\paragraph{Computational Complexity.}
We now state (and prove in \cref{app:dp_time}) the time complexity of \cref{alg:dp_alg}, showing that it is polynomial in the number of vertices $n$ for graphs of bounded treewidth $\omega$.

\begin{restatable}{theorem}{DPTimeComplexity}\label{thm:dp_time_complexity}
    Given an input graph $\G$ with treewidth $\tw(\G) \leq \omega$, budget $K$, and precision $\varepsilon$, \cref{alg:dp_alg} has a worst-case time complexity of $\O\left(n^{\O(1)} K^{\O(1)} 2^{\O(\omega)} \log(\Delta/\varepsilon)\right)$.
\end{restatable}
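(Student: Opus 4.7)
The plan is to decompose the runtime into two contributions: (i) the outer binary search over candidate spectral-radius thresholds, and (ii) the cost of a single feasibility check performed by \cref{alg:dp_feas_check}. The binary search ranges over $[0, \Delta]$ at precision $\varepsilon$, so it performs $\mathcal{O}(\log(\Delta/\varepsilon))$ iterations; hence it suffices to show that one feasibility check runs in time $n^{\mathcal{O}(1)} K^{\mathcal{O}(1)} 2^{\mathcal{O}(\omega)}$.

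For the feasibility check, I would first invoke a standard routine to compute a nice tree decomposition $\Td = (\T, \{X_t\}_{t \in V(\T)})$ of $\G$ with width $\omega$ and $\mathcal{O}(\omega n)$ nodes in time $2^{\mathcal{O}(\omega)} n$ (see \cref{app:tree_decomp}). The DP then sweeps $\Td$ in post-order. At each node $t$, the algorithm enumerates the at most $2^{|X_t|} \leq 2^{\omega+1}$ subsets $S \subseteq X_t$ and the $K+1$ admissible values of $c \in \{0,\dots,K\}$. The per-triple work splits by node type: leaf and forget nodes contribute $\mathcal{O}(1)$; an introduce node requires, per candidate from its child table, one spectral-radius check on an induced subgraph of at most $n$ vertices, performed in $\mathcal{O}(n^3)$ by standard eigenvalue algorithms; a join node iterates over the $\mathcal{O}(K^2)$ compatible pairs $(c_1, c_2)$ with $c_1 + c_2 = c$ and, for each combined candidate, performs one further spectral-radius check.

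Multiplying the per-triple cost, which I will bound by $\mathrm{poly}(n, K)$, by the $\mathcal{O}(n \omega) \cdot 2^{\omega+1} \cdot (K+1)$ triples gives a feasibility-check cost of $n^{\mathcal{O}(1)} K^{\mathcal{O}(1)} 2^{\mathcal{O}(\omega)}$. Combining with the $\mathcal{O}(\log(\Delta/\varepsilon))$ iterations of the outer binary search yields the claimed bound.

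The main obstacle is the bookkeeping at join nodes: in principle $|\DP[t, S, c]|$ could be as large as $\binom{n}{c}$, which is not polynomial in $(n, K)$ uniformly in $c$. The careful step is to argue that, because the only global quantity we need to certify along the sweep is $\rho(\G[V_t \setminus R]) \leq \lambda$, it suffices to maintain a polynomially-bounded pool of representative removal sets per triple $(t, S, c)$ (one per distinct spectral-radius-relevant ``signature'' induced on the interface $X_t$), which keeps both the table size and the join cost within $\mathrm{poly}(n, K)$. Once this representative-pool bound is established, the remaining arithmetic of multiplying node count, bag-subset count, budget values, and per-triple work is routine and produces the stated $n^{\mathcal{O}(1)} K^{\mathcal{O}(1)} 2^{\mathcal{O}(\omega)} \log(\Delta/\varepsilon)$ complexity.
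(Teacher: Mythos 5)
Your overall skeleton matches the paper's proof: $\O(\log(\Delta/\varepsilon))$ binary-search iterations, $\O(\omega n)$ nodes in the nice tree decomposition (\Cref{lem:nice_tree_decomp}), $2^{\omega+1}$ bag subsets, $K+1$ budget values, an $\O(K^2)$ pairing cost at join nodes, and a polynomial-time spectral-radius check per candidate (the paper charges $\O(n^2)$ via Lanczos where you charge $\O(n^3)$; either fits the stated bound). You also correctly isolate the one non-routine point: a literal reading of $\DP[t,S,c]$ as the set of \emph{all} feasible removal sets would let $\abs{\DP[t,S,c]}$ reach $\binom{n}{c}$ and destroy polynomiality at the join nodes.

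However, your resolution of that point is a genuine gap. You propose to keep one representative per ``distinct spectral-radius-relevant signature induced on the interface $X_t$,'' but you neither define this signature nor show that the number of distinct signatures is polynomial in $(n,K)$ --- and it is far from clear that any polynomial-size interface summary for the spectral radius exists (the paper itself notes that the decision version of SRM does not appear to be MSO-expressible, which is symptomatic of the absence of a small interface invariant). The paper instead takes a cruder route: since every $R \in \DP[t,S,c]$ has the same trace on the bag ($R \cap X_t = X_t \setminus S$), it retains only a \emph{single} representative removal set per triple $(t,S,c)$, keeping multiple candidates only along the stretch of the post-order below the nearest join node; this caps the table at $\O(\omega n\, 2^{\omega+1} K)$ entries, makes the join cost $\O(K^2)$ per $(S,c)$ pair, and yields the explicit bound $\O\left(n^3 K^3 \omega 2^{\omega+1}\log(\Delta/\varepsilon)\right)$. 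To complete your argument you should either adopt this one-representative convention (and address why it preserves completeness of the feasibility check) or actually supply the definition and polynomial cardinality bound for your signature classes; as written, the step on which the entire polynomial bound rests is asserted rather than proved.
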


The time complexity of our algorithm aligns with that of other NP-hard problems, such as the maximum independent set and Hamiltonian path, which are solvable in polynomial-in-$n$ time on graphs with bounded treewidth using DP \cite{cygan_single_exp_2013}. It should be noted that despite achieving similar complexity, the decision version of the SRM problem cannot, to the best of our knowledge, be expressed in monadic second-order logic, rendering Courcelle's theorem\footnote{Informally, Courcelle's theorem states that any graph decision problem expressible in monadic second-order logic can be decided in linear time for graphs of bounded treewidth. We refer the reader to Chapter 7.4 of \citet{param_algs} for more details.} inapplicable \cite{courcelle_monadic_1990}. Consequently, it was previously unknown whether a polynomial-in-$n$ time algorithm for SRM on bounded treewidth graphs existed.

\begin{algorithm}[t!]
    \caption{\texttt{DP} algorithm for vaccination}
    \label{alg:dp_alg}
    \begin{algorithmic}[1]
    \State \textbf{Input:} Graph $\G$, treewidth upperbound $\tw(\G) \leq \omega$, budget $K$, precision $\varepsilon > 0$
    \State \textbf{Output:} Set of vertices to vaccinate $R_\varepsilon$
    \State $\mathsf{low} \gets 0$
    \State $\mathsf{high} \gets \Delta$ \Comment{Use max. deg. as upper bound}
    \State $R_\varepsilon \gets \emptyset$
    \State $\Td \gets$ a nice tree decomposition of $\G$ with width $\leq\omega$
    \While{$\mathsf{high} - \mathsf{low} > \varepsilon$}
        \State $\mathsf{mid} \gets \frac{\mathsf{low} + \mathsf{high}}{2}$
        \State ($\mathsf{feasible}$, $R$) $\gets$ \Call{\cref{alg:dp_feas_check}}{$\G$, $\Td$, $K$, $\mathsf{mid}$}
        \If{$\mathsf{feasible}$}
            \State $\mathsf{high} \gets \mathsf{mid}$
            \State $R_\varepsilon \gets R$
        \Else
            \State $\mathsf{low} \gets \mathsf{mid}$
        \EndIf
    \EndWhile
    \State \textbf{Return} $R_\varepsilon$
    \end{algorithmic}
\end{algorithm}

\begin{algorithm}[t!]
    \caption{Feasibility checker for graph vaccination}
    \label{alg:dp_feas_check}
    \begin{algorithmic}[1]
    \State \textbf{Input:} Graph $\G$, nice tree decomposition $\Td$, budget $K$, threshold $\lambda$
    \State \textbf{Output:} Feasibility indicator and vaccination set $R$
    \State \textbf{Initialize} $\mathsf{DP}[t,S,c] \gets \emptyset$ for all $t$ of $\T$, $S \subseteq X_t$, and $c \in [0,K]$
    \State $\T_\text{po} \gets$ nodes of $\T$ in post-order
    \State $r \gets$ root of $\T$
    \For{node $t \in \T_\text{po}$}
        \For{subset $S \subseteq X_t$}
            \For{$c \in [0,K]$}
                \State Update $\mathsf{DP}[t,S,c]$ based on node type
            \EndFor
        \EndFor
    \EndFor
    \If{$\DP[r, \emptyset, c] = \emptyset$ for all $c \in [0,K]$}
        \State \textbf{Return} (\textbf{False}, $\emptyset$)
    \Else
        \State $c^* \gets \min\{c \in [0,K] \colon \DP[r, \emptyset, c] \neq \emptyset\}$
        \State \textbf{Return} (\textbf{True}, any $R \in \DP[r, \emptyset, c^*]$)
    \EndIf
    \end{algorithmic}
\end{algorithm}

\subsection{Heuristic Vaccination Strategy}
\begin{algorithm}[t!]
    \caption{Greedy vaccination algorithm}
    \label{alg:greedy_vac}
    \begin{algorithmic}[1]
        \State \textbf{Input:} Graph $\mathcal{G}$, budget $K$
        \State \textbf{Output:} Set of nodes $R$ to vaccinate
        \State $R \gets \emptyset$
        \State $\mathcal{G}' \gets \mathcal{G}$
        \While{$|R| < K$}
            \State $v^* \gets \argmin_{v \in V(\mathcal{G}')} \rho(\mathcal{G}'[V(\mathcal{G}') \setminus \{v\}])$
            \State $R \gets R \cup \{v^*\}$
            \State $\mathcal{G}' \gets \mathcal{G}'[V(\mathcal{G}') \setminus \{v^*\}]$
        \EndWhile
        \State \textbf{Return} $R$
    \end{algorithmic}
\end{algorithm}
Algorithm~2 exactly solves the NP-hard SRM problem as a surrogate for the vaccination problem. Although this algorithm is optimal and achieves the lowest spectral radius possible for a given budget and thus the lowest extinction time, as will be verified in the experiments, it is computationally expensive for general graphs with unknown treewidth. To address this, we also propose a heuristic algorithm that is not only computationally efficient but also performs well in practice.

\paragraph{Algorithm Overview.}
\Cref{alg:greedy_vac} iteratively removes the vertex that reduces the spectral radius the most.

\paragraph{Computational Complexity.}
The worst-case complexity of the algorithm is $\mathcal{O}\left(K  n^3 \right)$. The complexity arises from (i) $K$ iterations in the outer loop; (ii) the spectral radius computation, which has a complexity of $\mathcal{O}(n^2)$ using the Lanczos algorithm for sparse graphs with $\mathcal{O}(n)$ nonzero entries \cite{Lanczos_alg}; and (iii) sorting the vertices based on the reduction in spectral radius, which has a complexity of $\mathcal{O}(n)$.

\subsection{Combined Learning and Vaccination}
To solve the VUG problem, we first use \texttt{SISLearn} to infer the graph from data. Subsequently, we utilize the learned graph with either \Cref{alg:dp_alg} or \Cref{alg:greedy_vac} to select $K$ vertices for vaccination, depending on the graph's treewidth. Specifically, as supported by the empirical results in \Cref{app:exp_dp_time,app:exp_greedy_time}, \texttt{DP} \Cref{alg:dp_alg} is computationally feasible for graphs with treewidth up to $15$, enabling exact vaccination. For larger treewidths, we employ \texttt{Greedy} \Cref{alg:greedy_vac}.




\section{Experiments}
\label{sec:experiments}

\begin{figure*}[th!]
    \centering
    \begin{minipage}[t]{.48\textwidth}
      \centering
      \includegraphics[]{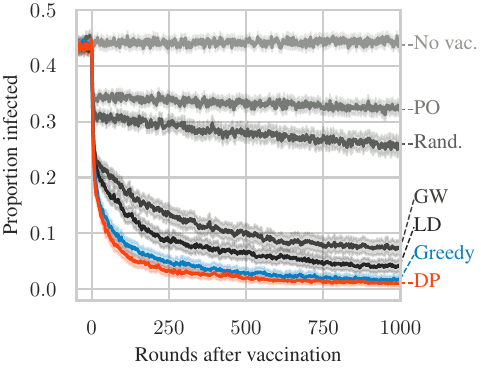}
      \captionof{figure}{Proportion of infected nodes versus rounds after vaccination using different strategies on learned graph from \texttt{SISLearn}. Shaded regions represent $99\%$ confidence interval.}
      \label{fig:main_chn_learn_vac}
    \end{minipage}%
    \hfill
    \begin{minipage}[t]{.48\textwidth}
      \centering
      \includegraphics[]{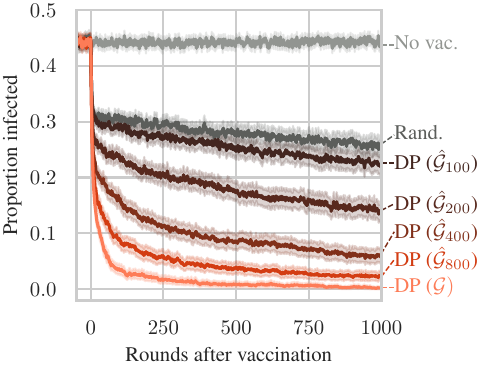}
      \captionof{figure}{Proportion of infected nodes versus rounds after vaccination via \Cref{alg:dp_alg} on the learned graph ($\hat{\G}_{T'})$ from \texttt{SISLearn} using different numbers of rounds ($T'$) for learning. Shaded regions represent $99\%$ confidence interval.} 
    \label{fig:main_chn_robust}
    \end{minipage}
\end{figure*}

In this section, we present experiments using real-world graphs to evaluate the performance of our combined learning and vaccination approach on the VUG problem. A comprehensive set of experiments analyzing the performance of our learning and vaccination algorithms separately are presented in \Cref{app:experiments}. We provide the implementation of our algorithms as well as all baselines at \href{https://github.com/sepehr78/learn2vac}{\url{https://github.com/sepehr78/learn2vac}}.

\paragraph{Setting.}
We consider a contagion spreading on a real-world spreading network from OutbreakTrees \cite{outbreak_trees}, a database of infectious disease transmission trees compiled from real-world disease outbreaks. We utilize the \texttt{chn.2009.flu.1.00} dataset, representing the transmission tree of the 2009 influenza A outbreak in Beijing, China, comprising $n = 40$ vertices. To simulate variability, we augment the tree by adding edges between unconnected vertex pairs independently with a probability of $0.05$, generating $40$ distinct random graphs, with an average treewidth of $8$.

We simulate the SIS model on these graphs with parameters: $\initp = 0.3$, $\infp = 0.3$, and $\healp = 0.5$. The vaccination budget is set to $K = 7$ with $\alpha = 0.2$, reflecting the $80\%$ efficacy of the 2009 Beijing influenza vaccine \cite{flu_vac_china}. For each graph, we generate $40$ random initial infection states and run the simulation for $T = 2000$ rounds.

\paragraph{Algorithms.}
Our approach involves two main steps: (1) learning the graph from the observed infection data of the first $T/2$ rounds using \Cref{alg:learn_graph}, and (2) applying our vaccination algorithms, \texttt{DP} (\Cref{alg:dp_alg}) and \texttt{Greedy} (\Cref{alg:greedy_vac}) to the learned graph. 

It should be noted that we attempted to compare \texttt{SISLearn} with the MLE approach of \citet{barbillon_epidemiologic_2020}, the only work to-date on SIS structure learning, but they assume each vertex gets infected by only one neighbor at a time. This assumption enforces a linear relationship between the number of infected neighbors and the probability of a vertex becoming infected. In our model, all infected neighbors attempt to infect a susceptible vertex simultaneously, leading to an overall higher infection probability. This causes their method to always return a complete graph in our more general setting, making meaningful comparison infeasible.

For comparison, we evaluate the following baseline vaccination strategies on the learned graph:
\vspace{-0.3cm}
\begin{itemize}[noitemsep]
    \item \texttt{Rand.}: vaccinate $K$ random vertices.
    \item \texttt{LD}: vaccinate $K$ vertices with the largest degree.
    \item \texttt{PO} \cite{scaman_suppressing_2016}: vaccinate $K$ vertices at the front of the priority order.
    \item \texttt{GW} \cite{walk_alg_15}: vaccinate $K$ vertices appearing in the most number of closed walks.
\end{itemize}
\vspace{-0.3cm}
The hyperparameters required by the above methods have been set as prescribed in the original papers. We set $\kappa_\mu = \kappa_\nu = 0.01$ for \Cref{alg:learn_graph} and use the positive-instance driven approach of \citet{tw_compute_2019} to compute the tree decomposition for \Cref{alg:dp_alg}.

\paragraph{Results.}
\Cref{fig:main_chn_learn_vac} presents the average proportion of infected nodes over all runs against rounds following vaccination, comparing different vaccination strategies applied to the learned graph. Notably, all non-random vaccination methods—except \texttt{PO}—outperform random vaccination, demonstrating that our learning algorithm has effectively learned the graph. Our proposed algorithms, \texttt{DP} and \texttt{Greedy}, consistently outperform all baselines. Specifically, \texttt{DP} achieves the lowest infection rates across all rounds, with \texttt{Greedy} closely trailing behind. \texttt{LD} ranks third, demonstrating that while targeting high-degree nodes is effective, it does not match the performance of our algorithms. Notably, the \texttt{PO} strategy performs the worst, even underperforming the random vaccination baseline. This underperformance is likely because \texttt{PO} is designed for `healing' vertices in continuous-time SIS models \cite{scaman_suppressing_2016}, making it less suitable for our discrete-time simulation.

\paragraph{Robustness Analysis.}
To assess the robustness of our combined approach, we vary the number of learning rounds used by \texttt{SISLearn} before applying \cref{alg:dp_alg} for vaccination. We experiment with learning rounds ranging from $100$ to $800$ and illustrate the results in \Cref{fig:main_chn_robust}. Our approach consistently outperforms random vaccination even with as few as $100$ learning rounds. Performance improves significantly as the number of learning rounds increases, eventually plateauing around $800$ rounds. This indicates that our method remains effective across varying amounts of learning data. We provide additional robustness analysis using the other vaccination algorithms in \Cref{app:exp_robustness}.

\section{Conclusion}
We introduced the VUG problem, combining structure learning for SIS processes with strategic vaccination when the underlying contact network is unknown. Our framework combines a novel inclusion-exclusion–based learning algorithm—supported by sample-complexity guarantees—with vaccination strategies aimed at minimizing the graph’s spectral radius. In particular, the proposed dynamic programming approach achieves optimal results on graphs of bounded treewidth, while a simpler greedy heuristic scales effectively to general networks. Experimental results on real-world data confirm that our method significantly accelerates epidemic extinction and reduces infection rates, even when limited observations are available.

For future work, a promising direction involves examining which aspects of the network structure are learned first, allowing us to optimize the balance between prolonged learning and timely vaccinations. Furthermore, establishing theoretical approximation guarantees for vaccination heuristics remains an open problem.

\section*{Acknowledgments}
This work was partially supported by the SNF project 200021\_204355 / 1, \emph{Causal Reasoning Beyond Markov Equivalencies}.

\section*{Impact Statement}
This paper presents theoretical advances in machine learning for epidemic modeling and outbreak control, with potential applications in public health. We defer discussion of broader societal consequences to domain-specific implementations of our framework.

\bibliography{bibliography}
\bibliographystyle{icml2025}

\newpage
\appendix
\onecolumn
\section{Additional Experiments}\label{app:experiments}
In this appendix, we provide additional extensive experiments on the performance and robustness of our learning and vaccination algorithms. All experiments were conducted on a machine equipped with two Intel Xeon E5-2680 v3 CPUs, 256GB of RAM, and running Ubuntu 24.04.1 LTS. We used Python 3.11.11 and NetworkX 3.4.2 for graph manipulation and generation.

\subsection{Time to Reach Meta-Stability}
\label{app:exp_metastability}
To investigate the time required for the SIS process to reach a meta-stable state, we conducted simulations using the same augmented \texttt{chn.2009.flu.1.00} graphs from the main experiments (\Cref{sec:experiments}). For these simulations, we systematically varied the infection probability ($p_{\text{inf}}$) and the recovery probability ($p_{\text{rec}}$) from $0$ to $1$. Meta-stability was determined to be reached when the proportion of infected nodes over a moving window of recent time steps converged, indicating that the overall infection level had stabilized. We recorded the number of rounds (time steps) until this convergence criterion was met.

\paragraph{Results.}
\Cref{fig:meta_stability} displays a heatmap of the average number of time steps required for the SIS process to reach meta-stability, as a function of the infection probability ($p_{\text{inf}}$) and recovery probability ($p_{\text{rec}}$). The average spectral radius of the graphs was $\rho(\mathcal{G}) \approx 3.9$. The results demonstrate that, for a wide range of parameter settings where the infection persists (i.e., does not quickly go extinct, indicated by the white region and the theoretically predicted extinction region shown in gray), meta-stability is achieved relatively quickly. Even in the slowest-converging scenarios (brighter regions in the heatmap), meta-stability is reached within $250$ rounds. This rapid convergence to a quasi-stationary state provides empirical support for \Cref{as:stability_assumption}.

\begin{figure}[t]
    \centering
    \includegraphics[]{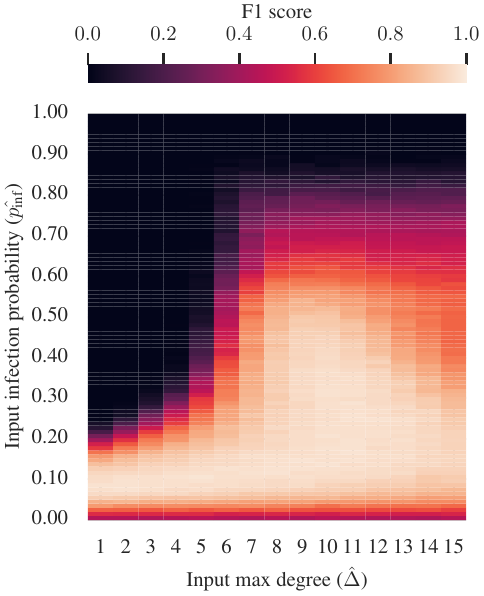}
    \caption{Heatmap of the average F1 score of \texttt{SISLearn} with $1000$ rounds of learning on the augmented \texttt{chn.2009.flu.1.00} dataset (true $p_{\text{inf}} = 0.3$ and true $\Delta = 10$) as a function of the input infection probability ($\hat{p}_{\text{inf}}$) and input max degree ($\hat{\Delta}$). Averaged over $100$ runs.}
    \label{fig:app_misspec_heatmap}
\end{figure}

\subsection{Learning Performance of \texttt{SISLearn}}
We investigate the performance of \texttt{SISLearn} in learning the underlying graph of the SIS epidemic on real-world outbreak graphs. We consider two settings: 

\paragraph{2009 flu outbreak in Beijing, China.}
This is the same setting as the main experiments, using the \texttt{chn.2009.flu.1.00} network with $n=40$ vertices, augmented with $5\%$ additional edges, to get a total of $40$ graphs. We simulate the SIS model on these graphs with parameters: initial infection probability $\initp = 0.3$, infection probability $\infp = 0.3$, and recovery probability $\healp = 0.5$

\paragraph{2009 flu outbreak in Pennsylvania, USA.}
For this setting, we use the much larger \texttt{usa.2009.flu.1.00} contact network, from the 2009 H1N1 influenza outbreak in Pennsylvania, USA, with $n=286$ vertices \cite{outbreak_trees}. We augment the network with $1\%$ additional edges to get a total of $40$ graphs. We simulate the SIS model on these graphs with parameters: initial infection probability $\initp = 0.3$, infection probability $\infp = 0.2$, and recovery probability $\healp = 0.5$.

\paragraph{Results.}
We present the average F1-score of the learned graph from \texttt{SISLearn} versus the number of learning rounds used in \Cref{fig:app_f1}. We observe that for both settings, the F1-score increases rapidly with the number of learning rounds at the start and eventually plateauing. Importantly, \texttt{SISLearn} is able to achieve a high F1-score ($>0.9$) for both graphs with sufficient number of learning rounds.

\begin{figure*}[t!]
    \centering
    \subfloat[Augmented \texttt{chn.2009.flu.1.00} graphs]{%
        \includegraphics{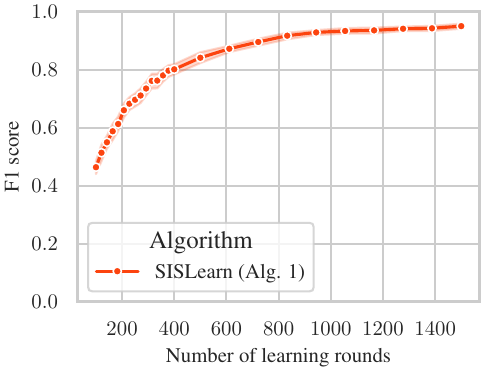}
        \label{fig:app_f1_chn}
    }
    \hfill
    \subfloat[Augmented \texttt{usa.2009.flu.1.00} graphs]{%
        \includegraphics{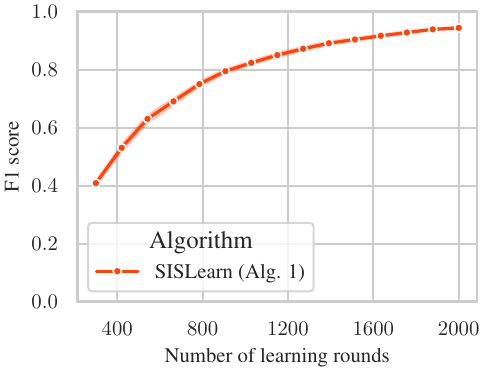}
        \label{fig:app_f1_h1n1}
    }
    
    \caption{Average F1-score of the learned graph from \texttt{SISLearn} versus number of learning rounds used on the augmented \texttt{chn.2009.flu.1.00} and \texttt{usa.2009.flu.1.00} graphs. Shaded regions represent the $99\%$ confidence interval.}
    \label{fig:app_f1}
\end{figure*}

\subsection{Robustness of \texttt{SISLearn} to Input Misspecification}
\label{app:exp_misspec}
We evaluate the robustness of \texttt{SISLearn} to misspecification of its input parameters: the infection probability ($\infp$) and the maximum degree (${\Delta}$). The experimental setup mirrors that described in \Cref{sec:experiments}, using the augmented \texttt{chn.2009.flu.1.00} dataset. For data generation, the true SIS model parameters were an infection probability $\infp = 0.3$ and an actual maximum degree $\Delta = 10$. \texttt{SISLearn} was then executed for $1000$ learning rounds, with systematically varied inputs for $\hat{p}_{\text{inf}}$ (ranging from $0.01$ to $1.00$) and $\hat{\Delta}$ (ranging from $1$ to $15$). After learning, we computed the F1 score of graph recovery, averaged over $100$ independent simulation runs.

\paragraph{Results.}
\Cref{fig:app_misspec_heatmap} presents a heatmap of the average F1 score achieved by \texttt{SISLearn} as a function of the input infection probability $\hat{p}_{\text{inf}}$ ($y$-axis) and the input maximum degree $\hat{\Delta}$ ($x$-axis). The heatmap clearly demonstrates that \texttt{SISLearn} exhibits considerable robustness to variations in these input parameters. Optimal performance, indicated by F1 scores approaching $1.0$ (brightest regions), is observed when $\hat{p}_{\text{inf}}$ is in the vicinity of the true value (e.g., approximately $0.2$ to $0.4$) and $\hat{\Delta}$ is sufficiently large (e.g., $\hat{\Delta} \ge 7$). Importantly, \texttt{SISLearn} maintains high F1 scores (often exceeding $0.8$) even when $\hat{p}_{\text{inf}}$ is moderately overestimated (e.g., up to $0.5-0.6$) or when $\hat{\Delta}$ is underestimated (e.g., down to $5-6$). This robustness suggests that precise prior knowledge of $p_{\text{inf}}$ and $\Delta$ is not a stringent requirement for achieving effective graph recovery with \texttt{SISLearn}.

\subsection{Performance of \texttt{SISLearn} with Different Vaccination Strategies}
\label{app:exp_robustness}
Here, we present results on the robustness of our learning algorithm combined with other vaccination strategies. We use the same setting as in \Cref{sec:experiments} and vary the number of learning rounds from $100$ to $800$. We then apply separately apply \texttt{Greedy}, \texttt{LD}, and \texttt{GW} for vaccination.

\paragraph{Results.}
In \Cref{fig:combined_vaccination_results}, we present the average fraction of infected nodes over time following vaccination using the different algorithms on the learned graph. We observe that \texttt{Greedy} is the only algorithm that outperforms random with only $100$ rounds of learning. \texttt{Greedy} also performs the best with the most learning rounds, followed by \texttt{LD} and \texttt{GW}. Interestingly, \texttt{LD} performs better than \texttt{Greedy} when $T' = 200$ and $T' = 400$, potentially indicating that our learning algorithm recovers high-degree vertices quickly.

\begin{figure*}[t!]
    \centering
    \subfloat[\texttt{Greedy} vaccination algorithm (\cref{alg:greedy_vac})]{%
        \includegraphics[]{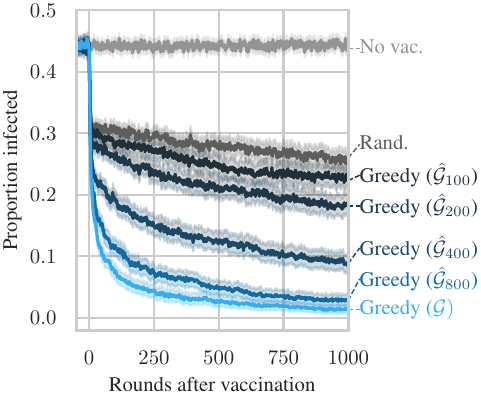}
        \label{fig:app_chn_robust_greedy}
    }
    \hfill
    \subfloat[\texttt{LD} vaccination algorithm]{%
        \includegraphics[]{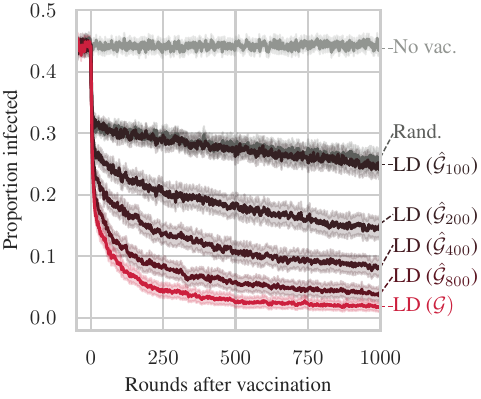}
        \label{fig:app_chn_robust_ld}
    }
    
    \vspace{1em}
    
    \subfloat[\texttt{GW} vaccination algorithm]{%
        \includegraphics[]{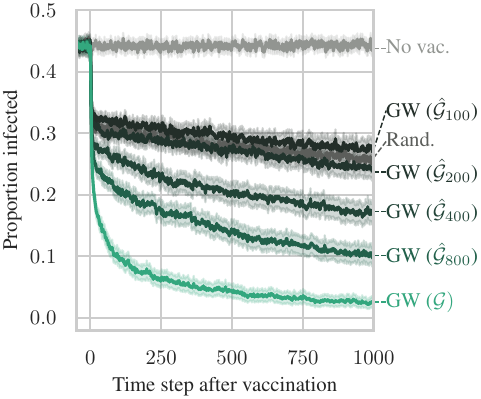}
        \label{fig:app_chn_robust_gw}
    }
    
    \caption{Proportion of infected nodes versus rounds after vaccination using different algorithms on the learned graph ($\hat{\G}_{T'}$) from \texttt{SISLearn} with varying numbers of learning rounds ($T'$). Shaded regions represent a $99\%$ confidence interval.}
    \label{fig:combined_vaccination_results}
\end{figure*}

\subsection{Performance of \Cref{alg:dp_alg} and \Cref{alg:greedy_vac}}
Here, we present results on the effectiveness of our proposed vaccination strategies, \Cref{alg:dp_alg} and \Cref{alg:greedy_vac}, on the real-world outbreak graphs. Unlike in the main experiments, we do not used the learned graph and instead apply our algorithms and baselines directly on the original graphs.

\paragraph{2009 flu outbreak in Beijing, China.}
This is the same setting as the main experiments, using the \texttt{chn.2009.flu.1.00} network with $n=40$ vertices, augmented with $5\%$ additional edges, to get a total of $40$ graphs. We simulate the SIS model on these graphs with parameters: initial infection probability $\initp = 0.3$, infection probability $\infp = 0.3$, and recovery probability $\healp = 0.5$. We set the vaccination budget to $K = 7$ with an efficacy of $80\%$ ($\alpha = 0.2$).

\paragraph{2009 flu outbreak in Pennsylvania, USA.}
For this setting, we use the much larger \texttt{usa.2009.flu.1.00} contact network, from the 2009 H1N1 influenza outbreak in Pennsylvania, USA, with $n=286$ vertices. We augment the network with $1\%$ additional edges to get a total of $40$ graphs, with an average treewidth of $58$. We simulate the SIS model on these graphs with initial infection probability $\initp = 0.3$, infection probability $\infp = 0.2$, and recovery probability $\healp = 0.5$. We set the vaccination budget to $K = 50$ with an efficacy of $75\%$ ($\alpha = 0.25$), reflecting the efficacy of the influenza vaccine in the US in 2009 \cite{usa_flu_vac}. As the treewidths are much larger than the \texttt{chn.2009.flu.1.00} dataset ($58$ versus $8$), computing the tree decomposition for \Cref{alg:dp_alg} is computationally very expensive, and thus we only present results for \Cref{alg:greedy_vac}. Similarly, computing the priority order for \texttt{PO} is also infeasible as it is an NP-hard problem \cite{scaman_suppressing_2016}, and thus we do not present results for this method.

\paragraph{Baselines.} For convenience we restate the baselines as used in the main paper:
\vspace{-0.3cm}
\begin{itemize}[noitemsep]
    \item \texttt{Rand.}: vaccinate $K$ random vertices.
    \item \texttt{LD}: vaccinate $K$ vertices with the largest degree.
    \item \texttt{PO} \cite{scaman_suppressing_2016}: vaccinate $K$ vertices at the front of the priority order.
    \item \texttt{GW} \cite{walk_alg_15}: vaccinate $K$ vertices appearing in the most number of closed walks.
\end{itemize}
\vspace{-0.3cm}

\paragraph{Results.}
We present the average proportion of infected nodes over time following vaccination using different strategies on the augmented \texttt{chn.2009.flu.1.00} and \texttt{usa.2009.flu.1.00} graphs in \Cref{fig:app_vac_inf}. We observe that \Cref{alg:dp_alg} consistently outperforms all baselines, followed by \texttt{Greedy} and \texttt{LD} (see \Cref{fig:app_vac_chn}). On the larger \texttt{usa.2009.flu.1.00} graph in \Cref{fig:app_vac_h1n1}, \texttt{Greedy} performs the best, followed by \texttt{LD} and \texttt{GW}. 

\begin{figure*}[t!]
    \centering
    \subfloat[Augmented \texttt{chn.2009.flu.1.00} graphs]{%
        \includegraphics{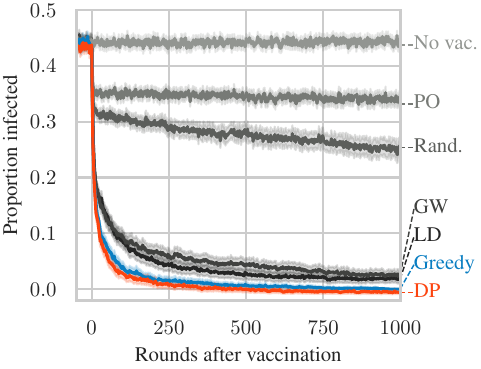}
        \label{fig:app_vac_chn}
    }
    \hfill
    \subfloat[Augmented \texttt{usa.2009.flu.1.00} graphs]{%
        \includegraphics{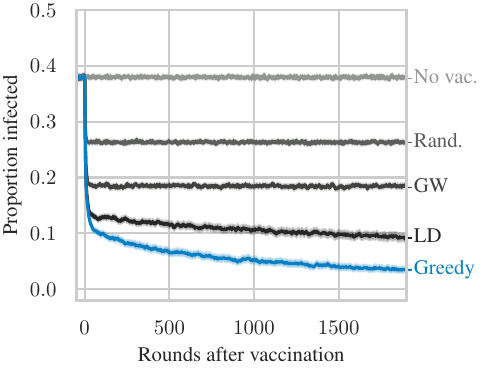}
        \label{fig:app_vac_h1n1}
    }
    
    \caption{Proportion of infected nodes versus rounds after vaccination using different strategies on the augmented \texttt{chn.2009.flu.1.00} and \texttt{usa.2009.flu.1.00} graphs. Shaded regions represent the $99\%$ confidence interval. }
    \label{fig:app_vac_inf}
\end{figure*}

\subsection{Impact of Epidemic Severity on Vaccination}
\label{app:exp_severity_budget}
We further evaluate our combined learning and vaccination approach under varying conditions of epidemic severity and available vaccination resources. The base experimental setting is identical to that described in \Cref{sec:experiments} (augmented \texttt{chn.2009.flu.1.00} dataset, \texttt{SISLearn} for graph inference, followed by vaccination). We consider two specific scenarios:
\begin{itemize}[noitemsep]
    \item \textbf{Low Severity / Low Budget:} The SIS model is simulated with a lower infection probability $p_{\text{inf}} = 0.15$. After learning the graph from the first $T/2$ rounds of this simulation, vaccination strategies are applied with a reduced budget of $K=3$.
    \item \textbf{High Severity / High Budget:} The SIS model is simulated with a higher infection probability $p_{\text{inf}} = 0.80$ and an increased budget of $K=25$.
\end{itemize}
All other parameters, including recovery probability $\healp=0.5$, vaccination efficacy $\alpha=0.2$, and learning parameters for \texttt{SISLearn}, remain consistent with those in \Cref{sec:experiments}. Results are averaged over 40 runs.

\paragraph{Results.}
\Cref{fig:app_severity_budget_vac} illustrates the proportion of infected nodes following vaccination for the two described scenarios.

In the low severity scenario with a low budget (\Cref{fig:app_vac_learn_low}), the epidemic is inherently less aggressive. Even without vaccination, the infection level is considerably lower than in the main experiment. With a small budget of $K=3$, our \texttt{DP} and \texttt{Greedy} strategies, applied to the graph learned by \texttt{SISLearn}, rapidly reduce the infection to near extinction. While other strategies also show benefit, the targeted approaches achieve the most significant reduction, effectively controlling the mild outbreak with minimal resources.

Conversely, in the high severity scenario with a high budget (\Cref{fig:app_vac_learn_high}), the ``No vaccination" case shows a very high persistent infection level. Here, a substantial vaccination effort is required. Our \texttt{DP} and \texttt{Greedy} algorithms again demonstrate superior performance, achieving a much more significant reduction in infection compared to baseline heuristics like \texttt{LD} and \texttt{GW}. This highlights the importance of optimized vaccination strategies when facing highly contagious outbreaks.

\begin{figure*}[t]
    \centering
    \subfloat[Low severity ($p_{\text{inf}}=0.15$) and low budget ($K=3$).]{%
        \includegraphics[width=0.48\textwidth]{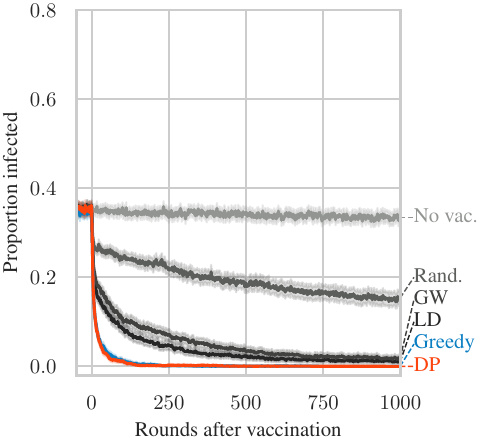}
        \label{fig:app_vac_learn_low}
    }
    \hfill
    \subfloat[High severity ($p_{\text{inf}}=0.80$) and high budget ($K=25$).]{%
        \includegraphics[width=0.48\textwidth]{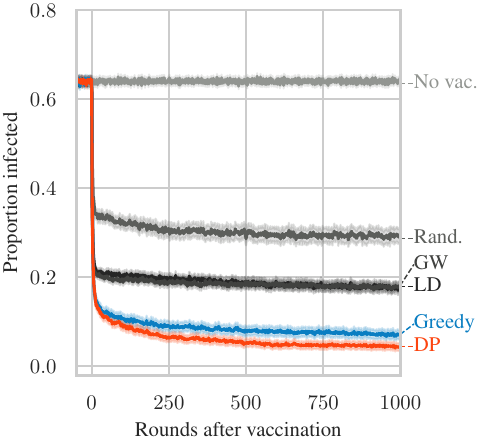}
        \label{fig:app_vac_learn_high}
    }
    \caption{Proportion of infected nodes versus rounds after vaccination using different strategies on the graph learned by \texttt{SISLearn} from the augmented \texttt{chn.2009.flu.1.00} dataset with low and high severity. Shaded regions represent the $99\%$ confidence interval, averaged over 40 runs.}
    \label{fig:app_severity_budget_vac}
\end{figure*}

\subsection{Empirical Running Time of \texttt{SISLearn}}
\label{app:sislearn_time}
Here, we empirically demonstrate that our \texttt{SISLearn} algorithm is computationally efficient and can scale to very large graphs. We generate connected Erdős-Rényi (ER) graphs with $n \in \{1000, \ldots, 10000\}$ vertices and edge probability $p = 1.1\log n/n$. We set the number of learning rounds to $T = 2000$ and run \texttt{SISLearn} to learn the underlying graph, measuring its running time. For each $n$, we generate $10$ random graphs and report the average running time.

\paragraph{Results.}
\Cref{fig:sislearn_time} shows the average running time of \texttt{SISLearn} across different number of vertices. As shown, \texttt{SISLearn} can easily handle graphs as large as $n = 10000$ vertices and $\approx 50000$ edges, taking only $9$ minutes to run, demonstrating that \texttt{SISLearn} is computationally efficient and can scale to very large graphs.

\begin{figure*}[t!]
    \centering
    \includegraphics[]{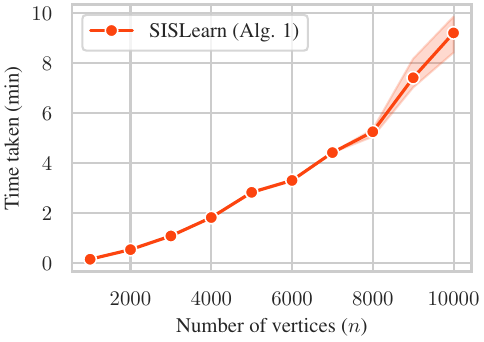}
    \caption{Running time (in minutes) of the \texttt{SISLearn} algorithm on connected ER graphs, plotted against the number of vertices. Shaded regions denote the $99\%$ confidence interval, averaged over $10$ runs.}
    \label{fig:sislearn_time}
\end{figure*}

\subsection{Empirical Running Time of \texttt{DP}}
\label{app:exp_dp_time}
In this section, we empirically demonstrate the superior performance of our \texttt{DP} \Cref{alg:dp_alg} in solving the SRM problem on graphs with bounded and unbounded treewidth, compared with an exhaustive search algorithm.

\paragraph{Small Treewidth.}
We generate graphs of treewidth $3$ with $n \in \{10, \ldots, 98\}$ vertices, and set the budget $K = 0.2n$. We then run our \texttt{DP} algorithm and an exhaustive search algorithm, which checks all $\binom{n}{k}$ subsets, to find the optimal vaccination set (i.e., solve the SRM problem), measuring their running time. Note that both approaches are exact algorithms and return the optimal solution. For each $n$, we generate $40$ random graphs and report the average running time.

\paragraph{Unbounded Treewidth.}
We generate connected Erdős-Rényi (ER) graphs with $n \in \{10, \ldots, 90\}$ vertices and edge probability $p = 1.1\log n/n$, which results in graphs with unbounded treewidth. We also set the budget $K = 0.2n$ and run our \texttt{DP} algorithm and the exhaustive search algorithm to find the optimal vaccination set, measuring their running time. For each $n$, we also generate $40$ random graphs and report the average running time.

\paragraph{Large Treewidth.}
We also conduct additional experiments on large graphs with treewidth $10$, for which we only run our \texttt{DP} algorithm, as the exhaustive search algorithm is computationally infeasible. We generate graphs with $n \in \{20, \ldots, 2000\}$ vertices and set the budget $K = 0.2n$. For each $n$, we generate $40$ random graphs and report the average running time.

\paragraph{Results.}
In \Cref{fig:app_vac}, we compare the average running times of \texttt{DP} and the exhaustive search algorithm across the small and unbounded treewidth settings. As shown, \texttt{DP} is orders of magnitude faster than exhaustive search for $n \geq 20$. In the small treewidth setting (\Cref{fig:tw3_vacc_time}), the exhaustive search time grows exponentially with $n$, while \texttt{DP} exhibits polynomial growth, consistent with the theoretical guarantee in \Cref{thm:dp_time_complexity}. In the unbounded treewidth setting (\Cref{fig:er_vacc_time}), \texttt{DP} also incurs exponential time, which is to be expected for the NP-hard SRM problem, but remains substantially faster than exhaustive search.

\Cref{fig:dp_tw10_time} shows the average running time of \texttt{DP} on graphs with treewidth $10$. As shown, \texttt{DP} exhibits polynomial growth in time complexity as $n$ increases, demonstrating that it can handle graphs with $n = 2000$ vertices in about $20$ minutes. This is a significant improvement over the exhaustive search algorithm, which is simply infeasible for such large graphs.

\begin{figure*}[t!]
    \centering
    \subfloat[Graphs with treewidth $3$]{%
        \includegraphics{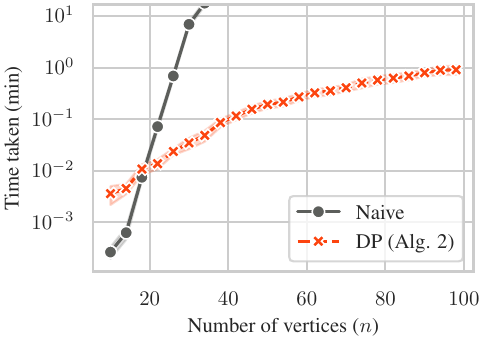}
        \label{fig:tw3_vacc_time}
    }
    \hfill
    \subfloat[ER graphs with $p=1.1\log n/n$]{%
        \includegraphics{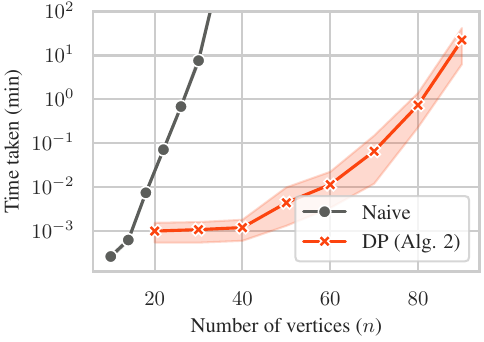}
        \label{fig:er_vacc_time}
    }
    
    \caption{Log running time (in minutes) of naive exhaustive search and our \texttt{DP} \Cref{alg:dp_alg} for solving the SRM problem on threewidth-$3$ and connected ER graphs plotted against the number of vertices, $n$. Shaded regions denote the $99\%$ confidence interval, averaged over $40$ runs.}
    \label{fig:app_vac}
\end{figure*}

\subsection{Empirical Running Time of \texttt{Greedy}}
\label{app:exp_greedy_time}
Here, we empirically demonstrate that our \texttt{Greedy }\Cref{alg:greedy_vac} is computationally efficient and can scale to large graphs. We generate connected Erdős-Rényi (ER) graphs with $n \in \{20, \ldots, 1000\}$ vertices and edge probability $p = 1.1\log n/n$. We set the budget $K = 0.2n$ and run both our \texttt{DP} and \texttt{Greedy} algorithm to solve the SRM problem, measuring their running time. For each $n$, we generate $40$ random graphs and report the average running time.

\paragraph{Results.}
\Cref{fig:dp_greedy_time} shows the average running times of \texttt{DP} and \texttt{Greedy} across different number of vertices. As shown, \texttt{Greedy} is significantly faster than \texttt{DP}, with a polynomial growth in time complexity, while \texttt{DP} exhibits exponential growth. More importantly, \texttt{Greedy} is able to scale to graphs with $n = 1000$ vertices, taking only $\approx 10$ minutes to run.

\begin{figure*}[th!]
    \centering
    \begin{minipage}[t]{.48\textwidth}
      \centering
      \includegraphics[]{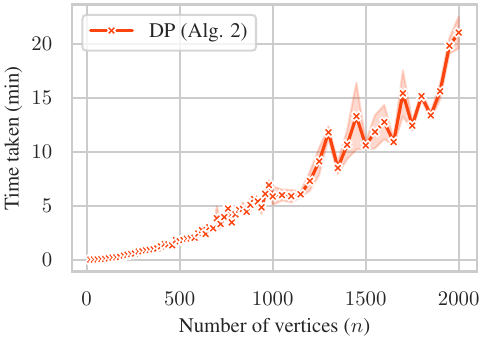}
      \captionof{figure}{Running time (in minutes) of the \texttt{DP} algorithm on random graphs with treewidth $\text{tw}=10$, plotted against the number of vertices. Shaded regions denote the $99\%$ confidence interval, averaged over $40$ runs.}
    \label{fig:dp_tw10_time}
    \end{minipage}%
    \hfill
    \begin{minipage}[t]{.48\textwidth}
        \centering
      \includegraphics[]{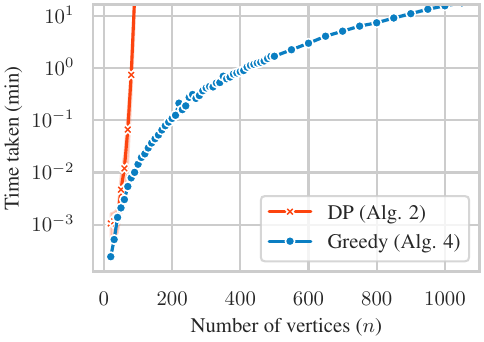}
      \captionof{figure}{Log running time (in minutes) of the \texttt{DP} and \texttt{Greedy} vaccination algorithms on connected ER graphs, plotted against the number of vertices. Shaded regions denote the $99\%$ confidence interval, averaged over $40$ runs.}
      \label{fig:dp_greedy_time}
    \end{minipage}
\end{figure*}

\section{Vaccination on Trees}\label{app:tree_vac}

In this section, we present a vaccination algorithm for trees that we prove to be optimal i.e., it reduces the spectral radius the most for a given vaccination budget $K$. The algorithm is based on the observation that following the removal (i.e., vaccination) of any vertex of a tree, the resulting tree has at least one new connected component. 


\paragraph{Algorithm Overview.} Our algorithm, pseudocode given in \cref{alg:tree-vacc}, employs a binary search strategy to identify the smallest possible spectral radius $\lambda_\varepsilon$ achievable by removing at most $K$ vertices from a tree $\T$, up to a precision of $\varepsilon > 0$.

For each candidate value of $\lambda$ during the binary search, we invoke a feasibility check algorithm, pseudocode given in \Cref{alg:feasibility-check}, to determine whether there exists a set of at most $K$ vertices whose removal ensures that the spectral radius of the resulting forest (i.e., disjoint union of trees) does not exceed $\lambda$. \Cref{alg:feasibility-check} operates as follows:
\begin{enumerate}
    \item \textbf{Depth-First Traversal:} The algorithm picks an arbitrary root and recursively explores each subtree in post-order (i.e., children before parent).
    \item \textbf{Vertex Removal:} If a subtree's spectral radius exceeds $\lambda$, the root of that subtree is marked for removal (vaccination), effectively splitting the tree into smaller components.
    \item \textbf{Budget Verification:} After the traversal, if the total number of removed vertices does not exceed $K$, the current $\lambda$ is considered feasible.
\end{enumerate}

Using \Cref{alg:feasibility-check}, we can identify the smallest $\lambda_\varepsilon$ via a binary search procedure conducted within the interval $[0, \sqrt{n-1}]$, leveraging the fact that the spectral radius of a tree is bounded by $\sqrt{n-1}$ \cite{van_mieghem_spec_rad_2011}. The binary search iteratively narrows the range of possible $\lambda$ values by selecting a midpoint and using the feasibility check to determine if the current $\lambda$ is achievable within the vaccination budget. As the domain of $\lambda$ is continuous, we stop the binary search when the interval size is less than a predefined threshold $\varepsilon$.

\begin{algorithm}[t!]
    \caption{Optimal vaccination for a tree}
    \label{alg:tree-vacc}
    \begin{algorithmic}[1]
    \State \textbf{Input:} Tree $\T= (V, E)$, budget $K$, precision $\varepsilon > 0$
    \State \textbf{Output:} Set of vertices to vaccinate $R_\varepsilon$
    \State Initialize $L \gets 0$
    \State Initialize $high \gets \sqrt{|V| - 1}$
    \State Initialize $R_\varepsilon \gets \emptyset$
    \While{$high - low > \varepsilon$}
        \State $mid \gets \frac{low + high}{2}$
        \State (\text{feasible}, $R$) $\gets$ \Call{\cref{alg:feasibility-check}}{$\T$, $K$, $\lambda= mid$}
        \If{feasible}
            \State $high \gets mid$
            \State $R_\varepsilon \gets R$
        \Else
            \State $low \gets mid$
        \EndIf
    \EndWhile
    \State \textbf{Return} $R_\varepsilon$
    \end{algorithmic}
\end{algorithm}

\begin{algorithm}[t!]
    \caption{Feasibility checker for tree vaccination}
    \label{alg:feasibility-check}
    \begin{algorithmic}[1]
    \State \textbf{Input:} Tree $\T = (V, E)$, budget $K$, threshold $\lambda$
    \State \textbf{Output:} Feasibility and set of vertices to remove $R$
    \State Choose an arbitrary root vertex $r \in V$
    \State Initialize $R \gets \emptyset$
    \State \Call{DFS}{$\T$, $r$, \texttt{None}, $\lambda$, $R$}
    \If{$|R| \leq k$}
        \State \textbf{Return} True, $R$
    \Else
        \State \textbf{Return} False, $R$
    \EndIf
    \end{algorithmic}
    
    \hrulefill
    
    \begin{algorithmic}[1]
    \Function{DFS}{$\T$, $u$, parent, $\lambda$, $R$}
        \State $S \gets \emptyset$
        \For{$v \in \N(u)$}
            \If{$v \neq \text{parent}$}
                \State $S \gets S \cup \Call{DFS}{\T, v, u, \lambda, R}$
            \EndIf
        \EndFor \label{ln:dfs-loop-end}
        \If{$\rho(\T[S]) > \lambda$}
            \State $R \gets R \cup \{u\}$
            \State \textbf{return} $\emptyset$
        \Else
            \State \textbf{return} $S$
        \EndIf
    \EndFunction
    \end{algorithmic}
    \end{algorithm}

\paragraph{Optimality Guarantee.}
Below, we formally state and prove the optimality of \cref{alg:tree-vacc}.
\begin{theorem}
    Given a tree $\T$ and vaccination budget $K$, let $\lambda^* = \rho(\T[V \setminus R^*])$, where $R^*$ is the solution to the optimization problem given in \Cref{eq:opt_vac}, and let $\lambda_\varepsilon = \rho(\T[V \setminus R_\varepsilon])$, where $R_\varepsilon$ is the output of \cref{alg:tree-vacc} with precision parameter $\varepsilon$. Then, $\abs{\lambda^* - \lambda_\varepsilon} \leq \varepsilon$.
\end{theorem}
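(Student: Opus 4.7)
The plan is to reduce the claim to two ingredients: (i) correctness of the feasibility checker (\Cref{alg:feasibility-check}), and (ii) standard binary-search convergence once feasibility is a monotone predicate in $\lambda$. Together these give that the binary search brackets $\lambda^*$ inside an interval of length at most $\varepsilon$, and the witness $R_\varepsilon$ corresponds to the upper endpoint.

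I would first show that \Cref{alg:feasibility-check} satisfies the following two properties for every fixed $\lambda \geq 0$:
\begin{itemize}
    \item \textbf{Soundness.} If it returns $(\text{True}, R)$, then $\rho(\T[V \setminus R]) \leq \lambda$. This follows by induction on the DFS traversal: the set returned at each non-removed vertex $u$ induces a subtree containing $u$ whose spectral radius is at most $\lambda$; since $\T[V \setminus R]$ is the disjoint union of these subtrees (its connected components), and the spectral radius of a forest equals the maximum over its components, the bound on $\T[V \setminus R]$ follows.
    \item \textbf{Minimality.} Among all sets $R^*$ with $\rho(\T[V \setminus R^*]) \leq \lambda$, the algorithm returns one of minimum cardinality. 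This is the main obstacle and I would prove it by a bottom-up exchange argument. The key claim is that whenever the DFS decides to remove a vertex $u$, the subtree rooted at $u$ at that moment, call it $\T_u$, satisfies $\rho(\T_u) > \lambda$; hence \emph{any} feasible removal set must include at least one vertex of $\T_u$, because the spectral radius of any induced subgraph of a tree that still contains $\T_u$ as an induced subtree is at least $\rho(\T_u) > \lambda$ by interlacing/monotonicity of $\rho$ under induced-subgraph inclusion. Since the subtrees associated to distinct algorithmic removals are vertex-disjoint (later removals are in components that have already been detached by earlier ones), each algorithmic removal can be charged to a distinct removal in $R^*$, giving $|R| \leq |R^*|$. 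Consequently, the algorithm returns True with a set of size $\leq K$ exactly when a feasible $R^*$ of size $\leq K$ exists.
\end{itemize}

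Next I would handle the binary search. Define $\lambda^* \coloneqq \min\{\rho(\T[V \setminus R]) : R \subseteq V,\ |R| \leq K\}$, which is well-defined because we minimize over a finite family. By the two properties above, the predicate ``\Cref{alg:feasibility-check} returns True with $|R| \leq K$'' holds iff $\lambda \geq \lambda^*$, so it is monotone in $\lambda$. The initial bracket $[0, \sqrt{n-1}]$ contains $\lambda^*$ because $\rho(\T) \leq \sqrt{n-1}$, i.e. $R = \emptyset$ is a witness at $\sqrt{n-1}$. I would then verify by induction on iterations that the invariants $\mathsf{low} \leq \lambda^* \leq \mathsf{high}$ and ``$R_\varepsilon$ is a witness for the current $\mathsf{high}$'' are preserved: when $\mathsf{mid}$ is feasible we set $\mathsf{high} \gets \mathsf{mid}$ and $R_\varepsilon \gets R$, preserving $\lambda^* \leq \mathsf{high}$; when $\mathsf{mid}$ is infeasible we set $\mathsf{low} \gets \mathsf{mid}$, preserving $\mathsf{low} \leq \lambda^*$.

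Finally, the termination condition $\mathsf{high} - \mathsf{low} \leq \varepsilon$ combined with the invariants gives $\lambda^* \leq \mathsf{high} \leq \lambda^* + \varepsilon$. Since $R_\varepsilon$ witnesses feasibility of $\mathsf{high}$, we get $\lambda_\varepsilon = \rho(\T[V \setminus R_\varepsilon]) \leq \mathsf{high} \leq \lambda^* + \varepsilon$, and by definition of $\lambda^*$ we have $\lambda_\varepsilon \geq \lambda^*$. Together, $|\lambda_\varepsilon - \lambda^*| \leq \varepsilon$, which is the claim. The delicate step I expect to spend the most time on is the minimality part of the feasibility check, where one must carefully formalize that ``the bad subtree $\T_u$ persists as an induced subgraph until some vertex in it is removed'' and invoke the appropriate monotonicity of $\rho$ to transfer the lower bound $\rho(\T_u) > \lambda$ to the full induced subgraph on $V \setminus R^*$.
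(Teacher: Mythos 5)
Your proposal is correct and matches the paper's proof strategy: the paper likewise reduces the theorem to the correctness and completeness of \Cref{alg:feasibility-check}, proving completeness by the same charging argument over vertex-disjoint minimal offending subtrees (each forcing at least one removal in any feasible solution via monotonicity of $\rho$ under induced subgraphs), and then concludes via the standard binary-search bracketing. Your "minimality" claim is stated slightly more strongly than the paper's completeness claim, but the underlying argument is identical, and your explicit treatment of the binary-search invariants fills in a step the paper leaves implicit.
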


\begin{proof}
    Proof follows from the correctness and completeness of \cref{alg:feasibility-check}, as detailed in \cref{app:vac_opt}.
\end{proof}

\paragraph{Computational Complexity.}
The worst-case complexity of the algorithm is $\mathcal{O}\left(n^3 \log(\sqrt{n} / \varepsilon)\right)$. The complexity arises from (i) the binary search, which has a complexity of $\mathcal{O}(\log(\sqrt{n} / \varepsilon))$; (ii) the post-order traversal of the tree, which has a complexity of $\mathcal{O}(n)$; and (iii) the spectral radius computation, which has a complexity of $\mathcal{O}(n^2)$ using the Lanczos algorithm for sparse graphs with $\mathcal{O}(n)$ nonzero entries \cite{Lanczos_alg}.

\section{Tree Decomposition}\label{app:tree_decomp}
In this section, we provide a brief introduction to treewidth, tree decompositions, and nice tree decompositions. For more details, we refer the reader to \citet{param_algs}.

\paragraph{Tree decomposition.}
A tree decomposition of a graph $\G$ is a tree with bags of vertices as nodes that captures the connectivity and structure of the graph. Below, we provide a formal definition of a tree decomposition.

\begin{definition} (Tree Decomposition from \citet{param_algs}).
A tree decomposition of $\G$ is a pair $\Td = \left(\T,\left\{X_t\right\}_{t \in V(\T)}\right)$, where $\T$ is a tree whose every node $t$ is assigned a vertex subset $X_t \subseteq V(G)$, called a bag, such that the following three conditions hold:
\begin{itemize}
    \item $\bigcup_{t \in V(\T)} X_t=V(\G)$. In other words, every vertex of $\G$ is in at least one bag.
    \item For every $u,v \in E(\G)$, there exists a node $t$ of $\T$ such that bag $X_t$ contains both $u$ and $v$.
    \item For every $u \in V(\G)$, the set $\T_u=\left\{t \in V(\T): u \in X_t\right\}$, i.e., the set of nodes whose corresponding bags contain $u$, induces a connected subtree of $\T$.
\end{itemize}
\end{definition}

The width of tree decomposition $\Td$ equals $\max _{t \in V(\T)}\left|X_t\right|-1$, that is, the maximum size of its bag minus $1$. Notice that a trivial tree decomposition is formed by a single node with the entire vertex set of the graph as its bag, resulting in a width of $n -1$.

\paragraph{Treewidth.}
Then, the treewidth of an undirected graph $\G$, denoted by $\tw(\G)$, is the minimum width of any tree decomposition of $\G$. Intuitively, the treewidth of an undirected graph $\G$ is a measure of its tree-likeness (treewidth of a tree is $1$, hence the minus $1$ in the definition). It is a very important concept used in parametrized algorithms and complexity theory, where many NP-hard problems, such as the Hamiltonian path problem, Steiner tree problem, and many others \cite{treewidth_singl_exp}, become tractable on graphs of bounded treewidth.

For a general graph, an upper bound on the treewidth is the number of vertices. However, for many real-world graphs, the treewidth is much smaller than that, making it a useful parameter for designing efficient algorithms.

\paragraph{Nice tree decomposition.}
A nice tree decomposition is a tree decomposition that is easy to write dynamic programming algorithms on. It is formally defined below.

\begin{definition} (Nice tree decomposition from \citet{param_algs}).
    A rooted tree decomposition $\Td = \left(\T,\left\{X_t\right\}_{t \in V(\T)}\right)$ with root $r \in \T$ is \textit{nice} if $X_r=\emptyset$ and every node of $\T$ is of one of the following four types:
    \begin{itemize}
    \item \textbf{Leaf node}: a node $t$ that has no children and has $X_{t}=\emptyset$.
        \item \textbf{Introduce node}: a node $t$ with exactly one child $t^{\prime}$ such that $X_t=$ $X_{t^{\prime}} \cup\{v\}$ for some vertex $v \notin X_{t^{\prime}}$; we say that $v$ is introduced at $t$.
        \item \textbf{Forget node}: a node $t$ with exactly one child $t^{\prime}$ such that $X_t=X_{t^{\prime}} \backslash\{w\}$ for some vertex $w \in X_{t^{\prime}}$; we say that $w$ is forgotten at $t$.
        \item \textbf{Join node}: a node $t$ with two children $t_1, t_2$ such that $X_t=X_{t_1}=X_{t_2}$.
    \end{itemize}
\end{definition}

Working with nice tree decompositions simplifies the design of dynamic programming algorithms, due to limited types of nodes. The following lemma shows that every tree decomposition can be converted to a nice tree decomposition without increasing the width and in polynomial time.

\begin{lemma} (Lemma 7.4 of \citet{param_algs}).
    \label{lem:nice_tree_decomp}
    If a graph $\G$ admits a tree decomposition of width at most $\omega$, then it also admits a nice tree decomposition of width at most $\omega$. Moreover, given a tree decomposition $\left(\T,\left\{X_t\right\}_{t \in V(\T)}\right)$ of $\G$ of width at most $\omega$, one can in time $\mathcal{O}\left(\omega^2 \cdot \max (|V(\T)|,|V(\G)|)\right)$ compute a nice tree decomposition of $\G$ of width at most $\omega$ that has at most $\mathcal{O}(\omega|V(\G)|)$ nodes.
\end{lemma}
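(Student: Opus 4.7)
The plan is to give a constructive algorithm that transforms any input tree decomposition $(\T, \{X_t\}_{t \in V(\T)})$ of width at most $\omega$ into a nice tree decomposition of the same width via a sequence of local rewrites, and then to account for the running time and the number of nodes produced. I would first preprocess $\T$ so that it has $O(|V(\G)|)$ nodes: while there is an edge $(u,v)$ of $\T$ with $X_u \subseteq X_v$, contract it (merging $u$ into $v$); a standard argument shows the result is still a tree decomposition of width at most $\omega$ and has at most $|V(\G)|$ nodes. Then I would pick an arbitrary root $r_0$, orient all edges away from it, and attach above $r_0$ a chain of forget nodes whose bags drop the vertices of $X_{r_0}$ one at a time, producing a new root $r$ with $X_r = \emptyset$.

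Next I would binarize: any node $u$ with children $t_1,\dots,t_k$ (for $k \geq 2$) is replaced by a right-leaning comb of $k-1$ join nodes, each with bag $X_u$, so that every internal node has at most two children and every two-child node has both children carrying the same bag $X_u$. I would then morph bags along every remaining parent-child edge $(u,v)$: letting $A = X_u \setminus X_v$ and $B = X_v \setminus X_u$, insert $|A|$ forget nodes followed by $|B|$ introduce nodes, each changing the bag by exactly one element. Finally, every leaf $\ell$ with non-empty bag $X_\ell$ is lengthened by a chain of $|X_\ell|$ introduce nodes above an empty-bag leaf. The three defining properties of a tree decomposition are preserved at each rewrite because inserted bags lie between those of their neighbors, and no bag ever exceeds $\omega + 1$ in size, so the width remains at most $\omega$. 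By inspection the resulting tree has only leaf, introduce, forget, and join nodes, so it is nice.

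For the counting, after the initial contraction there are at most $|V(\G)|$ original nodes and hence at most $|V(\G)|-1$ edges, so the binarization produces $O(|V(\G)|)$ nodes and the bag-morphing on each edge inserts at most $|A|+|B| \leq 2(\omega+1)$ new nodes; the leaf-extension and root-extension each add at most $\omega+1$ nodes per affected node. Summed, this yields $O(\omega|V(\G)|)$ nodes. Each local rewrite touches bags of size at most $\omega+1$ and can be implemented in $O(\omega)$ or $O(\omega^2)$ time (e.g.\ via sorted representations of bags to compute $X_u \setminus X_v$ in $O(\omega)$), so the total running time is dominated by $O(\omega^2)$ work per node of the initial decomposition, giving $O(\omega^2 \cdot \max(|V(\T)|, |V(\G)|))$.

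The main obstacle I anticipate is the bookkeeping needed to guarantee simultaneously that (i) every newly inserted node has exactly one of the four allowed types, (ii) bags of the two children of each join node remain equal after the subsequent bag-morphing on the two child-edges, and (iii) the connected-subtree property of the decomposition is never violated. The cleanest way to keep this straight is to perform the rewrites in a fixed order—contract, then root/empty-root, then binarize (so that every join's children already share a bag), and only then morph bag contents along edges and extend leaves—because each stage leaves invariants the next stage relies on, avoiding any interaction between the transformations.
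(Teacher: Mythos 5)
This lemma is imported verbatim from \citet{param_algs} (Lemma 7.4) and the paper gives no proof of its own, so the only meaningful comparison is to the standard textbook construction — which is exactly what you have reproduced (contract subset-bags, root with an empty bag, binarize with join combs, morph bags one element at a time along edges, extend leaves), and your outline and counting are correct. The only nit is an orientation slip in the bag-morphing step: reading top-down from $X_u$ to $X_v$, dropping an element of $A = X_u \setminus X_v$ makes the upper node an \emph{introduce} node (its bag equals the child's bag plus that element) and adding an element of $B$ makes it a \emph{forget} node, so your labels are swapped relative to the paper's definitions, though this does not affect the width bound, the node count, or the running time.
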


\section{Additional Proofs}\label{app:proofs}
\subsection{Coupling between Processes $\{Y^{(t)}\}$ and $\{\Psi^{(t)}\}$} \label{app:coupling}
The transition probabilites of $\{Y^{(t)}\}_{t \in T_D}$ are governed by the infection and recovery probabilities as described in \Cref{sec:SIS_model}. If the chain is in the all zero state, it will remain in that state forever; i.e., $\mathbb{P}(Y^{(t)} = \underline{0} \mid Y^{(t-1)} = \underline{0}) = 1$.

For the modified SIS process $\{\Psi^{(t)}\}_{t \in T_D}$, we define the transition probabilities as follows:
\begin{equation*}
    \mathbb{P}(\Psi^{(t)} = \psi^{\prime \prime} \mid \Psi^{(t-1)} = \psi^\prime) = \begin{cases}
        \mathbb{P}(Y^{(t)} = \psi^{\prime \prime} \mid Y^{(t-1)} = \psi^\prime), & \text{if }  \psi^\prime \neq \underline{0}, \\
        (\initp)^{\norm{\psi^{\prime\prime}}}(1-\initp)^{n- \norm{\psi^{\prime\prime}}} & \text{otherwise,}
    \end{cases}
\end{equation*}

where $\norm{\psi} = \sum_{i=1}^{n} \psi_i$ denote the number of $1$-elements in a $n$-dimensional vector $\psi \in \{0,1\}^{n}$. 

Clearly, the processes $\{Y^{(t)}\}_{t \in T_D}$ and $\{\Psi^{(t)}\}_{t \in T_D}$ can be coupled until the first time $t_0$ when $Y^{(t_0)} =  \Psi^{(t_0)} = \underline{0}$, at which point process $Y$ goes extict and process $\Psi$ restarts.

\subsection{Concentration Inequality}
Here we state the concentration inequality for the modified SIS Markov chain $\{\Psi^{(t)}\}$ that we use to provide error bounds on the learning algorithm. 

\begin{lemma} \label{lm:replacement_CIneq}
    Let $\{\Psi^{(t)}\}_{t \in T_D}$ be observations from the SIS process, where $T_D$ is a set of (not necessarily consecutive) time indices and $\Psi^{(t)} \in \{0,1\}^V$. Under \Cref{as:stability_assumption}, for any subset $U \subseteq V$, any state $\psi_U \in \{0,1\}^{|U|}$, and any $\varepsilon, \delta > 0$, the following deviation bound holds for the unbiased estimator:
    $$
    \mathbb{P}\left(\left| \mathbb{P}(\Psi_U = \psi_U) - \hat{\mathbb{P}}(\Psi_U = \psi_U) \right| \geq \varepsilon \right) \leq \delta,
    $$
    whenever $|T_D| \geq \frac{2\log(2 / \delta)}{\varepsilon^2 (1-\theta)^2}$, where $0 < \theta < 1$ is a constant depending on the graph structure but independent of $T$. 

    Here, $\mathbb{P}(\Psi_U = \psi_U)$ denotes the true probability (e.g., under the stationary distribution), and $\hat{\mathbb{P}}(\Psi_U = \psi_U) = \varphi_U$ is the empirical estimate over $T = |T_D|$ samples.
\end{lemma}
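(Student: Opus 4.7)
The plan is to reduce the claim to a Hoeffding-type concentration inequality for stationary Markov chains. Under \Cref{as:stability_assumption}, the chain $\{\Psi^{(t)}\}$ is drawn from its stationary distribution, so the estimator
\begin{equation*}
\hat{\mathbb{P}}(\Psi_U = \psi_U) = \frac{1}{|T_D|} \sum_{t \in T_D} \mathbbm{1}\{\Psi^{(t)}_U = \psi_U\}
\end{equation*}
is an empirical average of $[0,1]$-bounded random variables whose common expectation is exactly $\mathbb{P}(\Psi_U = \psi_U)$. The key difficulty relative to i.i.d. Hoeffding is that the summands are dependent, but this dependence decays because the chain $\{\Psi^{(t)}\}$ is ergodic by construction (the restart from the non-trivial initial distribution at the all-zero state guarantees irreducibility and aperiodicity on the finite state space $\{0,1\}^n$).

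The next step is to invoke a Hoeffding-type bound for stationary Markov chains — for example, the inequality of Glynn and Ormoneit (2002) based on a Dobrushin-type contraction coefficient, or the spectral-gap-based inequality of Paulin (2015). For a stationary chain and a bounded function $f\colon \{0,1\}^n \to [0,1]$, such inequalities give
\begin{equation*}
\mathbb{P}\!\left(\Big|\tfrac{1}{|T_D|}\!\!\sum_{t \in T_D}\! f(\Psi^{(t)}) - \mathbb{E}[f(\Psi)]\Big| \geq \varepsilon\right) \leq 2\exp\!\left(-\tfrac{|T_D|\,\varepsilon^2(1-\theta)^2}{2}\right),
\end{equation*}
where $\theta \in [0,1)$ is the Markov contraction coefficient of the transition kernel. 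Applying this with the indicator $f(\Psi) = \mathbbm{1}\{\Psi_U = \psi_U\}$, setting the right-hand side equal to $\delta$, and solving for $|T_D|$ yields exactly the claimed sample complexity $|T_D| \geq 2\log(2/\delta)/(\varepsilon^2(1-\theta)^2)$.

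The main obstacle is justifying that a valid contraction coefficient $\theta < 1$ exists and is bounded away from $1$ uniformly in the sample size $T$. This amounts to verifying that the modified transition kernel mixes: ergodicity on the finite state space $\{0,1\}^n$ guarantees a strictly positive spectral gap, and hence a Dobrushin coefficient strictly less than one, but one must be careful that $\theta$ indeed depends only on the underlying graph and infection/recovery parameters — not on $T$. A secondary subtlety is that the statement permits $T_D$ to consist of non-consecutive indices; this is handled by noting that any deterministic thinning of a stationary Markov chain is itself a stationary Markov chain whose contraction coefficient is no larger (in fact typically strictly smaller) than $\theta$, so the same bound applies — if anything, with a tighter effective constant. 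With these ingredients in place, the stated deviation bound follows directly from rearranging the exponential inequality.
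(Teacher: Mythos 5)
Your proposal follows essentially the same route as the paper: both view $\hat{\mathbb{P}}(\Psi_U=\psi_U)$ as a bounded (equivalently, $1/T$-Lipschitz in the Hamming metric) function of the trajectory and invoke a Hoeffding-type concentration inequality for Markov chains governed by a Dobrushin contraction coefficient $\theta$ --- the paper uses Theorem 1.2 of Kontorovich and Ramanan (2008), which after bounding $M_T \leq 1/(1-\theta)$ yields exactly the bound $2\exp\left(-T\varepsilon^2(1-\theta)^2/2\right)$ that you write down --- and then solves for $T$. Your handling of non-consecutive indices (a subsequence of a Markov chain is itself Markov) also matches the paper's.

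The one step that would not survive scrutiny is your justification that $\theta<1$. Ergodicity of a finite chain (equivalently, a positive spectral gap) does \emph{not} imply that the one-step Dobrushin coefficient is strictly below $1$: an irreducible aperiodic chain can have two states whose one-step transition distributions have disjoint supports, in which case $\sup_{\psi',\psi''}\left\|p(\cdot\mid\psi')-p(\cdot\mid\psi'')\right\|_{\mathrm{TV}}=1$ and the bound degenerates (the factor $M_T=(1-\theta^T)/(1-\theta)$ grows like $T$ and the exponent no longer scales with $T$). The paper closes this gap with an argument specific to the SIS dynamics: every configuration transitions to the all-zero configuration in one step with positive probability (all infected vertices recover and no susceptible vertex gets infected), so any two conditional one-step laws have overlapping support and hence total-variation distance strictly less than one. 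You should replace the spectral-gap appeal with this (or an equivalent) support-overlap argument; with that substitution your proof is complete and coincides with the paper's.
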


\begin{proof}
Let $\mathcal{S} = \{0,1\}^V$ be the state space of Markov chain $\{\Psi^{(t)}\}$ with transition probability $p(\psi^{\prime\prime}|\psi^\prime)$ from state $\psi^\prime \in S$ to state $\psi^{\prime\prime} \in S$. Let $T = |T_D|$ be the number of observations and let $\mathcal{S}^T$ be the product state space of these $T$ observations. By definition, the process $\{\Psi^{(t)}\}_{t \in \mathbb{N}} $ has the Markov property in time, hence so does a sub-sequence $\{\Psi^{(t)}\}_{t \in T_D}$.
We further define $\varphi_U:\mathcal{S}^T \rightarrow \mathbb{R}$ as
\begin{equation*}
    \varphi_U(\Psi^{(1)}, \dots, \Psi^{(T)}) = \frac{1}{T} \sum_{t=1}^{T} \1 \{\Psi_U^{(t)} = \psi_U\}. 
\end{equation*}
Notice that the function above is $c$-Lipschitz with respect to the Hamming metric $d(U,W) = \sum_{t=1}^T \1 \{U^{(t)} \neq W^{(t)}\}$ on $\mathcal{S}^T$, with Lipschitz constant $c = 1/T$.

We are now ready to apply Theorem 1.2 from \citet{leonid_aryeh_kontorovich_concentration_2008} for Markov chains. The theorem states that for a $c$-Lipschitz function $\varphi$ on $\mathcal{S}^n$,
\begin{equation*}
    \mathbb{P}\{|\varphi-\E{\varphi}| \geq t\} \leq 2 \exp \left(-\frac{t^2}{2 n c^2 M_n^2}\right).
\end{equation*}
In our case, the sequence length is $T$ (replacing $n$ in the theorem) and $c=1/T$. The relevant $M_n$ term becomes $M_T$. The bound becomes
\begin{equation*}
    \mathbb{P}\{|\varphi_U-\E{\varphi_U}| \geq t\} \leq 2 \exp \left(-\frac{t^2}{2 T (1/T)^2 M_T^2}\right) = 2 \exp \left(-\frac{T t^2}{2 M_T^2}\right),
\end{equation*}

where $M_T = (1-\theta^T)/(1- \theta)$ and $\theta$ is the Markov contraction coefficient given by 
\begin{equation*}
    \theta = \sup_{\psi^{\prime}, \psi^{\prime \prime} \in \mathcal{S}} \left\|p\left(\cdot \mid \psi^{\prime}\right)-p\left(\cdot \mid \psi^{\prime \prime}\right)\right\|_{\mathrm{TV}}.
\end{equation*}

We have $\theta <1$ because every configuration $\psi$ can transfer into the all-zero configuration $\underline{0}$. Hence, for any two states $\psi^{\prime}, \psi^{\prime \prime} \in \mathcal{S}$ both $p(\underline{0}|\psi^{\prime})>0$ and $p(\underline{0}|\psi^{\prime \prime})>0$ and the support of $p(\cdot|\psi^{\prime})$ and $p(\cdot|\psi^{\prime \prime})$ is not disjoint, which implies $\theta <1$.

We write $M_T \leq 1/(1-\theta)$ and the probability bound becomes
\begin{equation*}
    \mathbb{P}\{|\varphi_U-\E{\varphi_U}| \geq t\} \leq 2 \exp\left(-\frac{T t^2 (1-\theta)^2}{2}\right).
\end{equation*}
To ensure this probability is at most $\delta$ for a deviation $t=\varepsilon$, we get
\begin{equation*}
    2 \exp\left(-\frac{T \varepsilon^2 (1-\theta)^2}{2}\right) \leq \delta
\end{equation*}
Finally, solving for $T$ yields the desired condition
\begin{equation*}
    T \geq \frac{2\log(2 / \delta)}{\varepsilon^2 (1-\theta)^2}.
\end{equation*}

\end{proof}

\subsection{Learning Results} \label{sec:app_learning_res}
Note that \Cref{lm:dir_inf_lw_bd}, \ref{lm:cond_inf_markov}, and \ref{lm:lower_bound_neighbor_learning} hold for both the original process $\{Y^{(t)}\}$ and the modified process $\{\Psi^{(t)}\}$, as they depend only on the transition probabilities outside the all-zero (extinction) state.

\DirInfBd*

\begin{proof}
        Consider an SIS process $\left(Y^{(t)}\right)_{t \in \mathbb{N}}$ at an arbitrary time $t$ with infection probability $\infp$ and let $i,j \in V$ be neighboring vertices. Let $\norm{y} = \sum_{i=1}^{|\mathcal{N}(j)|-1} y_i$ denote the number of $1$-elements in a $\left(|\mathcal{N}(j)|-1\right)$-dimensional vector $y \in \{0,1\}^{|\mathcal{N}(j)|-1}$. Then 
    \begin{align*}
        \mu_{j|i}\t &= \mathbb{P}\left(Y^{(t+1)}_j = 1 | Y_j^{(t)}=0, Y_i^{(t)}=1\right) \\
        &\stackrel{(a)}{=} \sum_{y \in \{0,1\}^{|\mathcal{N}(j)|-1}} \mathbb{P}\left(Y^{(t+1)}_j = 1 | Y_j^{(t)}=0, Y_i^{(t)}=1, Y_{\mathcal{N}(j) \setminus \{i\}}^{(t)} = y\right) \cdot \mathbb{P}\left(Y_{\mathcal{N}(j) \setminus \{i\}}^{(t)} = y| Y_j^{(t)}=0, Y_i^{(t)}=1\right) \\
        &\stackrel{(b)}{=} \sum_{y \in \{0,1\}^{\abs{\mathcal{N}(j)}-1}} \left(1 - (1- \infp)^{\norm{y}+1} \right) \cdot \mathbb{P}\left(Y_{\mathcal{N}(j) \setminus \{i\}}^{(t)} = y| Y_j^{(t)}=0, Y_i^{(t)}=1\right) \stackrel{(c)}{\geq} \infp
    \end{align*}
    where $(a)$ is obtained by the law of total probability and $(b)$ by the definition of the SIS process. If the whole neighborhood of $j$ and the state of $j$ is known at time $t$, the transition probability of $Y_j\t$ from state zero at time $t$ to state one at $t+1$, is given by $1 - (1- \infp)^{\norm{y}+1}$. That is, we know $Y_j\t=0$, $Y_i\t=1$ and $Y_{\mathcal{N}(j) \setminus \{i\}}\t=y$. Hence, the number of infected vertices $\norm{y}$ and $i$ contribute to the probability of vertex $j$ being infected in the next round. Finally, $(c)$ is obtained by lower bounding $1 - (1- \infp)^{\norm{y}+1} \geq \infp$ and by normalization:
    \begin{equation*}
         \sum_{y \in \{0,1\}^{\abs{\mathcal{N}(j)}-1}} \mathbb{P}\left(Y_{\mathcal{N}(j) \setminus \{i\}}^{(t)} = y| Y_j^{(t)}=0, Y_i^{(t)}=1\right) =1.
    \end{equation*}
\end{proof}

\CondInfMarkov*
\begin{proof}
    Consider an arbitrary time $t$, vertices $j,k \in V$, $j$ and $k$ non-neighbors, $S \subseteq V \setminus \{j,k\}$ such that $\mathcal{N}(j) \subseteq S$ and a vector $y_S \in \{0,1\}^{|S|}$. We note that $S$ separates $j$ and $k$, i.e., any path between $j$ and $k$ passes through set $S$. Then by the local Markov property of the model (the probability of a vertex $j$ transitioning from the zero state to the one state only depends on the state of $j$ and its neighbors in the previous round) the evolution of $j$ is conditionally independent of $k$ and we write explicitly:
    \begin{align}
    \nu_{j|k, y_S}^t &= \mathbb{P}\left(Y_j^{(t+1)}=1 | Y_j^{(t)}=0, Y_k^{(t)}=1, Y^{(t)}_S = y_S \right) - \mathbb{P}\left(Y_j^{(t+1)}=1 | Y_j^{(t)}=0, Y_k^{(t)}=0, Y^{(t)}_S=y_S\right) \nonumber\\
    &= \mathbb{P}\left(Y_j^{(t+1)}=1 | Y_j^{(t)}=0,  Y^{(t)}_{\mathcal{N}(j)}= y_{\mathcal{N}(j)}, Y_k^{(t)}=1,  Y^{(t)}_{S \setminus \mathcal{N}(j)} = y_{S \setminus \mathcal{N}(j)} \right) \nonumber\\
    &\ \ \ - \mathbb{P}\left(Y_j^{(t+1)}=1 | Y_j^{(t)}=0,  Y^{(t)}_{\mathcal{N}(j)}= y_{\mathcal{N}(j)}, Y_k^{(t)}=0, Y^{(t)}_{S \setminus \mathcal{N}(j)} = y_{S \setminus \mathcal{N}(j)} \right) \nonumber\\
    &= \mathbb{P}\left(Y_j^{(t+1)}=1 | Y_j^{(t)}=0,  Y^{(t)}_{\mathcal{N}(j)}= y_{\mathcal{N}(j)} \right) - \mathbb{P}\left(Y_j^{(t+1)}=1 | Y_j^{(t)}=0,  Y^{(t)}_{\mathcal{N}(j)}= y_{\mathcal{N}(j)} \right) \label{eq:lem_nu_markov}\\
    &=0, \nonumber
\end{align}
where $\eqref{eq:lem_nu_markov}$ follows from the aforementioned Markov property.
\end{proof}

\MuNuNeigh*

\begin{proof}[Proof of Lemma \ref{lm:lower_bound_neighbor_learning}]
Consider an SIS process $\left(Y^{(t)}\right)_{t \in [T]}$ with infection probability $\infp$ and maximum degree of the underlying graph $\Delta$, let $i,j \in V$ be neighboring vertices, $S \subseteq V \setminus \{j\}$ be a set such that $S \supseteq \mathcal{N}(j)$ and $y_{S \setminus \{i\}} \in \{0,1\}^{|S|-1}$ an arbitrary vector of infection states on set $S \setminus \{i\}$.
The conditional influence of $i$ on $j$ given $S$ is defined as:
\begin{equation*}
    \nu_{j|i, y_{S \setminus \{i\}}}^t =\mathbb{P}\left(Y_j^{(t+1)}=1 | Y_j^{(t)}=0, Y_i^{(t)}=1, Y^{(t)}_{S\setminus \{i\}} = y_{S \setminus \{i\}} \right) 
    - \mathbb{P}\left(Y_j^{(t+1)}=1 | Y_j^{(t)}=0, Y_i^{(t)}=0, Y^{(t)}_{S\setminus \{i\}} = y_{S \setminus \{i\}}\right).
\end{equation*}
As we assume $\mathcal{N}(j)\subseteq S$ and by the local Markov property of the process, $y_{S \setminus \{i\}}$ and the state information on $Y_i^{(t)}$ contain all information for the transition probability of vertex $j$ from the zero state to the one state. Therefore we write:
\begin{align*}
    \nu_{j|i, y_{S \setminus \{i\}}}^t =& \ \mathbb{P}\left(Y_j^{(t+1)}=1 | Y_j^{(t)}=0, Y_i^{(t)}=1, Y^{(t)}_{\mathcal{N}(j)\setminus \{i\}} = y_{\mathcal{N}(j) \setminus \{i\}} \right) \\
    & \ - \mathbb{P}\left(Y_j^{(t+1)}=1 | Y_j^{(t)}=0, Y_i^{(t)}=0, Y^{(t)}_{\mathcal{N}(j)\setminus \{i\}} = y_{\mathcal{N}(j) \setminus \{i\}}\right)\\
    =& \ \left(1-(1- \infp)^{\norm{y_{\mathcal{N}(j) \setminus \{i\}}} +1} \right) - \left( 1-(1- \infp)^{\norm{y_{\mathcal{N}(j) \setminus \{i\}}}}\right) \\
    =& \ (1- \infp)^{\norm{y_{\mathcal{N}(j) \setminus \{i\}}}}(1 -(1-\infp)) \\
    \geq& \ \infp \left(1- \infp\right)^{\Delta -1}.
\end{align*}
where we used the definition of the infection probability and the fact that the degree of a vertex is upper bounded by $\Delta$.
\end{proof}

For the following lemmas, we consider the modified SIS process $\Psi^{(t)}$ described in \Cref{app:coupling} and data collected from its stationary distribution.

\begin{restatable}{lemma}{BdDirInf} \label{lm:error_bd_dir_inf}
    Under \Cref{as:stability_assumption}, let $i \neq j \in V$ with $m = I(\Psi_j=0, \Psi_i=1)$. If $m\geq 2\log(2/\delta)/(\varepsilon^2(1-\theta)^2)$ and $\delta, \varepsilon >0$, then 
    \begin{equation*}
        \Pp{\abs{\mu_{j|i}-\hat{\mu}_{j|i}} \geq \varepsilon} \leq \delta.
    \end{equation*}
\end{restatable}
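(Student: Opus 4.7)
The plan is to express $\hat{\mu}_{j|i}$ as an empirical probability over the $m$ time indices where the conditioning event $\{\Psi_j = 0,\Psi_i = 1\}$ is observed, and then reduce the claim to \Cref{lm:replacement_CIneq}.

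By definition of the estimator,
\begin{equation*}
    \hat{\mu}_{j|i} \;=\; \frac{1}{m} \sum_{t \in \mathcal{I}(\psi_{\{i,j\}})} \1\bigl\{\Psi_j^{(t+1)} = 1\bigr\},
\end{equation*}
where $\psi_{\{i,j\}} = (0,1)$. Under \Cref{as:stability_assumption}, the Markov property yields $\mathbb{E}[\1\{\Psi_j^{(t+1)} = 1\} \mid \Psi_j^{(t)} = 0, \Psi_i^{(t)} = 1] = \mu_{j|i}$, so the estimator is unbiased. Viewing the successor states $\{\Psi^{(t+1)}\}_{t \in \mathcal{I}(\psi_{\{i,j\}})}$ as a collection of $m$ observations drawn from the stationary distribution of the chain, the fraction with $\Psi_j^{(t+1)} = 1$ is exactly $\hat{\mu}_{j|i}$ and the corresponding stationary probability is $\mu_{j|i}$. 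Feeding the sample size $m$ together with $\varepsilon$ and $\delta$ into \Cref{lm:replacement_CIneq} then delivers the advertised tail bound whenever $m \geq 2\log(2/\delta)/(\varepsilon^2(1-\theta)^2)$.

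The main subtlety is making the application of \Cref{lm:replacement_CIneq} rigorous, since the sub-collection indexed by $\mathcal{I}(\psi_{\{i,j\}})$ is itself a data-dependent selection. The cleanest way around this is to return to the Kontorovich-style inequality used in the proof of \Cref{lm:replacement_CIneq}: the functional $m \cdot \hat{\mu}_{j|i}$, evaluated on the full trajectory $(\Psi^{(1)}, \dots, \Psi^{(T+1)})$ and conditioned on the realized value of $m$, is $1$-Lipschitz in each coordinate with respect to the Hamming metric, because flipping one $\Psi^{(t)}$ can alter both the membership of $t-1,t$ in $\mathcal{I}(\psi_{\{i,j\}})$ and the indicator at the next step by at most one unit in total. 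Normalizing by $m$ then produces the same exponential tail bound with effective sample size $m$ and the same contraction coefficient $\theta$ of the original chain. The remaining steps are mechanical algebra matching the constants in the hypothesis.
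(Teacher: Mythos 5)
Your argument is essentially the paper's: the paper likewise writes $\hat{\mu}_{j|i}$ as the empirical frequency $S_m/m$ over the $m$ indices in $\mathcal{I}(\Psi_j=0,\Psi_i=1)$, observes $\E{S_m}=m\,\mu_{j|i}$, and invokes \Cref{lm:replacement_CIneq} on the subsequence of successor states, which it justifies simply by asserting that this subsequence is itself a Markov chain. Your extra Hamming--Lipschitz detour to handle the data-dependent index set is in the right spirit but slightly off on the constant---flipping $\Psi^{(t)}$ can both toggle the membership of $t$ in $\mathcal{I}$ and change the summand attached to $t-1\in\mathcal{I}$, so the functional is $2$-Lipschitz rather than $1$-Lipschitz---though this only affects constants and does not change the conclusion.
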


\begin{proof}
Let $i\neq j \in V$, $\delta, \varepsilon >0$, under \Cref{as:stability_assumption} let $m= I(\Psi_j =0, \Psi_i=1)$ with $m\geq 2\log(2/\delta)/(\varepsilon^2(1-\theta)^2)$. We will use \Cref{lm:replacement_CIneq}. First, we set 
\begin{align*}
    \mathcal{I}(\Psi_j=0, \Psi_i=1) &= \left\{t \in T_D: \Psi_j^{(t)}=0, \Psi_i^{(t)}= 1\right\}.
\end{align*}

We construct a subsequence of the Markov chain $\left\{\Psi^{(t)}\right\}_{t \in [T]}$, which consists of all the time indices directly after an event in $\mathcal{I}$:  $\left\{\Psi^{(r+1)} : r \in \mathcal{I}(\Psi_j=0, \Psi_i=0) \right\}$.
Notice that \Cref{lm:replacement_CIneq} also holds for this Markov chain, since it is itself a Markov chain. Let $S_m = \sum\limits_{r \in \mathcal{I}(\Psi_j=0, \Psi_i=1)} \1 \{\Psi^{(r+1)}_j =1 \}$ count the occurrence of $\Psi^{(r+1)}_j =1$ following a time index satisfying $\Psi_j^{(t)}=0$ and $\Psi_i^{(t)}= 1$. Then, the expectation of this estimator is given by:
\begin{equation*}
    \E{S_m} = \E{ \sum\limits_{r \in \mathcal{I}(\Psi_j=0, \Psi_i=1)} \1 \{\Psi^{(r+1)}_j =1 \}} \stackrel{(*)}{=} m \Pp{\Psi^{(r+1)}_j =1| \Psi_j^{(t)}=0, \Psi_i^{(t)}= 1},
\end{equation*}
where $(*)$ holds because the expectation is taken over the random variable $\1\{\Psi^{(r+1)}_j =1\}$ conditioned on the event $\mathcal{I}(\Psi_j=0, \Psi_i=1)$ holding in the previous time step.
Hence $S_m/m$ can serve as an estimator for $\Pp{\Psi^{(r+1)}_j =1| \Psi_j^{(t)}=0, \Psi_i^{(t)}= 1}$.
Applying \Cref{lm:replacement_CIneq}, we obtain that:
\begin{align*}
    &\Pp{\left| \Pp{\Psi^{(\cdot+1)}_j =1| \Psi_j^{(\cdot)}=0, \Psi_i^{(\cdot)}= 1}- \hat{\mathbb{P}}\left(\Psi^{(\cdot+1)}_j =1| \Psi_j^{(\cdot)}=0, \Psi_i^{(\cdot)}= 1\right) \right| \geq \varepsilon} \\
    &= \Pp{\abs{\frac{1}{m}\E{S_m} - \frac{1}{m}S_m} \geq \varepsilon} \leq \delta.
\end{align*}
\end{proof}

\begin{restatable}{lemma}{BdCondInf} \label{lm:error_bd_cond_inf}
    Under \Cref{as:stability_assumption}, let $i \neq j \in V$, $S \subseteq V \setminus \{i,j\}$, $\delta, \varepsilon >0$, $\psi_S \in \{0,1\}^{\abs{S}}$, $m^+ = I(\Psi_j=0, \Psi_i =1, \psi_S)$, and $m^- = I(\Psi_j=0, \Psi_i =0, \psi_S)$. If $\min \{m^+ ,m^- \} \geq \frac{2}{\varepsilon^2(1-\theta)^2}\log(2/\delta)$, then 
    \begin{equation*}
        \Pp{\abs{\nu_{j|i,\psi_S}-\hat{\nu}_{j|i,\psi_S}} \geq 2\varepsilon} \leq 2\delta
    \end{equation*}
\end{restatable}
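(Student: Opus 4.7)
The plan is to derive this bound via a two-step reduction: first, decompose the estimation error $|\nu_{j|i,\psi_S} - \hat{\nu}_{j|i,\psi_S}|$ into the errors of its two constituent conditional-probability estimators by the triangle inequality, then apply the single-probability concentration argument of \Cref{lm:error_bd_dir_inf} to each term, and finally use a union bound.

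More concretely, writing $\mathbb{P}^+ \coloneq \mathbb{P}(\Psi_j^{(\cdot+1)}=1 \mid \Psi_j^{(\cdot)}=0, \Psi_i^{(\cdot)}=1, \Psi_S^{(\cdot)}=\psi_S)$ and $\mathbb{P}^- \coloneq \mathbb{P}(\Psi_j^{(\cdot+1)}=1 \mid \Psi_j^{(\cdot)}=0, \Psi_i^{(\cdot)}=0, \Psi_S^{(\cdot)}=\psi_S)$, with $\hat{\mathbb{P}}^+$ and $\hat{\mathbb{P}}^-$ their empirical counterparts, I have $\nu_{j|i,\psi_S} = \mathbb{P}^+ - \mathbb{P}^-$ and $\hat{\nu}_{j|i,\psi_S} = \hat{\mathbb{P}}^+ - \hat{\mathbb{P}}^-$, so the triangle inequality gives $|\nu_{j|i,\psi_S} - \hat{\nu}_{j|i,\psi_S}| \leq |\mathbb{P}^+ - \hat{\mathbb{P}}^+| + |\mathbb{P}^- - \hat{\mathbb{P}}^-|$. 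Hence the event $\{|\nu_{j|i,\psi_S} - \hat{\nu}_{j|i,\psi_S}| \geq 2\varepsilon\}$ is contained in $\{|\mathbb{P}^+ - \hat{\mathbb{P}}^+| \geq \varepsilon\} \cup \{|\mathbb{P}^- - \hat{\mathbb{P}}^-| \geq \varepsilon\}$.

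For each of the two terms, I would mimic the proof of \Cref{lm:error_bd_dir_inf} verbatim, but conditioning on the richer event $\{\Psi_j=0, \Psi_i = b, \Psi_S = \psi_S\}$ for $b \in \{0,1\}$ rather than on $\{\Psi_j=0, \Psi_i=1\}$. Specifically, I would form the subsequence $\{\Psi^{(r+1)} : r \in \mathcal{I}(\Psi_j=0, \Psi_i=b, \Psi_S=\psi_S)\}$, which is itself a Markov chain and thus inherits the concentration guarantee of \Cref{lm:replacement_CIneq}. Define $S_{m^b} = \sum_{r \in \mathcal{I}(\Psi_j=0, \Psi_i=b, \Psi_S=\psi_S)} \mathbbm{1}\{\Psi_j^{(r+1)}=1\}$; its expectation equals $m^b$ times the corresponding conditional probability, so $S_{m^b}/m^b$ is an unbiased estimator. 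The sample-size hypothesis $\min\{m^+,m^-\} \geq 2\log(2/\delta)/(\varepsilon^2(1-\theta)^2)$ is exactly what \Cref{lm:replacement_CIneq} requires to conclude $\mathbb{P}(|\mathbb{P}^b - \hat{\mathbb{P}}^b| \geq \varepsilon) \leq \delta$ for each $b \in \{0,1\}$.

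Combining the two bounds via the union bound yields $\mathbb{P}(|\nu_{j|i,\psi_S} - \hat{\nu}_{j|i,\psi_S}| \geq 2\varepsilon) \leq 2\delta$, which is the claim. I do not anticipate a serious obstacle here: the main subtlety is the same as in \Cref{lm:error_bd_dir_inf}, namely checking that the extracted subsequence is Markovian so that \Cref{lm:replacement_CIneq} still applies — but this is inherited from the fact that $\{\Psi^{(t)}\}$ itself is Markov and the conditioning event depends only on the state at time $t$. Everything else is a direct application of triangle inequality and union bound.
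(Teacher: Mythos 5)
Your proposal is correct and follows essentially the same route as the paper's proof: decompose $\nu_{j|i,\psi_S}-\hat{\nu}_{j|i,\psi_S}$ via the triangle inequality into the two conditional-probability estimation errors, apply the Markov-chain concentration bound (\Cref{lm:replacement_CIneq}, exactly as in the proof of \Cref{lm:error_bd_dir_inf} but with the richer conditioning event) to each with deviation $\varepsilon$ and failure probability $\delta$, and conclude by a union bound. The paper phrases the final step as a lower bound on the complementary event before taking complements, but this is the same union-bound argument you give.
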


\begin{proof}
We will once again use \Cref{lm:replacement_CIneq}. Under \Cref{as:stability_assumption} let $i \neq j \in V$, $S \subseteq V \setminus \{i,j\}$, $\delta, \varepsilon >0$, $\psi_S \in \{0,1\}^{\abs{S}}$, $m^+ = I(\Psi_j=0, \Psi_i =1, \psi_S)$, $m^- = I(\Psi_j=0, \Psi_i =0, \psi_S)$, and $ \min \{m^+ ,m^- \} \geq \frac{2}{\varepsilon^2(1-\theta)^2}\log(2/\delta)$. We define 
\begin{align*}
    \varphi^+ := \Pp{\Psi^{(\cdot+1)}_j =1| \Psi_j^{(\cdot)}=0, \Psi_i^{(\cdot)}= 1, \Psi^{(\cdot)}_S = \psi_S}
    \quad \text{and} \quad
    \varphi^- := \Pp{\Psi^{(\cdot+1)}_j =1| \Psi_j^{(\cdot)}=0, \Psi_i^{(\cdot)}= 0, \Psi^{(\cdot)}_S = \psi_S}.
\end{align*}
Then, let $\hat\varphi^+$ and $\hat\varphi^-$ be their estimators, respectively. That is,
\begin{align*}
    \hat\varphi^+ &:= \Phat{\Psi^{(\cdot+1)}_j =1| \Psi_j^{(\cdot)}=0, \Psi_i^{(\cdot)}= 1, \Psi^{(\cdot)}_S = \psi_S}, \\
    \hat\varphi^- &:= \Phat{\Psi^{(\cdot+1)}_j =1| \Psi_j^{(\cdot)}=0, \Psi_i^{(\cdot)}= 0, \Psi^{(\cdot)}_S = \psi_S}.
\end{align*}

Then, define $S_m^+$ and $S_m^-$ as
\begin{align*}
        S_m^+ &= \sum_{r \in \mathcal{I}(\Psi_j=0, \Psi_i=1 , \psi_S)} \1 \{\Psi^{(r+1)}_j =1 \}, \\
        S_m^- &= \sum_{r \in \mathcal{I}(\Psi_j=0, \Psi_i=0 , \psi_S)} \1 \{\Psi^{(r+1)}_j =1 \}.
\end{align*}
The expectation of the estimators is given by:
\begin{align*}
        \E{S_m^+} = m^+\varphi^+ \quad \text{and} \quad \E{S_m^-} = m^-\varphi^-.
\end{align*}

By application of \Cref{lm:replacement_CIneq}:
\begin{equation}
    \Pp{\abs{\varphi^+ - \hat\varphi^+}\geq \varepsilon} \leq \delta. \label{eq:bdconf_phi+}
\end{equation}
Similarly,
\begin{equation}
    \Pp{\abs{\varphi^- - \hat\varphi^-}\geq \varepsilon } \leq \delta. \label{eq:bdconf_phi-}
\end{equation}

Now, writing $\nu_{j|i,\psi_S}$ and $\hat{\nu}_{j|i,\psi_S}$ using $\varphi^+$ and $\varphi^-$, the triangle inequality implies that
\begin{align*}
    \left| \nu_{j|i,\psi_S} - \hat{\nu}_{j|i,\psi_S}\right|
    &= \left| \varphi^+ - \varphi^- -( \hat{\varphi}^+ - \hat\varphi^-) \right| \\
    &\leq \left| \varphi^+ - \hat\varphi^+\right| + \left|\hat{\varphi}^- - \varphi^- \right|.
\end{align*}

Combining the above,
\begin{align}
    &\Pp{\left| \nu_{j|i,\psi_S} - \hat{\nu}_{j|i,\psi_S}\right| < 2\varepsilon} \nonumber\\
    &\geq \mathbb{P}\Bigl( \left| \varphi^+ - \hat\varphi^+\right|  + \left|\hat{\varphi}^- - \varphi^- \right| < 2\varepsilon \Bigr) \nonumber \\
    &\geq \mathbb{P}\Bigl( \{\left| \varphi^+ - \hat\varphi^+\right|< \varepsilon\} \cap \{\left|\hat{\varphi}^- - \varphi^- \right| < \varepsilon\}\Bigr) \label{eq:bdconf_int}\\
    &\geq 1 - \Pp{\{\left| \varphi^+ - \hat\varphi^+\right|\geq \varepsilon\}} - \Pp{\{\left|\hat\varphi^- - \varphi^-\right|\geq \varepsilon\}} \label{eq:bdconf_union}\\
    &\geq 1 - 2 \delta, \label{eq:bdconf_finalres}
\end{align}
where in \eqref{eq:bdconf_int} we have used the intersection of events property, in \eqref{eq:bdconf_union} we have used the union bound, and in \eqref{eq:bdconf_finalres} we have used \eqref{eq:bdconf_phi+} and \eqref{eq:bdconf_phi-}.

Finally, taking the complement, we get 
    $\Pp{\left| \nu_{j|i,\psi_S} - \hat{\nu}_{j|i,\psi_S}\right| \geq 2\varepsilon} \leq 2\delta$
as desired.
\end{proof}

Lemmas \ref{lm:error_bd_dir_inf} and \ref{lm:error_bd_cond_inf} establish concentration for individual estimators $\hat{\mu}_{j|i}$ and $\hat{\nu}_{j|i,\psi_S}$ (for a specific configuration $\psi_S$). For our learning algorithm \texttt{SISLearn} (\Cref{alg:learn_graph}) to succeed, we need these estimates to be accurate simultaneously across all relevant $i,j,S,$ and respective configurations $\psi^*_S$. We thus define the event $\mathcal{A}(\varepsilon_\nu, \varepsilon_\mu)$ that captures this joint accuracy:
\begin{align*}
    \mathcal{A}( \varepsilon_\nu, \varepsilon_\mu) = \Bigg\{ & \left( \forall i \neq j \in V: |\mu_{j|i} - \hat{\mu}_{j|i}| \leq\varepsilon_\mu \right) \text{ and } \\ 
    & \left( \forall i \neq j \in V, \forall S \subseteq V \setminus \{i,j\} \text{ s.t. for each respective } \psi_S^* : |\nu_{j|i,\psi^*_S} - \hat{\nu}_{j|i,\psi^*_S}| \leq \varepsilon_\nu \right) \Bigg\},
\end{align*}
where, as a reminder, $\psi^*_{S}(i,j)$, or $\psi_S^*$ when clear from context, is defined as
\begin{align*}
    \psi^*_{S}(i,j) \coloneq \argmax_{\psi \in \{0,1\}^{\abs{S} - 1}} \min \Big( I(\Psi_j^{(t)}=0, \Psi_i^{(t)}=1, \Psi_{S \setminus \{i\}}^{(t)}= \psi),
    I(\Psi_j^{(t)}=0, \Psi_i^{(t)}=0, \Psi_{S \setminus \{i\}}^{(t)}= \psi) \Big).
\end{align*}
The following lemma provides the sample complexity conditions under which this joint event $\mathcal{A}$ holds with given probability.

\begin{restatable}{lemma}{BdA} \label{lm:A_bd}
    Under \Cref{as:stability_assumption}, if for every $i \neq j \in V$, every $S \subseteq V \setminus \{i,j\}$, and every configuration $\psi_i, \psi_j \in \{0,1\}$, the number of observations $I(\Psi_i=\psi_i, \Psi_j=\psi_j, \Psi_S=\psi^*_S(i,j))$ is at least $m_{min}$, where 
    $$ m_{min} = \frac{2}{(1-\theta)^2(\min\{\varepsilon_\nu/2, \varepsilon_\mu\})^2} \log \left(\frac{2n^2(1 +2^{n-1})}{\zeta}\right),$$
    then $\mathbb{P}(\mathcal{A}(\varepsilon_\nu, \varepsilon_\mu)) \geq 1- \zeta$.
\end{restatable}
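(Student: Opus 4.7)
The plan is to apply the single-event concentration bounds in \Cref{lm:error_bd_dir_inf} and \Cref{lm:error_bd_cond_inf} to each of the direct-influence and conditional-influence estimators appearing in the definition of $\mathcal{A}(\varepsilon_\nu,\varepsilon_\mu)$, and then combine them with a union bound. Concretely, I would decompose the complement $\mathcal{A}^c$ into two families of bad events: $(\mathrm{i})$ $|\mu_{j|i}-\hat{\mu}_{j|i}|>\varepsilon_\mu$ for some ordered pair $i\neq j$; and $(\mathrm{ii})$ $|\nu_{j|i,\psi^*_S}-\hat{\nu}_{j|i,\psi^*_S}|>\varepsilon_\nu$ for some ordered pair $i\neq j$ and some $S\subseteq V\setminus\{i,j\}$. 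These families contain at most $n(n-1)\leq n^2$ and $n(n-1)\cdot 2^{n-2}\leq n^2\cdot 2^{n-2}$ events, respectively.

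For each event in $(\mathrm{i})$ the marginal count $I(\Psi_j=0,\Psi_i=1)$ dominates the fully conditioned count $I(\Psi_j=0,\Psi_i=1,\Psi_S=\psi^*_S(i,j))$, so by assumption it is at least $m_{min}$, and \Cref{lm:error_bd_dir_inf} with error $\varepsilon_\mu$ yields a per-event failure probability of at most $\delta$ whenever $m_{min}\geq 2\log(2/\delta)/(\varepsilon_\mu^2(1-\theta)^2)$. For each event in $(\mathrm{ii})$, the assumption bounds both $m^+=I(\Psi_j=0,\Psi_i=1,\Psi_S=\psi^*_S)$ and $m^-=I(\Psi_j=0,\Psi_i=0,\Psi_S=\psi^*_S)$ from below by $m_{min}$, so \Cref{lm:error_bd_cond_inf} applied with parameter $\varepsilon_\nu/2$ (so that its conclusion produces a final slack of $\varepsilon_\nu$) gives a per-event failure probability of at most $2\delta$ whenever $m_{min}\geq 2\log(2/\delta)/((\varepsilon_\nu/2)^2(1-\theta)^2)$. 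Taking $\min\{\varepsilon_\nu/2,\varepsilon_\mu\}$ as the common denominator in the sample-size condition ensures both per-event bounds hold simultaneously.

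A union bound over the two families then gives $\mathbb{P}(\mathcal{A}^c)\leq n^2\delta+n^2\cdot 2^{n-2}\cdot 2\delta=\delta\, n^2(1+2^{n-1})$. Choosing $\delta=\zeta/(n^2(1+2^{n-1}))$ forces this to be at most $\zeta$, and substituting this $\delta$ into the sample-count requirement recovers exactly the stated $m_{min}$. The argument is not conceptually deep; the main obstacles are purely bookkeeping: one must (a) invoke \Cref{lm:error_bd_cond_inf} with $\varepsilon_\nu/2$ rather than $\varepsilon_\nu$ so that its conclusion produces the desired $\varepsilon_\nu$ gap, (b) absorb the extra factor of $2$ from that lemma into the event count via $2\cdot 2^{n-2}=2^{n-1}$, and (c) verify that the assumed fully conditioned sample bound automatically implies the weaker marginal sample bound required for the DI step.
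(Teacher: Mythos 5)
Your proposal is correct and follows essentially the same route as the paper's proof: the same two concentration lemmas, the same union bound over at most $n^2$ direct-influence events and $n^2\cdot 2^{n-2}$ conditional-influence events, the application of the CI lemma at scale $\varepsilon_\nu/2$, and the substitution $\delta = \zeta/(n^2(1+2^{n-1}))$. Your explicit note (c) that the fully conditioned count lower-bounds force the marginal count $I(\Psi_j=0,\Psi_i=1)$ to be at least $m_{min}$ is a detail the paper glosses over, so that is a welcome addition rather than a deviation.
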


\begin{proof}
    Under \Cref{as:stability_assumption}, assume that for all $i \neq j \in V, S \subseteq V \setminus \{i,j\}$, $\psi_i, \psi_j \in \{0,1\}, \text{ and the respective } \psi^*_S(i,j)$, $I(\Psi_i=\psi_i, \Psi_j=\psi_j, \Psi_S=\psi^*_S(i,j))\geq \frac{2}{(1-\theta)^2(\min\{\frac{\varepsilon_\nu}{2}, \varepsilon_\mu\})^2} \log \left(\frac{2n^2(1 +2^{n-1})}{\zeta}\right)$.
Then, we write using a union bound
\begin{align*}
    &\Pp{\mathcal{A}(\varepsilon_\nu, \varepsilon_\mu)}  \\
    &= 1- \Pp{\exists \{i,j\} \subset V, S \subset V \setminus \{i,j\} \text{ such that for }  \psi^*_S,  |\nu_{j|i,\psi^*_S} - \hat{\nu}_{j|i,\psi^*_S}| > \varepsilon_\nu\text{ or } |\mu_{j|i} - \hat{\mu}_{j|i}| > \varepsilon_\mu} \\
    &\geq 1 - \Pp{ \bigcup_{\substack{i,j \in V \\ S \subseteq V \setminus \{i,j\}}}  \left\{|\nu_{j|i,\psi^*_S} - \hat{\nu}_{j|i,\psi^*_S}| > \varepsilon_\nu \right\} \cup \bigcup_{\substack{i,j \in V}}  \left\{|\mu_{j|i} - \hat{\mu}_{j|i}| > \varepsilon_\mu \right\}}\\
    &\geq 1 - \sum_{\substack{i,j \in V \\ S \subseteq V \setminus \{i,j\}}}  \Pp{|\nu_{j|i,\psi^*_S} - \hat{\nu}_{j|i,\psi^*_S}| > \varepsilon_\nu} - \sum_{\substack{i,j \in V}}  \Pp{|\mu_{j|i} - \hat{\mu}_{j|i}| > \varepsilon_\mu}.
\end{align*}

The number of pairs $(i,j)$ in $V$ can be upper bounded by $n^2$ while the number all conditioning sets $S \subseteq V \setminus \{i,j\}$ is equal to $\sum_{k=1}^{n-2} \binom{n-2}{k} \leq 2^{n -2}$. Let $m = \min_{i,j \in V, S \subseteq V \setminus \{i,j\}}\{I(\Psi_i=\psi_i, \Psi_j=\psi_j, \Psi_S=\psi^*_S)\}$.
We obtain therefore from \cref{lm:error_bd_dir_inf} and \cref{lm:error_bd_cond_inf} that for $m \geq 2\log(2/\delta)/((1-\theta)^2(\min\{\varepsilon_\nu/2, \varepsilon_\mu\})^2)$

\begin{align*}
    \Pp{\mathcal{A}(\varepsilon_\nu, \varepsilon_\mu)} \geq 1 - n^22^{n-2}(2\delta) - n^2\delta.
\end{align*}

Now, since we assumed $m$ satisfies $m \geq \frac{2}{(1-\theta)^2(\min\{\frac{\varepsilon_\nu}{2}, \varepsilon_\mu\})^2} \log \left(\frac{2n^2(1 +2^{n-1})}{\zeta}\right)$, we can substitute $\zeta = \delta n^2(1 + 2^{n-1})$, yielding $\Pp{\mathcal{A}( \varepsilon_\nu, \varepsilon_\mu)} \geq 1 - \zeta$.
\end{proof}

\LearnAlgoB*

\begin{proof} Assume that for all $i \neq j \in V, S \subseteq V \setminus \{i,j\}$, $\psi_i, \psi_j \in \{0,1\}, \text{ and each respective } \psi^*_S(i,j)$ and under \Cref{as:stability_assumption}, 
$$
I\bigl(\Psi_i = \psi_i,\; \Psi_j = \psi_j,\; \Psi_S = \psi^*_S\bigr)
\geq \frac{2}{(1- \theta)^2(\min \{\varepsilon_\nu/2,\varepsilon_\mu\})^2}
\log \left(\frac{2n^2(1 +2^{n-1})}{\zeta}\right)
$$
and $\varepsilon_\nu < \infp(1- \infp)^{\Delta -1}/2$.  
Then, by \Cref{lm:A_bd}, $\Pp{\mathcal{A}( \varepsilon_\nu, \varepsilon_\mu)} \geq 1 - \zeta$. For the remainder of the proof we will assume that $\mathcal{A}(\varepsilon_\nu, \varepsilon_\mu)$ holds.
 
We will show that the algorithm learns the neighborhoods of all vertices correctly one after the other. As the outer for-loop (Line \ref{ln:outer_for_start}–\ref{ln:outer_for_end}) iterates over all vertices in $V$, it suffices to show that the algorithm correctly learns the neighborhood of an arbitrary vertex $j \in V$. To do so, we will show that after the inclusion for-loop (Line \ref{ln:inc_for_start}–\ref{ln:inc_for_end}), $S$ is a superset of the neighborhood of $j$, and that in the exclusion for-loop (Line \ref{ln:exc_for_start}–\ref{ln:exc_for_end}), all neighbors remain in $S$ while all non-neighbors get removed.

\paragraph{Inclusion Loop.}
By \cref{lm:dir_inf_lw_bd}, if $i$ is a neighbor of $j$, then $\mu_{j|i} \geq \infp$. Hence, conditioned on $\mathcal{A}$ being true, $\abs{\hat{\mu}_{j|i}- {\mu}_{j|i}} \leq \varepsilon_\mu$, and $\hat{\mu}_{j|i} \geq \infp - \varepsilon_\mu$. Therefore, after the inclusion loop, the set $S$ contains at least all neighbors of $j$, i.e., $S \supseteq \mathcal{N}(j)$.

\paragraph{Exclusion Loop.}
We first establish that no neighboring vertex of $j$ can be removed from $S$ in the exclusion for-loop and next show that all non-neighboring vertices of $j$ are removed from $S$.

Suppose, for the sake of contradiction, that at least one neighboring vertex of $j$ is removed in the exclusion for-loop, say $i$. As shown earlier, $S \supseteq \mathcal{N}(j)$, hence the conditions for \cref{cor:Sepehr_corollary} are satisfied. As we have also conditioned on $\mathcal{A}( \varepsilon_\nu, \varepsilon_\mu)$, \cref{cor:Sepehr_corollary}, which holds for any $\psi_S$, yields
$$
\hat{\nu}_{j|i, \psi^*_{S \setminus \{i\}}}
\geq \infp(1- \infp)^{\Delta -1} - \varepsilon_\nu.
$$
This inequality contradicts the exclusion criterion in Line \ref{ln:exclusion_cond} and thus $i$ cannot be removed from $S$. Hence, no neighboring vertex can be removed by the exclusion loop.

Finally, we know by \Cref{lm:cond_inf_markov} and $\mathcal{A}(\varepsilon_\nu, \varepsilon_\mu)$ that if $i \notin \mathcal{N}(j)$, then
$$
\hat{\nu}_{j|i, \psi^*_{S \setminus \{i\}}} \leq \varepsilon_\nu.
$$
We have assumed that $\varepsilon_\nu < \infp(1- \infp)^{\Delta -1}/2$. Hence,
$$
\hat{\nu}_{j|i, \psi^*_{S \setminus \{i\}}}
\leq \infp(1- \infp)^{\Delta -1} - \varepsilon_\nu,
$$
and so any non-neighboring vertex gets removed.

Thus, after the exclusion for-loop, we have $S = \mathcal{N}(j)$ and the algorithm correctly learns the neighborhood of $j$. Since the outer for-loop iterates over all vertices in $V$, conditioned on $\mathcal{A}$, which occurs with probability at least $1 - \zeta$, the algorithm learns the neighborhood of every vertex in $V$ correctly.
\end{proof}

\subsection{Vaccination Results}
\subsubsection{Extinction Time of Discrete SIS Process}
\label{app:ext_time}
In this section, we state and prove results relating the extinction time of a discrete SIS process to the spectral radius of the underlying graph. This result was first empirically established by \citet{wang_epidemic_2003}, and formally stated and proven for the discrete-time SIS model as defined in \cref{sec:SIS_model} by \citet{sis_ub_2016}.

\begin{lemma}\label{lem:num_infected}
    Given a discrete SIS model with initial infection probability $p_\text{init}$, infection probability $\infp$, recovery probability $\healp$, and spreading on graph $\G$ with $n$ vertices and adjacency matrix $\mathbf{A}$, we have that the expected number of infected vertices at time $t$, denoted by $\mathbb{E}[Z^t]$, is upper bounded by $\initp \left(1 - \healp + \infp \rho(\mathbf{A})\right)^tn$.
\end{lemma}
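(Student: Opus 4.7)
The plan is to control the growth of the expected infection probabilities $v_i^{(t)} := \mathbb{E}[Y_i^{(t)}]$ via a componentwise linear recursion and then sum them up. Summing yields $\mathbb{E}[Z^{(t)}] = \sum_i v_i^{(t)}$ and the initial condition is simply $v_i^{(0)} = \initp$ for all $i$.

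The key step is to derive the recursion
\begin{equation*}
v_i^{(t+1)} \;\leq\; (1-\healp)\, v_i^{(t)} \;+\; \infp \sum_{j\in \N(i)} v_j^{(t)}.
\end{equation*}
To see this, condition on $Y_i^{(t)}$ and use the SIS transition table: the $Y_i^{(t)}=1$ branch contributes exactly $(1-\healp)v_i^{(t)}$, while the $Y_i^{(t)}=0$ branch contributes
$\Pp{Y_i^{(t)}=0}\,\mathbb{E}\bigl[1 - \prod_{j\in \N(i)}(1-\infp Y_j^{(t)}) \,\big|\, Y_i^{(t)}=0\bigr]$.
For the latter I would first apply the elementary inequality $1-\prod_j(1-x_j)\leq \sum_j x_j$ for $x_j\in[0,1]$, and then use the identity
$\Pp{Y_i^{(t)}=0}\,\mathbb{E}[Y_j^{(t)}\mid Y_i^{(t)}=0] = \mathbb{E}[Y_j^{(t)} \1\{Y_i^{(t)}=0\}] \leq v_j^{(t)}$
to drop the conditioning. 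This is the only subtle step: the neighbors' states may be correlated with $Y_i^{(t)}$, and without this identity one cannot cleanly replace the conditional expectation by $v_j^{(t)}$.

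Collecting the inequalities across $i$, the recursion becomes the componentwise linear bound
\begin{equation*}
v^{(t+1)} \;\leq\; M\, v^{(t)}, \qquad M := (1-\healp)\mathbf{I} + \infp\, \mathbf{A},
\end{equation*}
where the inequality is coordinatewise. Since $M$ is entrywise nonnegative, iterating preserves the inequality, giving $v^{(t)} \leq M^t v^{(0)} = \initp\, M^t \mathbf{1}$.

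Finally, I would bound $\mathbb{E}[Z^{(t)}] = \mathbf{1}^\top v^{(t)} \leq \initp\, \mathbf{1}^\top M^t \mathbf{1}$. Using Cauchy--Schwarz, $\mathbf{1}^\top M^t \mathbf{1} \leq \|\mathbf{1}\|_2 \cdot \|M^t \mathbf{1}\|_2 \leq n\,\|M^t\|_2 \leq n\,\|M\|_2^t$. Because $M$ is symmetric and $\mathbf{A}$ is the adjacency matrix of a graph, Perron--Frobenius gives $\|M\|_2 = \lambda_{\max}(M) = 1-\healp + \infp\,\rho(\mathbf{A})$ (the shift by $(1-\healp)\mathbf{I}$ only translates the spectrum, and the top eigenvalue of a nonnegative symmetric matrix coincides with its spectral norm). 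Putting the pieces together yields the claimed bound $\mathbb{E}[Z^{(t)}] \leq \initp\,n\,(1-\healp + \infp\,\rho(\mathbf{A}))^t$.
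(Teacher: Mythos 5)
Your proof is correct and follows essentially the same route as the paper's: derive the componentwise linear recursion $v^{(t+1)} \leq M v^{(t)}$ with $M = (1-\healp)\mathbf{I} + \infp \mathbf{A}$ via the union-bound inequality $1-\prod_j(1-x_j)\leq\sum_j x_j$, iterate, and convert the $\ell_1$ sum to a spectral-norm bound using Cauchy--Schwarz and the symmetry of $M$. If anything, your treatment of the $Y_i^{(t)}=0$ branch via $\mathbb{E}\bigl[Y_j^{(t)}\1\{Y_i^{(t)}=0\}\bigr]\leq v_j^{(t)}$ is slightly more careful than the paper's, which writes the neighbor expectation without explicitly addressing its correlation with $Y_i^{(t)}$ before dropping the factor $(1-p_i^{(t)})\leq 1$.
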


\begin{proof}
    Denoting the infection probability of vertex $i$ at time $t+1$ by $p_i\tp \coloneqq \Pp{Y_i\tp = 1}$, we have
    \begin{align}
    p_i\tp = (1 - \healp) \cdot p_i\t + \mathbb{E}_{\{Y_j\}_{j \in \N(i)}}\left[1 - \prod_{j \in \N(i)}\left(1 - \infp \cdot Y_j\t\right)\right] \cdot \left(1 - p_i\t\right), \label{eq:marginal_exact}
    \end{align}
    where the first term of the sum is the probability of staying infected, and the second term is the probability of getting infected. We can then bound the product term by using the union bound to get
    \begin{align*}
        \E{1 - \prod_{j \in \N(i)}\left(1 - \infp \cdot Y_j\t\right)} &\leq \E{\sum_{j \in \N(i)} \infp \cdot Y_j\t} \\
        &= \sum_{j \in \N(i)} \infp \cdot \E{Y_j\t} \\
        &= \sum_{j \in \N(i)} \infp \cdot p_j\t.
    \end{align*}
    Substituting this into \Cref{eq:marginal_exact}, we get
    $$
    p_i\tp \leq (1 - \healp) \cdot p_i\t + \infp \cdot \sum_{j \in \N(i)} p_j\t \cdot (1 - p_i\t).
    $$
    Since $0 \leq (1 - p_i\t) \leq 1$, we further bound the expression by:
    $$
    p_i\tp \leq (1 - \healp) \cdot p_i\t + \infp \cdot \sum_{j \in \N(i)} p_j\t.
    $$
    Now, let $\mathbf{p}^t = [p_1^t, p_2^t, \ldots, p_n^t]^\top$ represent the vector of infection probabilities at time $t$. The above inequality can be written in matrix form as
    \begin{align*}
        \mathbf{p}^{t+1} &\leq (1 - \healp) \cdot \mathbf{p}^t + \infp \cdot \mathbf{A} \mathbf{p}^t \\
        &= \left[(1 - \healp) \mathbf{I} + \infp \mathbf{A}\right] \mathbf{p}^t \\
        &= \mathbf{M} \mathbf{p}^t,
    \end{align*}
    where $I$ is the $n \times n$ identity matrix and $\mathbf{M} = (1 - \healp) I + \infp \mathbf{A}$ is the transition matrix. Then, by recursively applying the inequality, we obtain
    $$
    \mathbf{p}^{t} \leq \mathbf{M}^{t} \mathbf{p}^0,
    $$
    where $\mathbf{p}^0$ is the initial infection probability vector, given by $\mathbf{p}^0 = [p_\text{init}, p_\text{init}, \ldots, p_\text{init}]^\top$.
    The spectral radius of matrix $\mathbf{M}$, denoted by $\rho(\mathbf{M})$, is given by:
    $$
    \rho(\mathbf{M}) = (1 - \healp) + \infp \rho(A),
    $$
    where $\rho(\mathbf{A})$ is the spectral radius of the adjacency matrix $\mathbf{A}$. Now, we can write the expected number of infected vertices at time $t$ as
    \begin{align*}
        \mathbb{E}[Z^t] &= \sum_{i=1}^n \mathbb{E}[Y_i^t]\\
        &= \sum_{i=1}^n p_i\t \\
        &= \| \mathbf{p}^t \|_1 \\
        &\leq \sqrt{n} \| \mathbf{p}^t \|_2,
    \end{align*}
    where the last inequality follows from the Cauchy-Schwarz inequality. Substituting the bound on $\mathbf{p}^t$ into the above expression, we obtain
    \begin{align*}
        \mathbb{E}[Z^t] &\leq \sqrt{n} \| \mathbf{M}^t \mathbf{p}^0 \|_2 \\
        &\leq \sqrt{n} \rho(\mathbf{M})^t \| \mathbf{p}^0 \|_2,
    \end{align*}
    where the last inequality follows from $M$ being a symmetric matrix. Simplifying further and substituting in the value of $\rho(\mathbf{M})$, we obtain $\mathbb{E}[Z^t] \leq \left(1 - \healp + \infp \rho(\mathbf{A})\right)^t n p_\text{init}$, as desired.
\end{proof}

\ThmSIS*

\begin{proof}
    We first define $t' \coloneqq \arginf_t \{\mathbb{E}[Z^{t}] \leq 1/n\}$ as the first round $t$ such that $\mathbb{E}[Z^{t}] \leq 1/n$. Then, using the upper bound from \Cref{lem:num_infected}, and assuming that $\rho(\mathbf{A}) < \healp / \infp$, we have that if $t' = \ceil{-\log(\initp n^2) / \log(\rho(\mathbf{M}))}$, then $\mathbb{E}[Z^{t'}] \leq \rho(\mathbf{M})^{t'} n p_\text{init} \leq 1/n$. Then, we can write the expected extinction time as
    \begin{align*}
        \mathbb{E}[\tau] &= \E{\sum_{t=1}^{\infty} \indic{Z^t \geq 1}} \\
        &= \sum_{t=1}^{\infty} \Pp {Z^t \geq 1} \\
        &= \sum_{t=1}^{t'} \Pp{Z^t \geq 1} + \sum_{t=t'+1}^{\infty} \Pp{Z^t \geq 1} \\
        &\leq t' + \sum_{t=t'+1}^{\infty} \Pp{Z^t \geq 1}.
    \end{align*}
    Now, we will show that the second term in the above expression goes to zero asymptotically and we will be done. To do so, we will apply Markov's inequality and \Cref{lem:num_infected} to obtain
    \begin{align*}
        \sum_{t=t'+1}^{\infty} \Pp{Z^t \geq 1} &\leq \sum_{t=t'+1}^{\infty} \mathbb{E}[Z^t] \\
        &\leq \sum_{t=t'+1}^{\infty} \initp \rho(\mathbf{M})^t n.
    \end{align*}
    Using the definition of $t'$, we have that
    \begin{align*}
        \sum_{t=t'+1}^{\infty} \Pp{Z^t \geq 1} &\leq \sum_{k=1}^{\infty} \initp \rho(\mathbf{M})^{t'+k} n\\
        &= \sum_{k=1}^{\infty} \left(\initp \rho(\mathbf{M})^{t'} n\right) \rho(\mathbf{M})^k\\
        &\leq \frac{1}{n} \sum_{k=1}^{\infty} \rho(\mathbf{M})^k \\
        &= \frac{1}{n} \frac{\rho(\mathbf{M})}{1 - \rho(\mathbf{M})},
    \end{align*}
    where in the last line we have used the fact that $\abs{\rho(\mathbf{M})} < 1$. Finally, we have that the expected extinction time is upper bounded by
    \begin{align*}
        \mathbb{E}[\tau] &\leq \ceil*{-\frac{\log(\initp n^2)} {\log(\rho(\mathbf{M}))}} + \frac{1}{n} \frac{\rho(\mathbf{M})}{1 - \rho(\mathbf{M})} \\
        &\backsim \mathcal{O}(\log n).
    \end{align*}
\end{proof}

\subsubsection{Optimality of \Cref{alg:dp_alg}}
\label{app:dp_opt_proof}
Here, we state and prove the optimality of \Cref{alg:dp_alg}.

\begin{theorem}
    Given a graph $\G$ and vaccination budget $K$, let $\lambda^* = \rho(\G[V \setminus R^*])$, where $R^*$ is the solution to the optimization problem given in \Cref{eq:opt_vac}, and let $\lambda_\varepsilon = \rho(\G[V \setminus R_\varepsilon])$, where $R_\varepsilon$ is the output of \cref{alg:dp_alg} with precision parameter $\varepsilon$. Then, $\abs{\lambda^* - \lambda_\varepsilon} \leq \varepsilon$.
\end{theorem}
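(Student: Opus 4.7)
The plan is to decouple the argument into two pieces: first show that the feasibility subroutine (\Cref{alg:dp_feas_check}) correctly decides, for any fixed $\lambda$, whether there exists a removal set of size at most $K$ with $\rho(\G[V \setminus R]) \leq \lambda$; then combine this with the binary search wrapper to bound the final error by $\varepsilon$.

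For the first piece, I would argue by structural induction on the nice tree decomposition $\Td$ traversed in post-order. The invariant to establish at every node $t$, every $S \subseteq X_t$, and every $c \in \{0,\ldots,K\}$ is
\begin{equation*}
    \DP[t,S,c] = \bigl\{R \subseteq V_t \colon |R| = c,\ R \cap S = \emptyset,\ \rho(\G[V_t \setminus R]) \leq \lambda\bigr\}.
\end{equation*}
The base case of a leaf is immediate since $V_t = X_t = \emptyset$. For an introduce node adding vertex $v$, I would partition feasible sets according to whether they contain $v$ (forcing $v \notin S$) or not, and match each case to the stated recurrence, using the explicit recomputation of $\rho$ on the enlarged set $V_t$. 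For a forget node, $V_t = V_{t'}$, so partitioning by whether the forgotten vertex is kept yields the claimed union. For a join node, any $R \subseteq V_t$ decomposes uniquely as $R = R_1 \cup R_2$ with $R_i \subseteq V_{t_i}$ and $R_1 \cap R_2 \subseteq X_t = X_{t_1} = X_{t_2}$ by the tree-decomposition properties, so parameterizing by $S$ captures every such decomposition while the explicit spectral-radius test filters inconsistent combinations. At the root $r$, $X_r = \emptyset$ and $V_r = V(\G)$, so $\DP[r,\emptyset,c]$ enumerates exactly the feasible removal sets of size $c$; the routine therefore returns $\mathsf{True}$ if and only if some $c^* \leq K$ admits such a set, equivalently if and only if $\lambda \geq \lambda^*$.

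For the second piece, I would maintain the binary search invariant $\mathsf{low} \leq \lambda^* \leq \mathsf{high}$. Initially $\mathsf{high} = \Delta$ is valid because $\emptyset$ is always a feasible removal set and $\rho(\G) \leq \Delta$, while $\mathsf{low} = 0 \leq \lambda^*$. Each iteration preserves the invariant thanks to the first piece: a feasible $\mathsf{mid}$ must satisfy $\mathsf{mid} \geq \lambda^*$, so $\mathsf{high} \gets \mathsf{mid}$ is safe, and an infeasible $\mathsf{mid}$ satisfies $\mathsf{mid} < \lambda^*$, so $\mathsf{low} \gets \mathsf{mid}$ is also safe. Whenever $\mathsf{high}$ is lowered, $R_\varepsilon$ is refreshed to a witness with $\rho(\G[V \setminus R_\varepsilon]) \leq \mathsf{mid}$. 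Upon termination, $\mathsf{high} - \mathsf{low} \leq \varepsilon$ forces $\mathsf{high} - \lambda^* \leq \varepsilon$; hence $\lambda_\varepsilon \leq \mathsf{high} \leq \lambda^* + \varepsilon$, while $\lambda_\varepsilon \geq \lambda^*$ by optimality of $R^*$, so $|\lambda^* - \lambda_\varepsilon| \leq \varepsilon$.

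The main obstacle I anticipate is the join-node step of the structural induction: one must verify both that every feasible $R \subseteq V_t$ is produced by some $(R_1, R_2)$ pair from the child tables, and that no infeasible combination survives. The first direction uses the crucial tree-decomposition property $V_{t_1} \cap V_{t_2} = X_t$, which confines the shared part of $R$ to $X_t$ and is tracked exactly by the $S$ parameter; the second direction requires the explicit recomputation of $\rho(\G[V_t \setminus R])$, since spectral radius is a global quantity that does not decompose along the tree. A minor boundary case arises when the binary search never encounters a feasible $\mathsf{mid}$ and $R_\varepsilon$ remains $\emptyset$, but in that regime $\mathsf{high} = \Delta$ throughout, $\lambda^* > \Delta - \varepsilon$, and $\lambda_\varepsilon = \rho(\G) \leq \Delta < \lambda^* + \varepsilon$, so the bound still holds.
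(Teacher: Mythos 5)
Your overall architecture---prove that \Cref{alg:dp_feas_check} decides feasibility of a given threshold $\lambda$ exactly, then wrap it in the binary-search invariant $\mathsf{low}\le\lambda^*\le\mathsf{high}$---matches the paper's, and your treatment of the binary search (including the degenerate case where no $\mathsf{mid}$ is ever feasible and $R_\varepsilon$ stays $\emptyset$) is correct and in fact more explicit than what the paper writes down.

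The gap is in the DP invariant you propose to carry through the structural induction. With the condition $R\cap S=\emptyset$, the claimed set equality is false at internal nodes, and the induction breaks at the introduce step: when $v\notin S$ the recurrence only generates sets of the form $R'\cup\{v\}$, so a removal set $R$ with $v\notin R$ and $v\notin S$ (perfectly admissible under ``$R\cap S=\emptyset$'') is never placed in $\DP[t,S,c]$. Concretely, already at the first introduce node above a leaf the algorithm yields $\DP[t,\emptyset,0]=\emptyset$, whereas your right-hand side is $\{\emptyset\}$. The correct semantics of the table---the one the paper's completeness proof tracks via $S_t:=X_t\setminus R$---is that $S$ records \emph{exactly} which bag vertices survive, i.e.
\begin{equation*}
  \DP[t,S,c]=\bigl\{R\subseteq V_t\colon |R|=c,\ R\cap X_t=X_t\setminus S,\ \rho(\G[V_t\setminus R])\le\lambda\bigr\}.
\end{equation*}
With this strengthened invariant all four node cases go through (your join-node analysis via $V_{t_1}\cap V_{t_2}=X_t$ is exactly right), and since $X_r=\emptyset$ at the root the two formulations coincide there, so your root-level conclusion and the binary-search half survive unchanged. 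The fix is therefore local, but as written the introduce-node step of your induction cannot be completed. Note that the paper sidesteps this issue by never claiming set equality: it proves the two containments separately (correctness: every table entry is feasible; completeness: the restriction $R\cap V_t$ of any feasible $R$ appears under the key $X_t\setminus R$), which is logically sufficient for the theorem.
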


\begin{proof}
    Proof follows from the correctness and completeness of \cref{alg:dp_feas_check}, stated and proven below.
\end{proof}

Throughout the rest of the subsection, let $\G$ be an undirected graph with nice tree decomposition $\Td~=~\left(\T,\left\{X_t\right\}_{t \in V(\T)}\right)$, $K \in \mathbb{Z}^+$ the vaccination budget, and $\lambda \in \mathbb{R}^+$ the desired spectral radius threshold. Assume that \Cref{alg:dp_feas_check} is run with input $\G$, $\Td$, $K$, and $\lambda$. 

\begin{claim}[Correctness of \Cref{alg:dp_feas_check}]
    If \Cref{alg:dp_feas_check} returns \texttt{True} and identifies a set $ R $ of vertices to vaccinate, then $\abs{R} \leq K$ and $\rho(\G[V \setminus R]) \leq \lambda$.
\end{claim}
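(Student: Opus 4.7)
The plan is to establish correctness by proving, via structural induction on the nodes of the nice tree decomposition $\Td$ processed in post-order, the following invariant: for every node $t \in V(\T)$, every $S \subseteq X_t$, every $c \in [0,K]$, and every $R \in \mathsf{DP}[t,S,c]$, the four conditions hold that $R \subseteq V_t$, $|R| = c$, $S \cap R = \emptyset$, and $\rho(\mathcal{G}[V_t \setminus R]) \leq \lambda$. Once the invariant is proved, the claim follows immediately: when the algorithm returns \texttt{True}, it outputs some $R \in \mathsf{DP}[r,\emptyset,c^*]$ at the root $r$ with $c^* \in [0,K]$, and since $X_r = \emptyset$ (hence $V_r = V(\G)$), the invariant gives $|R| = c^* \leq K$ and $\rho(\mathcal{G}[V \setminus R]) \leq \lambda$.

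For the base case (leaf node), $X_t = \emptyset$ and only $\mathsf{DP}[t,\emptyset,0] = \{\emptyset\}$ is populated, which trivially satisfies the invariant since $V_t = \emptyset$ and the spectral radius of the empty graph is $0 \leq \lambda$. For each of the three inductive cases, I would verify that the update rule preserves every part of the invariant. At an introduce node with $X_t = X_{t'} \cup \{v\}$ and $v \notin S$, each new $R = R' \cup \{v\}$ inherits disjointness from $S$ (since $v \notin S$ and $S \cap R' = \emptyset$ by the inductive hypothesis), has $|R| = c$, and satisfies $V_t \setminus R = V_{t'} \setminus R'$, so the spectral-radius bound passes through unchanged from the inductive hypothesis. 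For $v \in S$, the algorithm imports $R' \in \mathsf{DP}[t', S \setminus \{v\}, c]$ and then explicitly tests $\rho(\mathcal{G}[V_t \setminus R']) \leq \lambda$, so the bound holds by construction; disjointness $S \cap R' = \emptyset$ holds because $v \notin R'$ (by $|R'| = c$ and $R' \subseteq V_{t'}$ combined with the $\mathsf{DP}$ definition) and $(S \setminus \{v\}) \cap R' = \emptyset$ by induction.

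At a forget node with $X_t = X_{t'} \setminus \{w\}$ and $V_t = V_{t'}$, entries are copied from $\mathsf{DP}[t', S, c]$ and $\mathsf{DP}[t', S \cup \{w\}, c]$; in either case $S \cap R = \emptyset$ follows from the inductive disjointness (since $(S \cup \{w\}) \cap R = \emptyset$ implies $S \cap R = \emptyset$), and the remaining properties are inherited directly. At a join node with children $t_1, t_2$ and $X_t = X_{t_1} = X_{t_2}$, each candidate $R = R_1 \cup R_2$ is admitted into $\mathsf{DP}[t,S,c]$ only after explicitly checking $|R| = c \leq K$ and $\rho(\mathcal{G}[V_t \setminus R]) \leq \lambda$; disjointness $S \cap R = \emptyset$ follows from $S \cap R_1 = S \cap R_2 = \emptyset$, and containment $R \subseteq V_{t_1} \cup V_{t_2} = V_t$ follows from the inductive hypothesis on each child.

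I do not anticipate a serious obstacle for this direction, since correctness amounts to showing that the DP table is populated conservatively: every entry is either inherited from a child where the invariant already holds, or introduced under an explicit spectral-radius feasibility test inside the update rule. The only mild subtlety is tracking disjointness at introduce and forget nodes, where the conditioned set $S$ changes, but this is handled by routine set manipulation. (Completeness---that the DP explores every admissible removal set---would be the more delicate direction and is not required for this claim.)
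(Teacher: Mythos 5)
Your proposal is correct and follows essentially the same route as the paper: a bottom-up structural induction over the nice tree decomposition maintaining the invariant that every entry $R \in \mathsf{DP}[t,S,c]$ satisfies $|R|=c$, $S \cap R = \emptyset$, and $\rho(\mathcal{G}[V_t \setminus R]) \leq \lambda$, with the same case analysis over leaf, introduce, forget, and join nodes and the same read-off at the root. The only cosmetic difference is that you make the containment $R \subseteq V_t$ an explicit part of the invariant (the paper leaves it implicit in the definition of $\mathsf{DP}$), and in the introduce-node case with $v \in S$ the relevant reason $v \notin R'$ is simply that $R' \subseteq V_{t'}$ while $v \notin V_{t'}$, not the cardinality condition.
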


\begin{proof}
    Assume that \Cref{alg:dp_feas_check} returns \texttt{True} with a set $R$. We will prove the correctness of the algorithm by proving the correctness of $\DP[t,S,c]$ for each node type using induction i.e., we will prove that if $\DP[t,S,c] \neq \emptyset$, then all $R \in \DP[t,S,c]$ satisfy simultaneously (i) $\abs{R} = c$, (ii) $S \cap R = \emptyset$ and (iii) $\rho(\G[V_t \setminus R]) \leq \lambda$. 
    
    For the base case, notice that $\DP[t,S,c] = \emptyset$ for all $t \in V(\T)$, $S \subseteq X_t$, and $0 \leq c \leq K$, so the hypothesis is trivially satisfied. Then, for the induction step, notice that the algorithm iterates through the tree's nodes in a post-order and visits children before the parent. Thus, for each node $t$, we assume the correctness of its children's $\DP$ values (induction hypothesis). We will now prove the correctness of the node $t$, based on its type.
    
    \paragraph{Leaf node.} For a leaf node $t$, note that $V_t = \emptyset$, and the algorithm sets the only $R$ in $\DP[t,\emptyset,0]$ as $\emptyset$, which trivially satisfies (i), (ii), and (iii).

    \paragraph{Introduce node.} For an introduce node $t$ with child node $t'$, we have $X_t = X_{t'} \cup \{v\}$ for some $v \in V(\G)$. For all $S \subseteq X_t$ and $0 \leq c \leq K$,
    \begin{itemize}
        \item If $v \not\in S$, then the algorithm sets $\DP[t, S, c] = \{R' \cup \{v\} \colon R' \in \DP[t', S, c-1]\}$. Thus, for any $R' \cup\{v\} = R \in \DP[t, S, c]$, we have $\abs{R} = \abs{R'} + 1 = c$ and $S \cap R = (S \cap R') \cup (S \cap \{v\}) = \emptyset$, so (i) and (ii) are satisfied. For (iii), notice that $V_t \setminus \{v\} = V_{t'}$, so $\rho(\G[V_t \setminus R]) = \rho(\G[V_{t'} \setminus R']) \leq \lambda$ by the induction hypothesis.

        \item  If $v \in S$, then the algorithm sets $\DP[t, S, c] = \{R' \in \DP[t',S \setminus \{v\}, c] \colon \rho(G[V_t \setminus R']) \leq \lambda\}$. Therefore, (iii) is satisfied by definition, and (i) and (ii) are satisfied by the induction hypothesis ($\abs{R'} = c$ and $S \cap R' = \emptyset$). 
    \end{itemize}

    \paragraph{Forget node.} For a forget node $t$ with child node $t'$, we have $X_t = X_{t'} \setminus \{w\}$ for some $w \in V(\G)$. Thus, $V_t = V_{t'}$. For all $S \subseteq X_t$ and $0 \leq c \leq K$, the algorithm sets $\DP[t,S,c] = \DP[t',S,c] \cup \DP[t', S \cup \{w\}, c]$. Let $R \in \DP[t,S,c]$.

    \begin{itemize}
        \item If $R \in \DP[t',S,c]$, then $\abs{R} = c$, $S \cap R = \emptyset$, and $\rho(\G[V_t \setminus R]) = \rho(\G[V_{t'} \setminus R]) \leq \lambda$ by the induction hypothesis.
        \item If $R \in \DP[t',S \cup \{w\},c]$, then $\abs{R} = c$ and $\rho(\G[V_t \setminus R]) \leq \lambda$ by the induction hypothesis, so (i) and (iii) are satisfied. For (ii), notice that $(S \cup \{w\}) \cap R = \emptyset$, by the induction hypothesis, implying $S \cap R = \emptyset$. 
    \end{itemize}

    \paragraph{Join node.} For a join node $t$ with child nodes $t_1$ and $t_2$, we have $X_t = X_{t_1} = X_{t_2}$. For all $S \subseteq X_t$ and $0 \leq c \leq K$, the algorithm sets
    \begin{align*}
        &\DP[t, S, c] = \\
        &\left\{ R_1 \cup R_2 \;\bigg|\;0 \leq c_1, c_2 \leq K,\ R_1 \in \DP[t_1, S, c_1],\ R_2 \in \DP[t_2, S, c_2],\ |R_1 \cup R_2| = c,\ \rho(G[V_t \setminus (R_1 \cup R_2)]) \leq \lambda \right\}.
    \end{align*}
    Conditions (i) and (iii) are satisfied by definition. Condition (ii) is also satisfied as $S \cap R_1 = S \cap R_2 = \emptyset$ by the induction hypothesis.

    We have shown that assuming the correctness a node $t$'s children's $\DP$ values, the correctness of $t$'s $\DP$ values follows. This, combined with the base case, establishes the correctness of the algorithm through induction.
\end{proof}

\begin{claim}[Completeness of \Cref{alg:dp_feas_check}]
    If there exists a set $ R' \subseteq V(\G) $ with $ |R'| \leq K $ such that $ \rho(\G[V(\G) \setminus R']) \leq \lambda $, then \Cref{alg:dp_feas_check} will return \texttt{True} along with a set $R \subseteq V$ with $\abs{R} \leq K$ such that $ \rho(\G[V(\G) \setminus R]) \leq \lambda $.
\end{claim}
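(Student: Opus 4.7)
The plan is to establish completeness by proving, through induction on the nodes of the nice tree decomposition $\Td$ in post-order, a converse of the correctness invariant: for every node $t \in V(\T)$ and every $R \subseteq V_t$ with $|R| \leq K$ and $\rho(\G[V_t \setminus R]) \leq \lambda$, the set $R$ belongs to $\DP[t, X_t \setminus R, |R|]$. Applied to the root $r$, where $X_r = \emptyset$ and $V_r = V(\G)$, the hypothesized feasible set $R'$ lies in $\DP[r, \emptyset, |R'|]$, so $\DP[r, \emptyset, c] \neq \emptyset$ for some $c \leq K$, and \Cref{alg:dp_feas_check} therefore returns \texttt{True} together with a valid vaccination set.

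The workhorse of the induction is the Cauchy interlacing theorem for symmetric matrices, which implies that for any $W \subseteq U \subseteq V(\G)$ one has $\rho(\G[W]) \leq \rho(\G[U])$. This lets me restrict any globally feasible removal set to an arbitrary $V_t$ and preserve the local spectral radius bound required by the DP table, which is essential because the DP only stores sets meeting the threshold on the induced subgraph of $V_t$, not on $V(\G)$.

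For the node-by-node analysis, leaf nodes are immediate since $V_t = \emptyset$ forces $R = \emptyset \in \DP[t,\emptyset,0]$. For an introduce node $t$ with $X_t = X_{t'} \cup \{v\}$, I would split on whether $v \in R$: if yes, set $R' = R \setminus \{v\}$ and observe $V_t \setminus R = V_{t'} \setminus R'$, so the IH gives $R' \in \DP[t', X_{t'} \setminus R', |R|-1]$ and the algorithm re-adjoins $v$; if $v \notin R$, interlacing yields $\rho(\G[V_{t'} \setminus R]) \leq \lambda$, the IH places $R$ in $\DP[t', X_{t'} \setminus R, |R|]$, and the algorithm's explicit recheck of $\rho(\G[V_t \setminus R]) \leq \lambda$ (which holds by assumption) admits it into $\DP[t, X_t \setminus R, |R|]$. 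Forget nodes are the easiest since $V_t = V_{t'}$, and the two possible values of $X_{t'} \setminus R$ (depending on whether the forgotten $w$ lies in $R$) are exactly the two child entries that the algorithm unions.

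The main obstacle is the join node. Here $X_t = X_{t_1} = X_{t_2}$, and reconstructing $R$ from the children's entries relies on the standard tree-decomposition identity $V_{t_1} \cap V_{t_2} = X_t$. Given this, I would set $R_i = R \cap V_{t_i}$, note $R_1 \cup R_2 = R$ with $R_1 \cap R_2 = R \cap X_t$, and verify $X_t \setminus R_1 = X_t \setminus R_2 = X_t \setminus R =: S$, so both child calls use the same $S$. Interlacing then gives $\rho(\G[V_{t_i} \setminus R_i]) \leq \lambda$, the IH supplies $R_i \in \DP[t_i, S, |R_i|]$, and the algorithm enumerates exactly such pairs, admitting $R_1 \cup R_2 = R$ into $\DP[t, S, |R|]$ because both $\rho(\G[V_t \setminus R]) \leq \lambda$ and $|R| \leq K$ hold by hypothesis. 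Closing the induction yields the claim.
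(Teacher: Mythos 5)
Your proposal is correct and follows essentially the same route as the paper's proof: a bottom-up induction over the nice tree decomposition showing that the (restriction of the) feasible set is recorded in $\DP[t, X_t \setminus R, \cdot]$, with the same case analysis over leaf, introduce, forget, and join nodes. The only substantive refinements are that you prove the slightly stronger invariant that \emph{every} locally feasible $R \subseteq V_t$ is recorded (rather than only the restrictions $R \cap V_t$ of one fixed global solution), and that you explicitly justify the local bounds $\rho(\G[V_{t}\setminus R_{t}]) \leq \lambda$ via Cauchy interlacing (monotonicity of the spectral radius under induced subgraphs) together with $V_{t_1} \cap V_{t_2} = X_t$ at join nodes---a step the paper's proof uses implicitly when it asserts these checks ``hold by assumption.''
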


\begin{proof}
    Let $R \subseteq V(\G)$ be any set of vertices with $\abs{R} \le K$ for which $\rho\bigl(\G[V(\G)\setminus R]\bigr) \le \lambda$. We show by (bottom-up) induction on the nodes of the tree decomposition $\Td$ that $R$ is recorded in the table $\DP[\cdot,\cdot,\cdot]$. In particular, we will prove that for every node $t \in V(\T)$, 
    \[
       R_t \;:=\; R \cap V_t 
       \quad\text{belongs to}\quad 
       \DP\bigl[t,\; X_t \cap (V(\G) \setminus R),\; \abs{R_t}\bigr].
    \]
    Since $X_t \cap R = \emptyset$ is equivalent to $X_t \cap (V(\G)\setminus R) = X_t$, we define the `preserved set' $S_t := X_t \setminus R$ so that $R_t$ never intersects $S_t$. As before, $V_t$ denotes the union of all bags in the subtree of $\T$ rooted at $t$.

    \paragraph{Base Case (Leaf node).}
    If $t$ is a leaf node, then $X_t = \emptyset$ and $V_t = \emptyset$. Thus $R_t = R \cap \emptyset = \emptyset$. The algorithm sets $\DP[t, \emptyset, 0] = \{\emptyset\}$ (and $\DP[t,S,c] = \emptyset$ otherwise). Clearly, $R_t=\emptyset$ is indeed recorded in $\DP[t, \emptyset, 0]$. 
    
    \paragraph{Induction Hypothesis.}
    Assume that for every child $t'$ of $t$, $R_{t'} \in \DP[t',\, S_{t'},\, \abs{R_{t'}}]$ where $S_{t'} = X_{t'}\setminus R$. We now prove that $R_t \in \DP[t,S_t,\abs{R_t}]$.

    \paragraph{Introduce node.}
    Suppose $t$ is an introduce node with child $t'$, and let $v$ be the vertex introduced at $t$, so $X_t = X_{t'} \cup \{v\}$. There are two cases:
    \begin{itemize}
        \item \textbf{Case 1:} $v \in R$. Then $R_t = R_{t'} \cup \{v\}$ and $S_t = S_{t'}$ (since $v \notin S_t$). By induction, we know $R_{t'} \in \DP[t',\, S_{t'},\, \abs{R_{t'}}]$. Because $v \in R_t$, the DP update rule places $R_t$ into $\DP[t,\, S_t,\, \abs{R_t}]$ as $R_{t'} \cup \{v\}$.
        \item \textbf{Case 2:} $v \notin R$. Then $R_t = R_{t'}$ and $S_t = S_{t'} \cup \{v\}$. By induction, $R_{t'} \in \DP[t',\, S_{t'} \cup \{v\},\, \abs{R_{t'} }]$. The DP update rule copies $R_{t'}$ directly into $\DP[t,\, S_t,\, \abs{R_t}]$ (and also checks $\rho(\G[V_t\setminus R_t]) \le \lambda$, which holds by assumption).
    \end{itemize}

    \paragraph{Forget node.}
    Suppose $t$ is a forget node with child $t'$, and let $w$ be the vertex forgotten at $t$, so $X_t = X_{t'} \setminus \{w\}$ and $V_t = V_{t'}$. We again have two cases:
    \begin{itemize}
        \item \textbf{Case 1:} $w \in R$. Then $R_t = R_{t'}$ and $S_{t} = S_{t'} \setminus \{w\}$. By induction, $R_{t'} \in \DP[t',\, S_{t'},\, \abs{R_{t'}}]$. The DP update rule ensures $R_{t'}$ appears in $\DP[t,\, S_t,\, \abs{R_t}]$.
        \item \textbf{Case 2:} $w \notin R$. Then $R_t = R_{t'}$ and $S_t = S_{t'}$. By induction, $R_{t'} \in \DP[t',\, S_{t'},\, \abs{R_{t'}}]$. Again, $R_{t'}$ is copied into $\DP[t,\, S_t,\, \abs{R_t}]$.
    \end{itemize}

    \paragraph{Join node.}
    Suppose $t$ is a join node with two children $t_1$ and $t_2$, where $X_t = X_{t_1} = X_{t_2}$. Observe that $R_{t} = R_{t_1} \cup R_{t_2}$ and $S_{t} = S_{t_1} = S_{t_2}$. By the induction hypothesis, 
    \[
       R_{t_1} \in \DP[t_1,\, S_{t_1},\, \abs{R_{t_1}}]
       \quad\text{and}\quad
       R_{t_2} \in \DP[t_2,\, S_{t_2},\, \abs{R_{t_2}}].
    \]
    The join node's DP rule unites these sets whenever they align on the same $S_{t}$ and total size $\abs{R_t} = \abs{R_{t_1}} + \abs{R_{t_2}} \le K$. By our feasibility assumption, $\rho(\G[V_t\setminus (R_{t_1}\cup R_{t_2})]) \le \lambda$, so $R_t$ is placed in $\DP[t,\, S_t,\, \abs{R_t}]$.

    We have shown that whenever the children of a node $t$ have valid DP entries, the node $t$ will also have a valid DP entry. Thus, by induction, the solution $R'$ is recorded in the root node's DP table, and the algorithm returns \texttt{True}. More specifically, at the root node $r$ we have $R \in \DP[r,\, \emptyset,\, c]$ for some $ 0\leq c \leq K$. Therefore, $\DP[r,\emptyset,c] \neq \emptyset$, which guarantees \Cref{alg:dp_feas_check} detects a feasible solution.
\end{proof}

\subsubsection{Time Complexity of \Cref{alg:dp_alg}}
\label{app:dp_time}

\DPTimeComplexity*

\begin{proof}
    The complexity arises from the binary search, building the DP table, node and spectral radius computation, each of which we analyze below.

    \paragraph{Binary search.} The binary search is done on $[0, \Delta]$ and runs until $\mathsf{high} - \mathsf{low} > \varepsilon$, thus taking $\mathcal{O}(\log(\Delta/\varepsilon))$ time.

    \paragraph{DP table.} The DP table is indexed by $t \in V(\T)$, $S \subseteq X_t$, and $c \in \{0,1,\ldots,K\}$. The number of nodes in the tree decomposition is upper bounded by $\O(\omega n)$ by \Cref{lem:nice_tree_decomp}. Then, as size of each bag $X_t$ is at most $\omega + 1$ by the definition of treewidth (see \Cref{app:tree_decomp}), we have that the total number of subsets is upper bounded by $\O(2^{\omega+1})$. Thus, the total number of entries in the DP table is $\O(\omega n 2^{\omega+1} K)$. 

    Note that although the DP table is described in \Cref{sec:vac_strat_dp} as storing all feasible solutions per $(t, S, c)$ triplet, this is not necessary. We only need to retain a representative solution for each state, and in particular only before a join node is reached in the post-order. In other words, once a join node has been processed, it suffices to only keep one solution per triplet. This is how we avoid a $\binom{n}{K}$ factor in the complexity. We still need to perform a search over $K^2$ many $c$ values at each node join node, but this only adds a $\O(K^2)$ factor to the complexity.

    \paragraph{Spectral radius.} The spectral radius computation complexity is at most $\mathcal{O}(n^2)$ using the Lanczos algorithm for sparse graphs with $\mathcal{O}(n)$ nonzero entries \cite{Lanczos_alg}.

    Combining the above, we have that the total time complexity is $\mathcal{O}\left(n^3 K^3 \omega 2^{\omega + 1} \log(\Delta / \varepsilon)\right)$.
\end{proof}

\subsubsection{Optimality of \Cref{alg:tree-vacc}}
\label{app:vac_opt}
Here, we prove the optimality of \Cref{alg:tree-vacc}, which follows directly from the correctness and completeness of \Cref{alg:feasibility-check}, proven below.

Throughout the rest of this subsection, let $\T = (V,E)$ be an undirected tree, $K \in \mathbb{Z}^+$ the vaccination budget, and $\lambda \in \mathbb{R}^+$ the desired spectral radius threshold. Assume that \Cref{alg:feasibility-check} is run with input $\T'$, $K$, and $\lambda$. Without loss of generality, order the vertices in the tree $\T$ by picking a random root and ordering the other vertices accordingly.

Lastly, define $\mathcal{C}(\T)$ as the set of connected components of $\T$, where $\T$ is a forest, and define a \textit{minimal offending subtree} as a connected subtree $S \subseteq V $ where $ \rho(\T[S]) > \lambda $ and for any proper connected subtree $ S' \subset S $, $ \rho(\T[S']) \leq \lambda $.

\begin{claim}[Correctness]
If \Cref{alg:feasibility-check} returns \texttt{True} and identifies a set $ R $ of vertices to remove, then $\abs{R} \leq k$ and after removing $ R $ from the tree $ \T $, $\rho(\T[V \setminus R]) \leq \lambda$.
\end{claim}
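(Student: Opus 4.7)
The plan is to verify the two conclusions separately. The bound $|R| \leq K$ is immediate: Algorithm \ref{alg:feasibility-check} only returns \texttt{True} after explicitly testing $|R| \leq K$, so the claim on the size of $R$ is built directly into the algorithm's output path. All of the substantive work lies in showing $\rho(\T[V \setminus R]) \leq \lambda$, and since $\T[V\setminus R]$ is a forest whose spectral radius equals the maximum spectral radius over its connected components, it suffices to show that every connected component of $\T[V\setminus R]$ has spectral radius at most $\lambda$.

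I would prove this by induction on the DFS post-order. For each vertex $u$, let $D(u)$ denote $u$ together with its descendants in the rooted orientation. The invariant I will maintain is: after the DFS call at $u$ returns, every connected component of $\T[D(u)\setminus R]$ has spectral radius at most $\lambda$; moreover, if $u\notin R$ then the returned value is exactly the connected component of $\T[D(u)\setminus R]$ that contains $u$, and if $u \in R$ then the returned value is $\emptyset$. The base case of a leaf is immediate since $\rho(\T[\{u\}])=0\leq\lambda$ and no removal occurs.

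For the inductive step at $u$ with children $v_1,\dots,v_d$, apply the hypothesis to each child call. The accumulated set $S$ then equals the union $\bigcup_i \mathrm{ret}(v_i)$, which, together with $u$, is precisely the connected component of $u$ in $\T[D(u)\setminus(R\cup\{u\})^c\cup\{u\}]$ (here reading the algorithm's check as $\rho(\T[S\cup\{u\}])$, which is the only sensible interpretation, since removing $u$ is irrelevant if $u\notin S$). The explicit spectral-radius test then has two outcomes. If it passes, $u$ is retained and the returned set encodes the $u$-containing component, whose spectral radius is bounded by $\lambda$ by the test itself; other components of $\T[D(u)\setminus R]$ sit inside some $D(v_i)$ and are already certified by the inductive hypothesis. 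If the test fails, $u$ is added to $R$, which splits what would have been the $u$-containing component into exactly the sibling sets $\mathrm{ret}(v_1),\dots,\mathrm{ret}(v_d)$; each of these is already a component of $\T[D(v_i)\setminus R]$ and thus satisfies the spectral-radius bound by hypothesis.

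Applying the invariant at the root gives that every component of $\T[V\setminus R]$ has spectral radius at most $\lambda$, yielding the conclusion. The main subtlety I expect to handle carefully is pinning down that $S$ returned from a child really is the entire connected component containing that child (not a proper subset), which requires observing that whenever the algorithm crosses an edge $(u,v)$ and does \emph{not} remove $v$, the two resulting pieces remain genuinely joined; and symmetrically, when $v$ is removed the subtree at $v$ is disconnected from $u$'s component. Once this connectivity bookkeeping is in place, the inductive step reduces to invoking the single spectral-radius inequality tested by the algorithm.
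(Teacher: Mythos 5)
Your proof is correct and follows essentially the same route as the paper's: the budget bound comes directly from the explicit check before returning \texttt{True}, and the spectral-radius bound follows from a post-order induction certifying each connected component of $\T[V\setminus R]$ together with the identity $\rho(\T[V\setminus R]) = \max_{S\in\mathcal{C}(\T[V\setminus R])}\rho(\T[S])$. Your version is somewhat more rigorous than the paper's (which argues the traversal informally), and you correctly flag that the pseudocode's test must be read as $\rho(\T[S\cup\{u\}])$ for the algorithm to be meaningful — an implicit assumption the paper's own proof also makes.
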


\begin{proof}
Assume that \Cref{alg:feasibility-check} returns \texttt{True} with a set $R$. First, observe that $\abs{R} \leq k$, due to the if clause in line 6. Then, notice that the algorithm performs a post-order depth-first traversal of the tree. At each vertex $ u $, it attempts to merge all child subtrees with $ u $. 

If the spectral radius $ \rho(\T[S]) \leq \lambda $, the subtree remains connected, and no cut is made. Otherwise, the algorithm adds $u$ to $R$, effectively removing $ u $ from its parent by not including $ u $ in the merged subtree, ensuring that the offending subtree rooted at $u$ does not propagate upwards.

Since the traversal is post-order, all descendants of any vertex $ u $ have been processed before $ u $ itself. Therefore, when an offending subtree is detected at $ u $, none of its descendants have subtrees with spectral radius larger than $\lambda$. Consequently, when the final vertex in the post-order is reached, all subtrees rooted at all other vertices satisfy the spectral radius threshold.

Thus, after \Cref{alg:feasibility-check} terminates, we have that for all connected components (i.e., subtrees) after removal of vertices in $R$, $S \in \mathcal{C}(\T[V \setminus R])$, $\rho(\T[S]) \leq \lambda$. Then, the desired claim follows from applying the equality relating the spectral radius of a graph to the maximum of the spectral radii of its connected components: $\rho(\T[V \setminus R]) = \max_{S \in \mathcal{C}(\T[V \setminus R])}\rho(\T[S])$ \cite{van_mieghem_spec_rad_2011}.
\end{proof}

\begin{claim}[Completeness]
If there exists a set $ R' \subseteq V $ with $ |R'| \leq K $ such that $ \rho(\T[V \setminus R']) \leq \lambda $, then \Cref{alg:feasibility-check} will return \texttt{True} along with a set $R \subseteq V$ with $\abs{R} \leq K$ such that $ \rho(\T[V \setminus R]) \leq \lambda $.
\end{claim}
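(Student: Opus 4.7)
The plan is to show, under the hypothesized existence of a feasible $R'$, that the algorithm's output set $R$ satisfies $|R| \leq |R'| \leq K$; by the correctness claim already established, $R$ also fulfills $\rho(\T[V \setminus R]) \leq \lambda$, so the $|R| \leq K$ test succeeds and \Cref{alg:feasibility-check} returns \textbf{True} together with a valid $R$. The heart of the argument is a charging scheme that matches each removal performed by the algorithm to a distinct vertex of any feasible $R'$.

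For each $u \in R$, I would associate a \emph{certificate subtree} $T_u$, defined as the connected subtree made up of $u$ together with the non-empty returns of the DFS calls on $u$'s children, at the moment $u$ is placed into $R$. The triggering condition for the removal gives $\rho(\T[T_u]) > \lambda$. I would then establish two properties of the family $\{T_u\}_{u \in R}$. First, \emph{vertex-disjointness}: because DFS runs in post-order and returns $\emptyset$ whenever it removes a vertex, no vertex of a previously-constructed $T_{u'}$ (with $u'$ a proper descendant of $u$ in $R$) can re-enter the union of child returns accumulated at $u$; and if two removed vertices lie in incomparable branches, their certificates are trivially disjoint. Second, \emph{hittability}: every feasible set $R''$ must intersect each $T_u$, because otherwise $T_u \subseteq V \setminus R''$ would yield $\rho(\T[V \setminus R'']) \geq \rho(\T[T_u]) > \lambda$ by monotonicity of the spectral radius under taking induced subgraphs of an undirected graph, contradicting the feasibility of $R''$.

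Combining disjointness with the hitting property immediately forces $|R'| \geq \sum_{u \in R} |R' \cap T_u| \geq |R|$, so $|R| \leq K$, and the proof is complete. The main obstacle I anticipate is making the recursive bookkeeping rigorous: specifically, formalizing the DFS invariant that whenever a vertex is kept, the set propagated upward is exactly the connected induced subgraph of kept vertices rooted at that vertex, and whenever a vertex is removed, the returned $\emptyset$ genuinely quarantines its entire certificate from all future accumulations. Once this invariant is proven by induction on the post-order, the disjointness statement becomes a direct reading of the recursion, and the hittability statement reduces to the standard monotonicity $\rho(\T[A]) \leq \rho(\T[B])$ for $A \subseteq B$.
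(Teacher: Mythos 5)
Your proposal is correct and follows essentially the same route as the paper's proof: the paper also charges each removal to a distinct offending subtree (which it calls a minimal offending subtree), observes that any feasible solution must hit each one, and concludes $|R|\leq|R'|\leq K$, merely phrasing it as a contradiction rather than a direct count. Your version is somewhat more explicit about the two load-bearing facts — disjointness of the certificates via the DFS returning $\emptyset$ after a removal, and hittability via monotonicity of the spectral radius under induced subgraphs — both of which the paper leaves largely implicit.
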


\begin{proof}
Assume, for contradiction, that there exists a feasible solution $R' $ with $ |R'| \leq K $ such that $ \rho(\T[V \setminus R']) \leq \lambda $, but \Cref{alg:feasibility-check} returns \texttt{False}, implying $\abs{R} > K $, where $R$ is the set constructed by \Cref{alg:feasibility-check}.

When the algorithm processes the tree in post-order, it identifies and removes the root of each minimal offending subtree. Moreover, following each removal of a subtree root, that subtree is never visited again by the algorithm. Thus, each element of $R$ corresponds to removing exactly one vertex from a distinct minimal offending subtree.

Now if $ \abs{R} > K $, this implies that there are more than $K$ distinct minimal offending subtrees in $ \T $. However, since each such subtree requires at least one vertex removal and $ |R'| \leq K $, it is impossible to cover all offending subtrees with $ K $ removals. This is a contradiction and thus \Cref{alg:feasibility-check} must return \texttt{True}, establishing the optimality of the algorithm.
\end{proof}

\end{document}